
\typeout{IJCAI--24 Instructions for Authors}


\documentclass{article}
\pdfpagewidth=8.5in
\pdfpageheight=11in

\usepackage{ijcai24}
\usepackage[toc,page,header]{appendix}
\usepackage{cite}
\usepackage{mathdots}
\usepackage[dvipsnames]{xcolor}
\usepackage{color}
\usepackage{times}
\usepackage{soul}
\usepackage{url}
\usepackage[hidelinks]{hyperref}
\usepackage[utf8]{inputenc}
\usepackage[small]{caption}
\usepackage{graphicx}
\usepackage{amsmath}
\usepackage{amsthm}
\usepackage{amsmath,amsfonts}
\usepackage{booktabs}
\usepackage[switch]{lineno}

\usepackage[normalsize]{subfigure}
\usepackage{wrapfig}
\usepackage{algorithmic}
\usepackage[ruled, vlined, commentsnumbered, linesnumbered]{algorithm2e}

\newtheorem{property}{\bf Property}[section]
\newtheorem{theorem}{\bf{Theorem}}[section]

\newtheorem{proposition}[theorem]{\bf{Proposition}}

\newtheorem{definition}[theorem]{\bf{Definition}}
\newcommand{\gr}{\color{ForestGreen}}
\newcommand{\dcheck}{\color{black}}

\newcommand{\DEL}[1]{\iffalse #1 \fi}
\newcommand{\rd}{\color{red}}
\newcommand{\squishlist}{
\begin{list}{$\bullet$}
  { \setlength{\itemsep}{0pt}
     \setlength{\parsep}{0pt}
     \setlength{\topsep}{0pt}
     \setlength{\partopsep}{0pt}
     \setlength{\leftmargin}{0em}
     \setlength{\labelwidth}{0em}
     \setlength{\labelsep}{0.2em} } }


\urlstyle{same}






\pdfinfo{
/TemplateVersion (IJCAI.2023.0)
}

\title{Enhancing Scalability of Metric Differential Privacy via Secret Dataset Partitioning and Benders Decomposition}
\vspace{-0.0in}


\author{
    Chenxi Qiu
    \affiliations
    Department of Computer Science and Engineering, University of North Texas
    \emails
    chenxi.qiu@unt.edu
}

\begin{document}

\maketitle

\begin{abstract}
\emph{Metric Differential Privacy (mDP)} extends the concept of Differential Privacy (DP) to serve as a new paradigm of data perturbation. It is designed to protect secret data represented in general metric space, such as text data encoded as word embeddings or geo-location data on the road network or grid maps. To derive an optimal data perturbation mechanism under mDP, a widely used method is \emph{linear programming (LP)}, which, however, might suffer from a polynomial explosion of decision variables, rendering it impractical in large-scale mDP.


In this paper, our objective is to develop a new computation framework to enhance the scalability of the LP-based mDP. Considering the connections established by the mDP constraints among the secret records, we partition the original secret dataset into various subsets. Building upon the partition, we reformulate the LP problem for mDP and solve it via \emph{Benders Decomposition}, which is composed of two stages: (1) a master program to manage the perturbation calculation across subsets and (2) a set of subproblems, each managing the perturbation derivation within a subset. Our experimental results on multiple datasets, including geo-location data in the road network/grid maps, text data, and synthetic data, underscore our proposed mechanism's superior scalability and efficiency.

\end{abstract}

\vspace{-0.20in}
\section{Introduction}
\vspace{-0.05in}

Among the array of data privacy protection mechanisms, \emph{data perturbation} has emerged as a widely employed technique to protect individuals' information. Data perturbation involves intentionally introducing noise into a database, rendering individual information unreadable by unauthorized users even in the event of data breaches on the server side. Particularly, \emph{Differential Privacy (DP)} \cite{Dwork-TC2006} has become a paradigm of choice for data perturbation 
due to its strong and theoretically provable privacy guarantees. 

In its original definition, DP requires data perturbation to maintain a uniform \emph{indistinguishability} level for the queries of the \emph{neighboring databases} \cite{Dwork-TC2006}. Specifically, the classification of ``neighboring databases'' relies on their \emph{Hamming distance}, i.e., two databases are considered ``neighbors'' if they differ by at most one record. However, this definition limits DP's applicability in the domains where data similarity is evaluated by other distance metrics, and also varying degrees of indistinguishability are necessitated by nonbinary distance values among neighbors. 
Recognizing these constraints, DP has been extended to \emph{metric DP (mDP)} \cite{ImolaUAI2022}, which generalizes the classification of ``neighboring databases'' by accounting for the diverse underlying distance metric spaces.


\vspace{0.02in}
\noindent \textbf{Related Work of mDP}. mDP originated in the domain of geo-location privacy protection \cite{Andres-CCS2013}, requiring ``\emph{geo-indistinguishability}'' for each pair of locations with the Euclidean distance lower than a predetermined threshold. In other words, mDP defines ``neighboring locations'' based on Euclidean distance, diverging from the original DP which relies on Hamming distance. Over time, mDP has been explored across a spectrum of metric choices, including 
Manhattan distance \cite{Chatzikokolakis-PETS2013}, Hyperbolic distance \cite{Feyisetan-ICDM2019}, Haversine distance \cite{Pappachan-EDBT2023}, Word Mover’s distance \cite{Fernandes-PST2019},  and others \cite{Feyisetan-2021-private}.


Compared to the conventional DP, the optimization of mDP poses additional challenges due to the diverse sensitivity of utility loss to data perturbation in general distance metric spaces  \cite{Qiu-TMC2022}. A commonly employed approach is to discretize both the secret dataset and the corresponding perturbed data domain into finite sets \cite{ImolaUAI2022}, which allows for explicit measurement of the utility loss caused by each perturbation choice. As such, the problem of optimizing the perturbation distribution of each secret record can be formulated as a \emph{linear programming (LP)} problem \cite{Fawaz-CCS2014}, of which the objective is to minimize the expected utility loss caused by data perturbation while satisfying the mDP constraints for each pair of neighboring records.

\begin{figure}[t]
\centering
\hspace{0.00in}
\begin{minipage}{0.44\textwidth}
  \subfigure{
\includegraphics[width=1.00\textwidth]{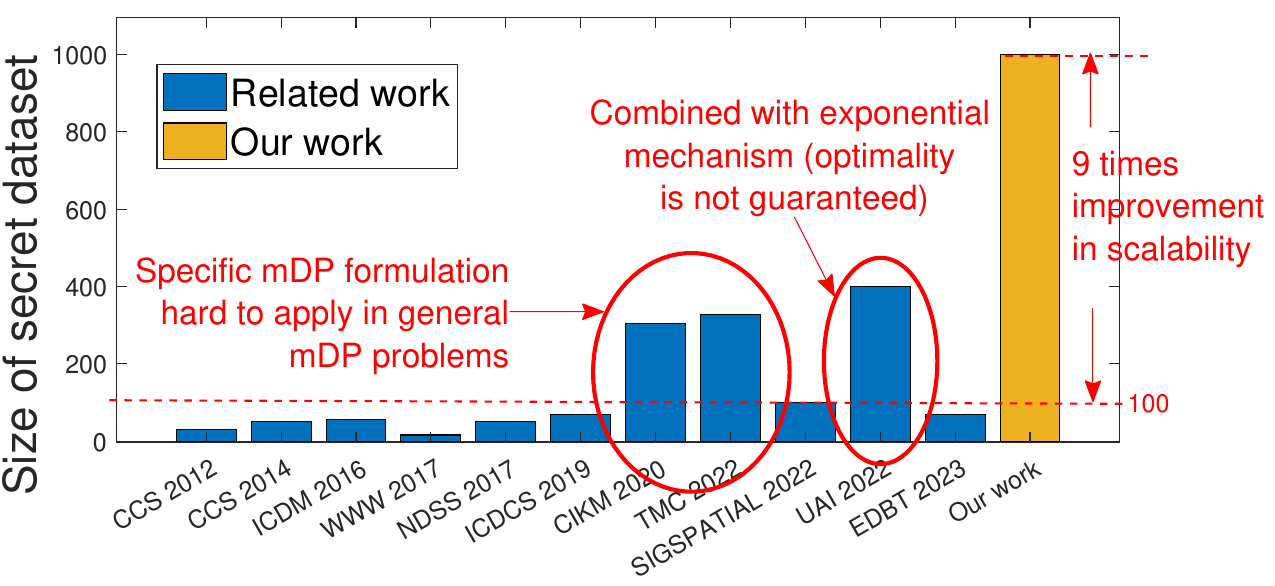}}
\end{minipage}
\vspace{-0.2in}
\caption{Comparison of the secret dataset size in the related LP-based mDP works and our work.
\vspace{0.02in}
\newline \small CCS 2012 \cite{Shokri-CCS2012}, CCS 2014 \cite{Fawaz-CCS2014}, ICDM 2016 \cite{Wang-CIDM2016}, WWW 2017 \cite{Wang-WWW2017}, NDSS 2017 \cite{Yu-NDSS2017}, ICDCS 2019 \cite{Qiu-ICDCS2019}, CIKM \cite{Qiu-CIKM2020}, TMC 2022 \cite{Qiu-TMC2022}, SIGSPATIAL 2022 \cite{Qiu-SIGSPATIAL2022}, UAI 2022 \cite{ImolaUAI2022}, EDBT 2023 \cite{Pappachan-EDBT2023}.
}
\label{fig:relatedworks}
\vspace{-0.16in}
\end{figure}

However, the LP formulation of mDP typically requires $O(N^2)$ decision variables and $O(N^2 K)$ linear constraints \cite{Bordenabe-CCS2014}, where $N$ and $K$ are the size of the secret dataset and perturbed dataset, respectively. The high complexity of LP makes it hard to apply in a large-scale mDP. As illustrated by Fig. \ref{fig:relatedworks}, most current LP-based mDP works have to limit their applications to small-scale secret datasets (i.e., up to 100 records in the secret dataset \cite{Chatzikokolakis-PoPETs2015}). While some recent efforts \cite{Qiu-CIKM2020, Qiu-TMC2022} have managed to extend the size of secret datasets to around 300 records using Dantzig-Wolfe decomposition and column generation algorithms, their focus has been on specific LP formulations where the mDP constraints are imposed for all pairs of secret records, simplifying the initialization of column generation. This specific LP formulation, however, is hard to apply in general mDP problems where the constraints are required only for neighboring records. Alternative approaches like \cite{ImolaUAI2022} integrate LP with the exponential mechanism to enhance scalability, which, however, comes at the cost of sacrificing the optimality of mDP.


\vspace{0.02in}
\noindent \textbf{Our main contributions}. Our primary aim is to enhance the scalability of the LP-based mDP by introducing a new computation framework. We first construct an \emph{mDP graph} to describe the mutual mDP constraints among secret records and then partition the entire secret dataset into well-balanced subsets based on the mDP graph (\textbf{Contribution 1}). Building upon this partition, we then reformulate the LP and solve it using \emph{Benders Decomposition} (\textbf{Contribution 2}), which consists of two stages: (1) a master program to manage the perturbation distribution calculations across subsets and (2) a set of subproblems, each deriving the perturbation distributions of records within a subset. The problems in both stages exhibit a relatively small scale, enabling efficient solutions. This efficiency facilitates the iterative derivation of a near-optimal solution for the original LP through the communication between the two stages. 

Finally, we test the performance of the new computation framework using geo-location data (in the road network and grid maps), text embeddings, and synthetic datasets, with a comparison of several benchmarks  (\textbf{Contribution 3}). The experimental results demonstrate that our new computation framework can derive the optimal mDP for the secret dataset with 1,000 records, marking approximately 9 times improvement in scalability compared to the state-of-the-art methods for general mDP problems, as shown in Fig. \ref{fig:relatedworks}. 


\vspace{-0.10in}
\section{Preliminaries}
\vspace{-0.03in}

In this part, we introduce the preliminary knowledge of mDP in \textbf{Section \ref{subsec:mDP}} and its LP computation framework in \textbf{Section \ref{subsec:LP}}. Table \ref{Tb:Notationmodel} in the Supplementary file lists the main notations used throughout this paper.

\vspace{-0.05in}
\subsection{Metric DP}
\label{subsec:mDP}
In general, a data perturbation mechanism can be represented as a \emph{probabilistic function} $Q$: $\mathcal{R} \rightarrow \mathcal{O}$, of which the domain $\mathcal{R}$ is users' \emph{secret dataset} and the range $\mathcal{O}$ is the \emph{perturbed dataset}. Given the underlying data features of $\mathcal{R}$, a metric $d: \mathcal{R}^2 \mapsto \mathbb{R}$ is defined to measure the \emph{distance} between any two records $r_i, r_j \in \mathcal{R}$, where the distance between $r_i$ and $r_j$ is denoted by $d_{i, j}$. 

\begin{definition}
\label{def:neighbor}
(Neighboring records) Given the secret dataset $\mathcal{R}$ and its distance measure $d$, any pair of records $r_i, r_j \in \mathcal{R}$ are called \emph{neighboring records} if their distance $d_{i, j} \leq \eta$, where $\eta >0$ is a pre-determined threshold. 
\end{definition}
In what follows, we use $\mathcal{E} = \left\{\left(r_i, r_j\right) \in \mathcal{R}^2 \left|d_{i, j} \leq \eta\right.\right\}$ to represent the whole set of neighboring records in $\mathcal{R}$. 
\begin{definition}
\label{def:metricDP}
(Metric DP) For each pair of neighboring records, $(r_i, r_j) \in \mathcal{E}$, $\epsilon$-mDP ensures that the probability distributions of their perturbed data $Q(r_i)$ and $Q(r_j)$ are sufficiently close so that it is hard for an attacker to distinguish $r_i$ and $r_j$ according to the probability distributions of $Q(r_i)$ and $Q(r_j)$
\vspace{-0.13in}
\begin{equation}
\label{eq:DP}
\small \frac{\mathbb{P}\mathrm{r}\left\{Q(r_i) \in O\right\}}{\mathbb{P}\mathrm{r}\left\{Q(r_j) \in O\right\}} \leq e^{\epsilon d_{i, j}}, ~\forall O \subseteq \mathcal{O}. 
\vspace{-0.00in}
\end{equation}
Here, $\epsilon > 0$ is called the \emph{privacy budget}, reflecting how much information can be disclosed from the perturbed data, i.e., lower $\epsilon$ implies a higher privacy level. 
\end{definition}


\vspace{-0.08in}
\subsection{Linear Programing-based Approaches}
\label{subsec:LP}
\vspace{-0.02in}
Considering the computational challenges of optimizing a perturbation function $Q$ defined on a continuous domain $\mathcal{R}$ and a continuous range $\mathcal{O}$, it is common practice to discretize both $\mathcal{R}$ and $\mathcal{O}$ to finite sets \cite{ImolaUAI2022}. In this case, the function $Q$ can be represented as a stochastic \emph{perturbation matrix} $\mathbf{Z} = \left\{z_{i,k}\right\}_{(r_i,o_k) \in \mathcal{R}\times \mathcal{O}}$, where each entry $z_{i,k}$ represents the probability of selecting $o_k \in \mathcal{O}$ as the perturbed record given the real record $r_i \in \mathcal{R}$. 
As such, the mDP constraints formulated in Equ. (\ref{eq:DP}) can be rewritten as the following linear constraints
\vspace{-0.02in}
\begin{equation}
\label{eq:DPdiscrete}
\small z_{i,k}- e^{\epsilon d_{i, j}} z_{j,k} \leq 0, ~\forall o_k \in \mathcal{O}, \forall (r_i, r_j) \in \mathcal{E}. 
\end{equation}
Additionally, the sum probability of perturbed record $o_k \in \mathcal{O}$ for each real record $r_i$ should be equal to 1 (the \emph{unit measure} of probability theory \cite{probability}), i.e., 
\begin{equation}
\label{eq:unitmeasure}
\textstyle \small \sum_{o_k \in \mathcal{O}}z_{i,k} = 1,~ \forall r_i \in \mathcal{R}. 
\vspace{-0.00in}
\end{equation}
We let $\mathbf{z}_i = [z_{i, 1}, ..., z_{i,K}]$ $(i =1,..., K)$ denote the record $r_i$'s \emph{perturbation vector}, i.e., the probability distribution of its perturbed records. 
We let $c_{i,k}$ denote the utility loss caused by the perturbed record $o_k$ when the real record is $r_i$. Then, the expected utility loss caused by the perturbation matrix $\mathbf{Z}$ can be represented by $\sum_{i=1}^N \mathbf{c}_i \mathbf{z}_{i}^{\top}$, where 
$\mathbf{c}_i = [p_i c_{i, 1}, ..., p_i c_{i,K}]$ and $p_i$ denotes the prior probability of the record being $r_i$.

Consequently, the objective of the \emph{perturbation matrix optimization (PMO)} problem is to minimize the expected data utility loss $\sum_{i=1}^N \mathbf{c}_i \mathbf{z}_{i}^{\top}$ while satisfying the constraints of both mDP and the probability unit measure, which can be formulated as the following LP problem: 
\normalsize

\small 
\vspace{-0.12in}
\begin{eqnarray}
\label{eq:LPobjective}
\min && \textstyle  \sum_{i=1}^N \mathbf{c}_i \mathbf{z}_{i}^{\top} \\
\mathrm{s.t.} && \mbox{mDP constraints (Equ. (\ref{eq:DPdiscrete})) are satisfied } \forall r_i \in \mathcal{R} \\
&& \mbox{Unit measure (Equ. (\ref{eq:unitmeasure})) is satisfied } \forall r_i \in \mathcal{R}\\ \label{eq:LPconstraint1}
&& 0 \leq z_{i,k} \leq 1, \forall (r_i, o_k) \in \mathcal{R} \times \mathcal{O}. 
\end{eqnarray}
\normalsize
The decision variables in the LP problem in Equ. (\ref{eq:LPobjective})-(\ref{eq:LPconstraint1}) are the perturbation matrix $\mathbf{Z}$, including $O(NK)$ decision variables (entries), constrained by $O(N^2 K)$ linear constraints. Such a high complexity makes this LP computation framework hard to apply in large-scale mDP applications \cite{Pappachan-EDBT2023}. 

\vspace{-0.05in}
\section{Methodology}
\vspace{-0.02in}
In this section, we present our approach to enhance the scalability of the mDP calculation via LP decomposition. 

\vspace{0.02in}
\noindent \textbf{Roadmap}. 
We first present the computational framework in \textbf{Section \ref{subsec:framework}}, followed by the detailed descriptions of its two components: \emph{Secret dataset partitioning} and \emph{Benders decomposition (BD)}. In our framework, the secret dataset partitioning precedes BD, but since the objective for partitioning the secret data is contingent on the complexity analysis of BD, we introduce BD in \textbf{Section \ref{subsec:benders}} and its time complexity analysis in \textbf{Section \ref{subsec:timecomplexity}} before delving into secret dataset partitioning in \textbf{Section \ref{subsec:cluster}}.


\vspace{-0.05in}
\subsection{Computation Framework}
\label{subsec:framework}
\vspace{-0.02in}
In general, the decomposition of an optimization problem highly depends on how its decision variables are coupled by constraints \cite{Palomar-JSAC2006}. 
A common strategy involves partitioning the original large-scale problem into a set of subproblems with smaller sizes. Within each subproblem, the decision variables are strongly linked by constraints, while the connections between decision variables across different subproblems are comparatively weaker. 

In PMO, the decision variables are the \emph{perturbation vectors} of secret records, $\mathbf{z}_i = [z_{i, 1}, ..., z_{i,K}]$ $(i =1,..., K)$, linked by the mDP constraints (Equ. (\ref{eq:DPdiscrete})). Here, we describe how the perturbation vectors of each pair of records are coupled by the mDP constraints in $\mathcal{R}$ by an undirected graph, called the \emph{mDP graph} (\textbf{Definition \ref{def:mDPgraph}}):  
\vspace{-0.02in}
\begin{definition}
\label{def:mDPgraph}
(mDP graph) The mDP graph $\mathcal{G}$ is defined as the ordered pair  $\mathcal{G} = \left(\mathcal{R}, \mathcal{E}\right)$, where the node set $\mathcal{R}$ is the secret dataset and the edge set $\mathcal{E}$ is the whole set of neighboring records in $\mathcal{R}$. The weight assigned to each edge $(r_i, r_j)$ is the distance $d_{i,j}$ between the two records.
\end{definition}
\vspace{-0.02in}

As Fig. \ref{fig:framework} shows, given the mDP graph $\mathcal{G}$, our intuitive idea is to first partition $\mathcal{G}$ into $M$ subgraphs: 
$\mathcal{G}_l = \left(\mathcal{R}_1, \mathcal{E}_1\right), ..., \mathcal{G}_M = \left(\mathcal{R}_M, \mathcal{E}_M\right)$,
such that the nodes within each subgraph are \emph{strongly connected} by the mDP constraints, and the nodes across different subgraphs are \emph{weakly connected} by the mDP constraints. Subsequently, a subproblem, denoted by $\mathrm{Sub}_l$, can be formulated to determine the perturbation vectors of the secret records within each subgraph $\mathcal{G}_l$ ($l = 1, ..., N$). Here, we use $\mathbf{z}_{\mathcal{R}_l} = \left\{\mathbf{z}_i, \middle| r_i \in \mathcal{R}_l\right\}$ to denote the perturbation vectors of the set of records in $\mathcal{R}_l$. 
\vspace{-0.00in}

Next, we introduce Property \ref{property:mDPcompo} and Property \ref{property:chainrule} of the mDP graph to facilitate the computation of the PMO problem.  The detailed proofs of the two properties  
can be found in Section \ref{subsec:proof:mDPcompo} and  Section \ref{subsec:proof:chainrule} in the supplementary file.
\vspace{-0.03in}
\begin{property}
\label{property:mDPcompo}
(Independent computation of mDP components) 
Suppose that the mDP graph $\mathcal{G}$ has $m$ components, denoted by $\mathcal{C}_l = \left(\mathcal{R}_l, \mathcal{E}_l\right)$ ($l = 1, ..., m$). Here, each component is a connected subgraph that is not part of any larger connected subgraph \cite{GraphTheory}.  Consider a subproblem  $\mathrm{Sub}_l$ formulated to determine the optimal perturbation vectors for each $\mathcal{R}_l$, where only the mDP constraints in $\mathcal{E}_l$ are considered. In this case, we can solve $\mathrm{Sub}_1, ..., \mathrm{Sub}_M$ independently, and their collective optimal solutions form the optimal solution for PMO.
\end{property}
\vspace{-0.07in}
\begin{property}
\label{property:chainrule}
Given that two records $r_i$ and $r_j$ (not necessarily neighbors) are connected by a path in the mDP graph $\mathcal{G}$, of which the \emph{shortest path distance} (i.e., the sum weight of the edges in the path) is $D_{i,j}$, then $z_{i,k}$ and $z_{j,k}$ are restricted by the following constraints: 
\vspace{-0.06in}
\begin{equation}
\label{eq:chainrule}
\small \small z_{i,k}- e^{\epsilon D_{i, j}} z_{j,k} \leq 0, ~\forall o_k \in \mathcal{O}. 
\vspace{-0.00in}
\end{equation}
\end{property}
\vspace{-0.00in}

\begin{figure}[t]
\centering
\hspace{0.00in}
\begin{minipage}{0.48\textwidth}
  \subfigure{
\includegraphics[width=1.00\textwidth]{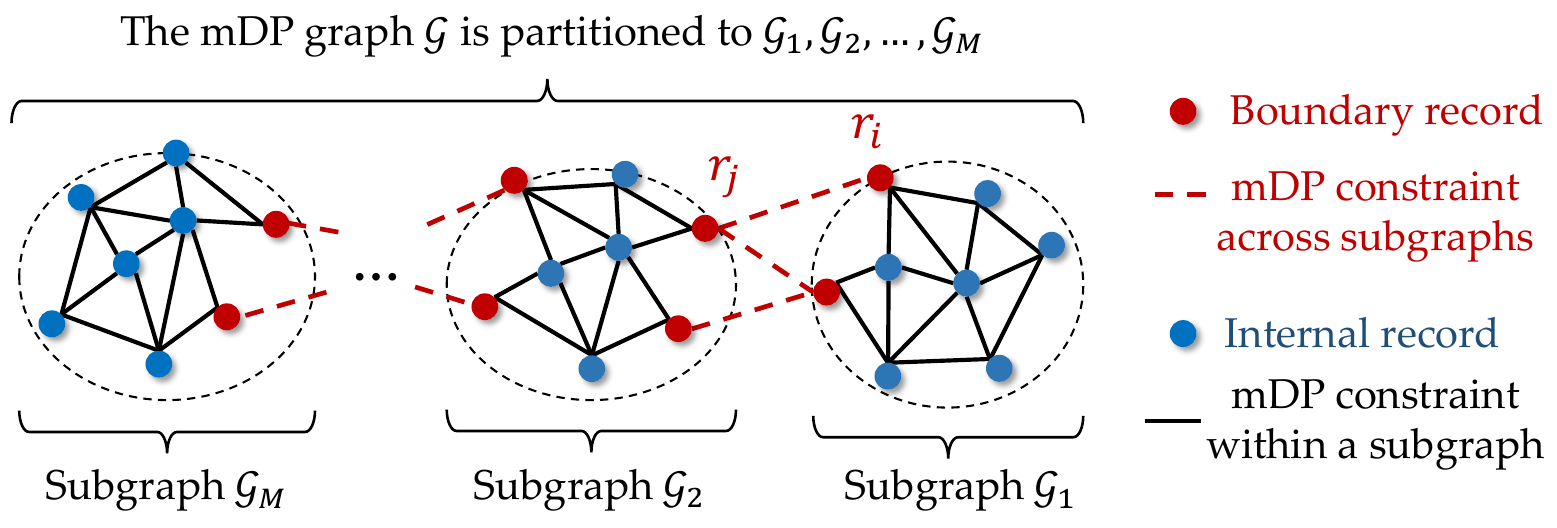}}
\vspace{-0.22in}
\end{minipage}
\caption{Computational framework.}
\label{fig:framework}
\vspace{-0.1in}
\end{figure}

In what follows, we restrict our attention to the case where the mDP graph $\mathcal{G}$ has a single component, considering the case of multiple components as straightforward to generalize according to Property \ref{property:mDPcompo}.  When $\mathcal{G}$ is a single-component graph, although dividing $\mathcal{G}$ facilitates the parallel computation of the major perturbation vectors within each subset, it remains imperative to jointly derive the perturbation vectors that are linked by mDP constraints across multiple subsets. For instance, in Fig. \ref{fig:framework}, the perturbation vector $\mathbf{z}_i$ of $r_i$ in $\mathcal{G}_1$ must adhere to mDP constraints with the perturbation vector $\mathbf{z}_j$ of $r_j$ in $\mathcal{G}_2$. Based on whether the records in each $\mathcal{R}_l$ are adjacent to records in other subsets by mDP constraints, we classify the records in each $\mathcal{R}_l$ into either ``\emph{boundary records}'' or ``\emph{internal records}'', as defined in \textbf{Definition \ref{def:bdry_int}}:
\vspace{-0.03in}
\begin{definition}
\label{def:bdry_int}
Each $r_i \in \mathcal{R}_l$ is a \textbf{boundary record} if it has at least a neighbor $r_j$ in another subset $\mathcal{R}_n$ ($n \neq l$) in the mDP graph $\mathcal{G}$; otherwise, $r_i$ is an \textbf{internal record}. 
\end{definition}
\vspace{-0.03in}




We use $\mathcal{X}_l$ and $\mathcal{Y}_l$ ($\mathcal{X}_l, \mathcal{Y}_l \subseteq \mathcal{R}_l$) to represent the \emph{internal record set} and the \emph{boundary record set} in $\mathcal{R}_l$, respectively. We let $\mathbf{z}_{\mathcal{X}_l}$ and $\mathbf{z}_{\mathcal{Y}_l}$ denote the perturbation vectors of $\mathcal{X}_l$ and $\mathcal{Y}_l$, i.e., 
$\mathbf{z}_{\mathcal{X}_l} = \left\{\mathbf{z}_i \left|r_i \in \mathcal{X}_l\right.\right\}$ and  $\mathbf{z}_{\mathcal{Y}_l} = \left\{\mathbf{z}_i \left|r_i \in \mathcal{Y}_l\right.\right\}$.

\vspace{0.03in}
\noindent \textbf{The PMO problem reformulation}. After categorizing the records in $\mathcal{R}_l$ into $\mathcal{X}_l$ and $\mathcal{Y}_l$ ($l = 1, ..., M$), the original PMO formulated in Equ. (\ref{eq:LPobjective})--(\ref{eq:LPconstraint1}) can be rewritten in a \emph{block ladder structure}, as shown in Fig. \ref{fig:blockstructure}: 
\newline (1) The \textbf{objective (OBJ)} $\sum_{i=1}^N \mathbf{c}_i \mathbf{z}_{i}$ is rewritten as the sum of 
\begin{itemize}
\item $\sum_{l=1}^M \mathbf{c}_{\mathcal{Y}_l} \mathbf{z}_{\mathcal{Y}_l}$  in the red block, representing the data utility loss of the boundary records in $\mathcal{Y}_1, ..., \mathcal{Y}_M$, and 
\item $\mathbf{c}_{\mathcal{X}_l} \mathbf{z}_{\mathcal{X}_l}$ ($l = 1, ..., M$) in the blue blocks, representing the data utility loss of the internal records in $\mathcal{X}_l$; 
\end{itemize}
(2) The \textbf{constraints (CSTR)} includes 
\begin{itemize}
\item $\mathbf{B}_{\mathcal{Y}_1, ..., \mathcal{Y}_M} \left[\begin{array}{c}\mathbf{z}_{\mathcal{Y}_1} \\
\vdots \\ \mathbf{z}_{\mathcal{Y}_M}\end{array}\right] \geq \mathbf{b}_{\mathcal{Y}_1, ..., \mathcal{Y}_M}$, including the mDP constraints that connect the boundary perturbation vectors $\mathbf{z}_{\mathcal{Y}_1}, ..., \mathbf{z}_{\mathcal{Y}_M}$ across $\mathcal{R}_1$, ..., $\mathcal{R}_M$, and the unit measure constraints of each $\mathbf{z}_{\mathcal{Y}_l}$ ($l = 1, ..., M$),   
\vspace{-0.05in}
\item $\mathbf{B}_{\mathcal{Y}_l} \mathbf{z}_{\mathcal{Y}_l} + \mathbf{A}_{\mathcal{X}_l} \mathbf{z}_{\mathcal{X}_l} \geq \mathbf{b}_{\mathcal{X}_l, \mathcal{Y}_l}$ ($l = 1, ..., M$), including the mDP constraints between the perturbation vectors within $\mathcal{R}_l$ (including both $\mathcal{X}_l$ and $\mathcal{Y}_l$), and their unit measure constraints. 
\end{itemize}
Such block ladder structure lends the reformulated PMO well to \emph{Benders decomposition (BD)} \cite{Rahmaniani-EJOR2017}. Due to the limit of space, we list the detailed formulations of the coefficient matrices $\mathbf{B}_{\mathcal{Y}_1, ..., \mathcal{Y}_M}$, $\mathbf{B}_{\mathcal{Y}_l}$, $\mathbf{A}_{\mathcal{X}_l}$, and coefficient vectors $\mathbf{b}_{\mathcal{Y}_1, ..., \mathcal{Y}_M}$, and $\mathbf{b}_{\mathcal{X}_l, \mathcal{Y}_l}$ in \textbf{Section \ref{subsec:PMONotationExp}} in the supplementary file. 

\begin{figure}[t]
\centering
\hspace{0.00in}
\begin{minipage}{0.50\textwidth}
  \subfigure{
\includegraphics[width=0.95\textwidth]{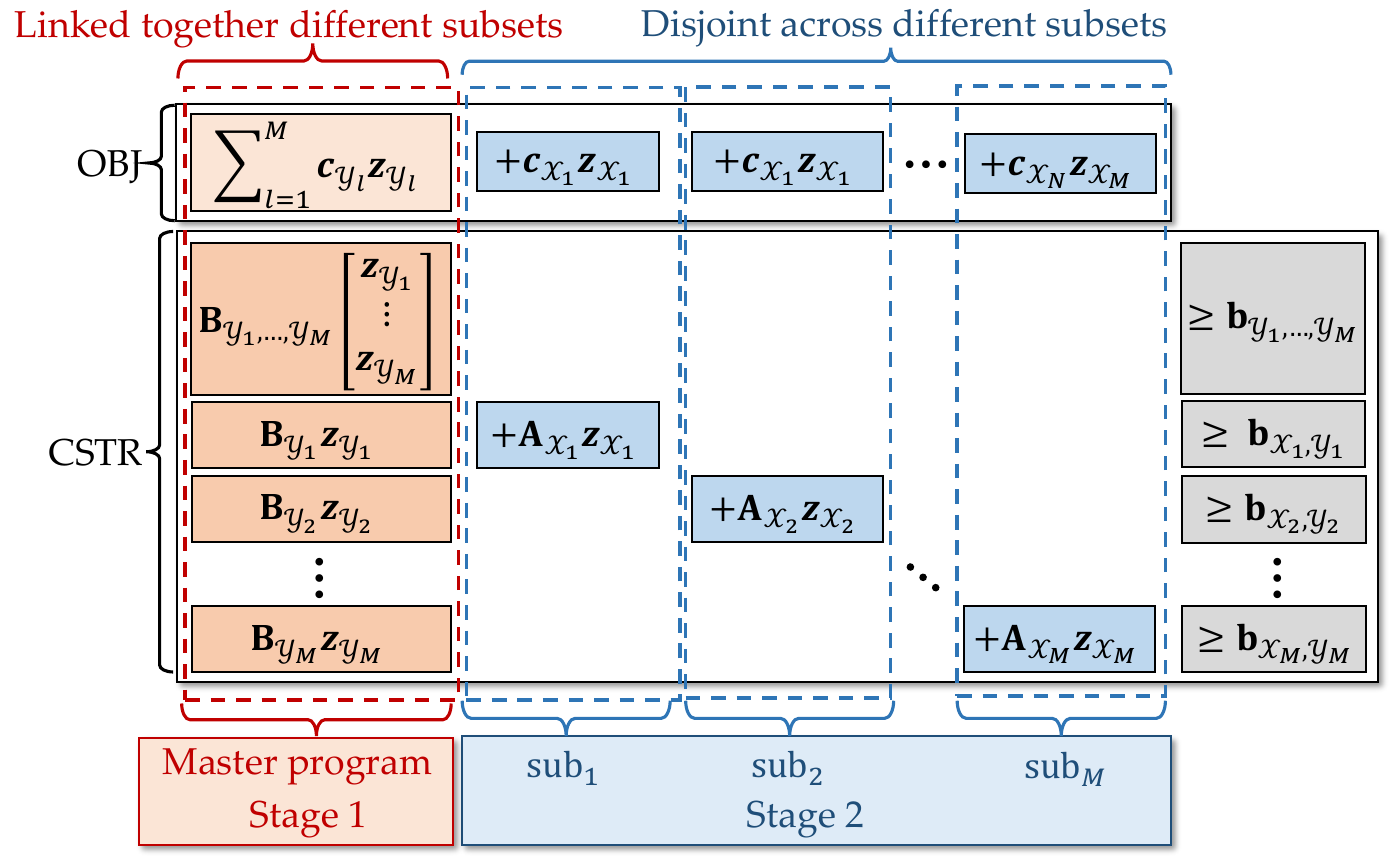}}
\vspace{-0.15in}
\end{minipage}
\caption{Block ladder structure of the PMO formulation.}
\label{fig:blockstructure}
\vspace{-0.15in}
\end{figure}
\vspace{-0.00in}
\DEL{
\begin{proposition}
\label{prop:PMOreform}
The PMO reformulation is equivalent to the original PMO problem defined in Equ. (\ref{eq:LPobjective})-(\ref{eq:LPconstraint1}). 
\end{proposition}}

\vspace{-0.05in}
\subsection{Benders Decomposition}
\label{subsec:benders}
\vspace{-0.02in}
As Fig. \ref{fig:BDdecomposition}(a) shows, BD is composed of two stages, a \textbf{master program (MP)} to derive the perturbation vectors of all the \textbf{boundary records}, and a set of \textbf{subproblems}, labeled as $\mathrm{Sub}_l$ ($l = 1, ..., M$), to derive the perturbation vectors of the \textbf{internal records} in each subset $\mathcal{R}_l$. 

\vspace{0.03in}
\noindent \textbf{Stage 1: Master program}. The MP derives the boundary records' perturbation vectors $\mathbf{z}_{\mathcal{Y}_1}, ..., \mathbf{z}_{\mathcal{Y}_M}$ and replaces the data utility loss of the internal records in each $\mathcal{X}_l$ by a single decision variable $w_l$, i.e., $w_l = \mathbf{c}_{\mathcal{X}_l} \mathbf{z}_{\mathcal{X}_l}$. The MP is formally formulated as the following LP problem
\normalsize
\small 
\begin{eqnarray}
\label{eq:MPObj}
\min && \textstyle \sum_{l=1}^M \mathbf{c}_{\mathcal{Y}_l} \mathbf{z}_{\mathcal{Y}_l} + \sum_{l=1}^M w_l \\
\mathrm{s.t.} && \mathbf{B}_{\mathcal{Y}_1, ..., \mathcal{Y}_M} \left[\begin{array}{c}\mathbf{z}_{\mathcal{Y}_1} \\
\label{eq:MPzy}
\vdots \\ \mathbf{z}_{\mathcal{Y}_M}\end{array}\right] \geq \mathbf{b}_{\mathcal{Y}_1, ..., \mathcal{Y}_M} \\
&& \mathcal{H}: \mbox{Cut set of $\mathbf{z}_{\mathcal{Y}_1}$, ..., $\mathbf{z}_{\mathcal{Y}_M}$, $w_1, ..., w_M$}\\ \label{eq:MPzy0}
&& \mathbf{z}_{\mathcal{Y}_l} \geq \mathbf{0}, ~ l = 1, ..., M. 
\end{eqnarray}
\normalsize
where each \emph{cut} in $\mathcal{H}$ is a \emph{linear inequality} of the decision variables $\mathbf{z}_{\mathcal{Y}_1}$, ..., $\mathbf{z}_{\mathcal{Y}_M}$, $w_1, ..., w_M$. 
According to PMO's reformulation (in Fig. \ref{fig:blockstructure}), $w_l$ is given by 
\begin{equation}
\label{eq:w_l}
\textstyle 
\small w_l = \min_{\mathbf{z}_{\mathcal{X}_l} \geq \mathbf{0}} \left\{\mathbf{c}_{\mathcal{X}_l} \mathbf{z}_{\mathcal{X}_l}\left|\mathbf{A}_{\mathcal{X}_l}  \mathbf{z}_{\mathcal{X}_l}  \geq \mathbf{b}_{\mathcal{X}_l, \mathcal{Y}_l} - \mathbf{B}_{\mathcal{Y}_l} \mathbf{z}_{\mathcal{Y}_l}\right.\right\}. 
\end{equation}
Since the MP doesn't know the optimal values of $\mathbf{z}_{\mathcal{X}_l}$, instead of using Equ. (\ref{eq:w_l}), it ``guesses'' the value of $w_l$ based the \emph{cut set} $\mathcal{H}$. In the subsequent \textbf{Stage 2}, each $\mathrm{Sub}_l$ verifies whether the ``guessed'' value of $w_l$ is feasible and achieves the minimum data utility loss as defined in Equ. (\ref{eq:w_l}); if not, $\mathrm{Sub}_l$ proposes the addition of a new cut to be included in $\mathcal{H}$, thereby guiding the MP to refine $w_l$ during the next iteration.

\vspace{0.03in}
\noindent \textbf{Initial cut}. According to \textbf{Property \ref{property:chainrule}} of mDP constraints, the MP can initialize the cut set $\mathcal{H}$ by 
$$\mathcal{H} = \left\{\mathbf{z}_{i}, \mathbf{z}_{j} | \mathbf{z}_{i}  \leq e^{{\epsilon D_{i,j}}}  \mathbf{z}_{j}, ~\forall r_i, r_j \in \cup_l \mathcal{Y}_l, (r_i, r_j) \notin \mathcal{E}\right\},$$ 
where $D_{i,j}$ is the shortest path distance between $r_i$ and $r_j$ in the mDP graph $\mathcal{G}$.  

In the following, we use $\left\{\overline{\mathbf{z}}_{\mathcal{Y}_1}, ..., \overline{\mathbf{z}}_{\mathcal{Y}_M}, \overline{w}_1, ..., \overline{w}_N\right\}$ to represent the optimal solution of the MP. 

\vspace{0.05in}
{\rd 
\DEL{
\noindent \textbf{Cut Initialization}. 
We use $\mathcal{G}_l$ to represent the \emph{induced subgraph} of the neighbor graph $\mathcal{G}_l$ that contains records $\mathcal{R}_l$, i.e., $\mathcal{G}_l$ contains all of the edges (from $\mathcal{G}_l$) connecting pairs of vertices in $\mathcal{R}_l$. 
\begin{theorem}
For each $\mathcal{R}_l$ ($l = 1, ..., M$), if its records satisfy mDP, then for each pair of boundary records $\forall r_i, r_j \in \mathcal{Y}_l$ that are connected in $\mathcal{G}_l$ by at least a path, the following constraint is satisfied 
\begin{equation}
z_{i,k}- e^{\epsilon D_{\mathcal{R}}\left(r_i, r_j\right)} z_{j,k} \leq 0.
\end{equation}
where $D_{\mathcal{R}}(r_i,r_j)$ denotes the \emph{the shortest path distance} between $r_i, r_j$ in $\mathcal{G}_l$. 
\end{theorem}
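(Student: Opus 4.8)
The plan is to prove the statement by \emph{composing} the single-edge mDP constraints along a shortest path, i.e.\ a telescoping argument that can equivalently be phrased as induction on the number of edges in the path. The intuition is that each individual constraint in Equ.~(\ref{eq:DPdiscrete}) is a multiplicative bound $z_{a,k}\le e^{\epsilon d_{\mathcal{R}}(r_a,r_b)} z_{b,k}$, and multiplicative bounds compose by multiplying the factors, which turns a sum of edge distances into a single exponential.

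First I would record the base fact: for any directly adjacent pair $(r_a,r_b)\in\mathcal{E}^{\mathrm{mDP}}$, the discretized mDP constraint in Equ.~(\ref{eq:DPdiscrete}) gives $z_{a,k}\le e^{\epsilon d_{\mathcal{R}}(r_a,r_b)} z_{b,k}$ for every $o_k\in\mathcal{O}$. Since the mDP-neighbor relation is symmetric ($d_{\mathcal{R}}(r_a,r_b)=d_{\mathcal{R}}(r_b,r_a)\le\eta$), this single-edge bound is available in both directions, which is what lets me orient each edge consistently from $r_i$ toward $r_j$ when I traverse the path.

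Next, since $r_i,r_j\in\mathcal{Y}_l$ are connected in $\mathcal{G}_l^{\mathrm{mDP}}$, let $r_i=u_0,u_1,\dots,u_m=r_j$ be a \emph{shortest} path in $\mathcal{G}_l^{\mathrm{mDP}}$, so that $\sum_{t=0}^{m-1} d_{\mathcal{R}}(u_t,u_{t+1}) = D_{\mathcal{R}}(r_i,r_j)$ by the definition of shortest-path distance. Each consecutive pair $(u_t,u_{t+1})$ is an edge of the induced subgraph, hence an mDP neighbor pair, so the base fact applies to each. I would then chain the inequalities, using $z\ge 0$ so that multiplying through by positive factors $e^{\epsilon d_{\mathcal{R}}(\cdot,\cdot)}$ preserves the direction,
\begin{equation}
z_{u_0,k} \le e^{\epsilon d_{\mathcal{R}}(u_0,u_1)} z_{u_1,k} \le \cdots \le e^{\epsilon \sum_{t=0}^{m-1} d_{\mathcal{R}}(u_t,u_{t+1})} z_{u_m,k} = e^{\epsilon D_{\mathcal{R}}(r_i,r_j)} z_{j,k}.
\end{equation}
Rearranging yields $z_{i,k} - e^{\epsilon D_{\mathcal{R}}(r_i,r_j)} z_{j,k} \le 0$, which is the claim.

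I do not expect a deep obstacle here; the argument is essentially bookkeeping on a telescoping product. The two points that require care are: (i) justifying that every edge on the chosen path is a genuine mDP neighbor so that Equ.~(\ref{eq:DPdiscrete}) is legitimately applicable at each step (this holds because the path lies in the induced subgraph $\mathcal{G}_l^{\mathrm{mDP}}$, whose edges are exactly mDP-neighbor pairs within $\mathcal{R}_l$); and (ii) using the \emph{shortest} path specifically, since any path gives a valid but weaker bound $z_{i,k}\le e^{\epsilon(\text{path length})}z_{j,k}$, and only the minimum-length path produces the exponent $\epsilon D_{\mathcal{R}}(r_i,r_j)$ matching the stated inequality. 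It is worth flagging that $d_{\mathcal{R}}$ itself need not satisfy the triangle inequality, but this is irrelevant to the argument: $D_{\mathcal{R}}$ is well-defined as a shortest-path distance regardless, and the composition only requires the per-edge constraints together with nonnegativity of the perturbation entries.
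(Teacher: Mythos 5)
Your proof is correct and is the standard telescoping argument one would expect here (the paper itself defers the proof to a supplementary appendix, but composing the per-edge constraints of Equ.~(\ref{eq:DPdiscrete}) along a shortest path in the induced subgraph is precisely the intended route). One minor note: the nonnegativity $z \geq 0$ is not actually what preserves the inequality direction when chaining --- it is the positivity of the multipliers $e^{\epsilon d_{\mathcal{R}}(\cdot,\cdot)}$ that does the work --- but this does not affect the validity of the argument.
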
}
}

\begin{figure}[t]
\centering
\hspace{0.00in}
\begin{minipage}{0.50\textwidth}
  \subfigure{
\includegraphics[width=1.00\textwidth]{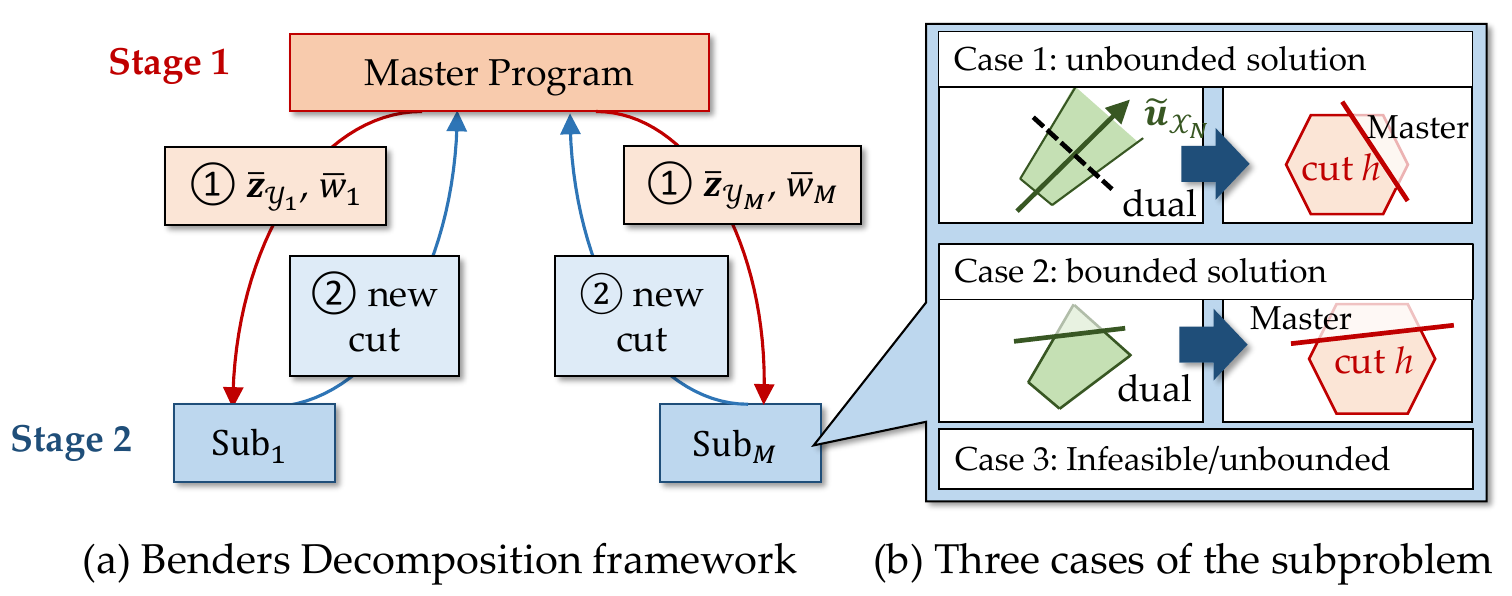}}
\vspace{-0.25in}
\end{minipage}
\caption{The Benders decomposition framework.}
\label{fig:BDdecomposition}
\vspace{-0.18in}
\end{figure}

\noindent \textbf{Stage 2: Subproblems}. After the MP derives its optimal solution $\left\{\overline{\mathbf{z}}_{\mathcal{Y}_1}, ..., \overline{\mathbf{z}}_{\mathcal{Y}_M}, \overline{w}_1, ..., \overline{w}_N\right\}$ in \textbf{Stage 1}, each $\mathrm{Sub}_l$ validates whether $\overline{w}_l$ has achieved the minimum data utility loss, 
\begin{equation}
\label{eq:w_l_primal}
\textstyle 
\small 
\overline{w}_l = \min_{\mathbf{z}_{\mathcal{X}_l} \geq \mathbf{0}} \left\{\mathbf{c}_{\mathcal{X}_l} \mathbf{z}_{\mathcal{X}_l}\left|\mathbf{A}_{\mathcal{X}_l}  \mathbf{z}_{\mathcal{X}_l}  \geq \mathbf{b}_{\mathcal{X}_l, \mathcal{Y}_l} - \mathbf{B}_{\mathcal{Y}_l} \overline{\mathbf{z}}_{\mathcal{Y}_l}\right.\right\},  
\end{equation} 
\DEL{
\begin{eqnarray}
\min && w_l = \mathbf{c}_{\mathcal{X}_l} \mathbf{z}_{\mathcal{X}_l} \\
\mathrm{s.t.} && \mathbf{A}_{\mathcal{X}_l}  \mathbf{z}_{\mathcal{X}_l}  \geq \mathbf{b}_{\mathcal{X}_l, \mathcal{Y}_l} - \mathbf{B}_{\mathcal{Y}_l} \overline{\mathbf{z}}_{\mathcal{Y}_l} \\ 
&& \mathbf{z}_{\mathcal{X}_l} \geq \mathbf{0}, 
\end{eqnarray}}
of which the \emph{dual problem} can be formulated as the following LP problem: 
\vspace{-0.10in}
\normalsize
\small 
\begin{eqnarray}
\label{eq:dualobj}
\max && \left(\mathbf{b}_{\mathcal{X}_l, \mathcal{Y}_l} -  \mathbf{B}_{\mathcal{Y}_l} \overline{\mathbf{z}}_{\mathcal{Y}_l}\right)^{\top} \mathbf{a}_{\mathcal{X}_l} + \mathbf{c}_{\mathcal{Y}_l} \overline{\mathbf{z}}_{\mathcal{Y}_l}  \\ \label{eq:dualconstr1}
\mathrm{s.t.} && \mathbf{A}_{\mathcal{X}_l}^{\top} \mathbf{a}_{\mathcal{X}_l} \leq \mathbf{c}_{\mathcal{X}_l},~ \mathbf{a}_{\mathcal{X}_l} \geq \mathbf{0}. 
\end{eqnarray}
\normalsize
As Fig. \ref{fig:BDdecomposition}(b) shows, there are three cases of the dual problem: 
\begin{itemize}

\item [\textbf{C1}:] The optimal objective value is \textbf{unbounded}: By \emph{weak duality} \cite{Linear&Nonlinear}, $\overline{\mathbf{z}}_{\mathcal{Y}_l}$ does not satisfy $\mathbf{A}_{\mathcal{X}_l}  \mathbf{z}_{\mathcal{X}_l}  \geq \mathbf{b}_{\mathcal{X}_l, \mathcal{Y}_l} - \mathbf{B}_{\mathcal{Y}_l} \overline{\mathbf{z}}_{\mathcal{Y}_l}$ for any $\mathbf{z}_{\mathcal{X}_l} \geq \mathbf{0}$. 
Since the dual problem is unbounded, there exists an \emph{extreme ray} $\tilde{\mathbf{a}}_{\mathcal{X}_l}$ s.t. 
$\mathbf{A}_{\mathcal{X}_l}^{\top} \tilde{\mathbf{a}}_{\mathcal{X}_l} \leq \mathbf{0}$ and $\left(\mathbf{b}_{\mathcal{X}_l, \mathcal{Y}_l} -  \mathbf{B}_{\mathcal{Y}_l} \overline{\mathbf{z}}_{\mathcal{Y}_l}\right)^{\top} \tilde{\mathbf{a}}_{\mathcal{X}_l} > 0$. To ensure that $\tilde{\mathbf{a}}_{\mathcal{X}_l}$ won't be an extreme ray in the next iteration, $\mathrm{Sub}_l$ suggests a \emph{new cut} $h$ (\emph{feasibility cut}) to the MP: 
\vspace{-0.08in}
\begin{equation}
\small h:~ \left(\mathbf{b}_{\mathcal{X}_l, \mathcal{Y}_l} -  \mathbf{B}_{\mathcal{Y}_l} \mathbf{z}_{\mathcal{Y}_l}\right)^{\top} \tilde{\mathbf{a}}_{\mathcal{X}_l} \leq \mathbf{0}.
\end{equation}
\item [\textbf{C2}:] The optimal objective value is \textbf{bounded} with the solution $\overline{\mathbf{a}}_{\mathcal{X}_l}$:  By \emph{weak duality}, the optimal value of the dual problem is equal to the optimal value of $w_l$ constrained on the choice of $\overline{\mathbf{z}}_{\mathcal{Y}_l}$. {\dcheck In this case, $\mathrm{Sub}_l$ checks whether $\overline{w}_l < \left(\mathbf{b}_{\mathcal{X}_l, \mathcal{Y}_l} -  \mathbf{B}_{\mathcal{Y}_l} \overline{\mathbf{z}}_{\mathcal{Y}_l}\right)^{\top} \overline{\mathbf{a}}_{\mathcal{X}_l} + \mathbf{c}_{\mathcal{Y}_l} \overline{\mathbf{z}}_{\mathcal{Y}_l}$}. If yes, then
\newline 
$\overline{w}_l < \min_{\mathbf{z}_{\mathcal{X}_l} \geq \mathbf{0}} \left\{\mathbf{c}_{\mathcal{X}_l} \mathbf{z}_{\mathcal{X}_l}\left|\mathbf{A}_{\mathcal{X}_l}  \mathbf{z}_{\mathcal{X}_l}  \geq \mathbf{b}_{\mathcal{X}_l, \mathcal{Y}_l} - \mathbf{B}_{\mathcal{Y}_l} \overline{\mathbf{z}}_{\mathcal{Y}_l}\right.\right\}$,  
indicating that $\overline{w}_l$ guessed by the MP is lower than the minimum data utility loss. Therefore, $\mathrm{Sub}_l$ suggests a \emph{new cut} $h:~ w_l \geq \left(\mathbf{b}_{\mathcal{X}_l, \mathcal{Y}_l} -  \mathbf{B}_{\mathcal{Y}_l} \mathbf{z}_{\mathcal{Y}_l}\right)^{\top} \overline{\mathbf{a}}_{\mathcal{X}_l} + \mathbf{c}_{\mathcal{Y}_l} \mathbf{z}_{\mathcal{Y}_l}$ to the MP to improve $w_l$ in the next iteration. 
\item [\textbf{C3}:] There is \textbf{no feasible solution}: By \emph{weak duality}, the primal problem either has no feasible/unbounded solution. The algorithm terminates.
\end{itemize}
After adding the new cuts (from all the subproblems) to the cut set $\mathcal{H}$, the BD moves to the next iteration by recalculating the MP and obtaining updated $\overline{\mathbf{z}}_{\mathcal{Y}_l}$ and $\overline{w}_l$ $(l = 1, ..., M)$. As \textbf{Stage 1} and \textbf{Stage 2} are repeated over iterations, the MP collects more cuts from the subproblems, 
converging the solution $\overline{\mathbf{z}}_{\mathcal{Y}_l}$ and $\overline{w}_l$ to the optimal.

\begin{proposition}
\label{prop:BDbound}
(\emph{Upper and lower bounds of PMO's optimal}) \cite{Rahmaniani-EJOR2017}
\newline (1) The optimal solution of the MP (Equ. (\ref{eq:MPObj}) -- (\ref{eq:MPzy0})) offers a \textbf{lower bound} of the optimal solution of the original PMO (Equ. (\ref{eq:LPobjective})--(\ref{eq:LPconstraint1})) (as the MP relaxes the constraints of PMO). 
\newline (2) The solution of the subproblems (Equ. (\ref{eq:dualobj})-(\ref{eq:dualconstr1})), if it exists, combined with the solution $\overline{\mathbf{z}}_{\mathcal{Y}_l}$ of the MP, provides an \textbf{upper bound} of the PMO's optimal (since their solutions form a feasible solution of the original PMO).
\end{proposition}

\DEL{
\begin{wrapfigure}{r}{0.220\textwidth}
\vspace{-0.15in}
\begin{minipage}{0.220\textwidth}
\centering
    \subfigure{
\includegraphics[width=1.00\textwidth, height = 0.13\textheight]{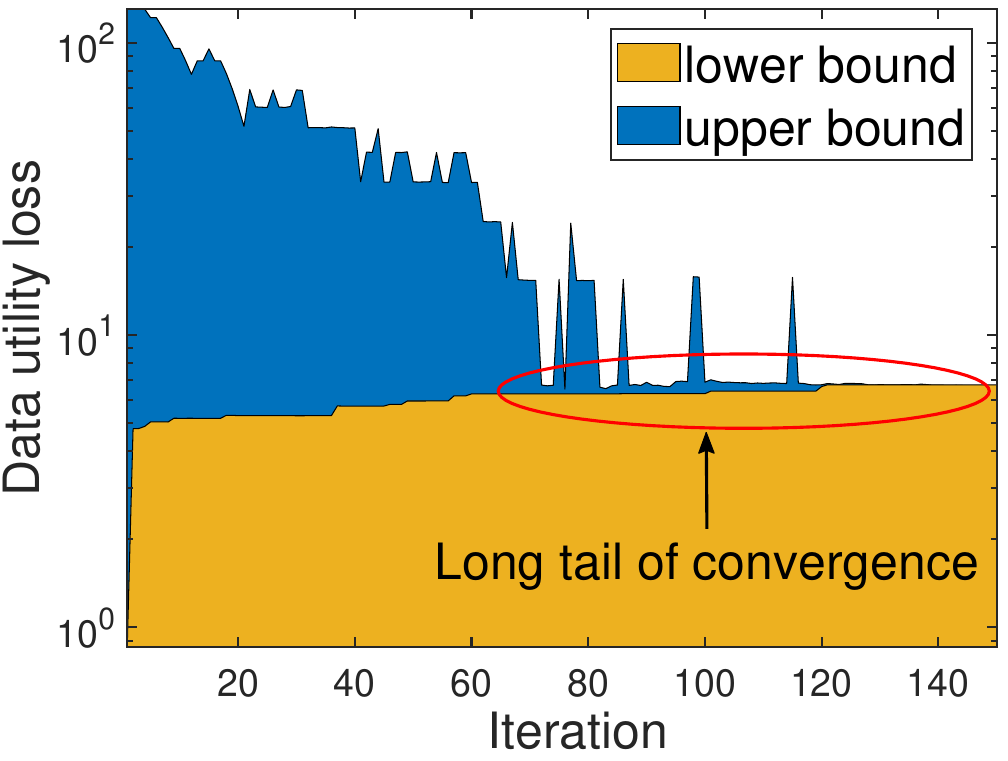}}
\vspace{-0.28in}
\caption{An example of BD's convergence.}
\vspace{-0.15in}
\label{fig:BDconvergenceEg}
\end{minipage}
\end{wrapfigure}}
Note that the optimal solution for PMO must lie within the gap between the upper bound and the lower bound in \textbf{Proposition \ref{prop:BDbound}}. The smaller the gap between these two bounds, the closer the solution of the BD is approaching the optimal solution of PMO. Fig. \ref{fig:convergence_rome_downtown}--Fig. \ref{fig:convergence_random} (in Section \ref{subsec:BDconvergence} in Appendix) gives some examples of how the upper and lower bounds change over iterations. 
Considering a prolonged convergence tail, we termine the algorithm when the gap between the best upper and lower bounds falls below a predetermined threshold $\xi$ (e.g., $\xi = 0.01$ in our experiments). \looseness = -1

\DEL{
\begin{figure}[t]
\centering
\hspace{0.00in}
\begin{minipage}{0.47\textwidth}
  \subfigure{
\includegraphics[width=1.00\textwidth, height = 0.13\textheight]{./fig/convergence/convergence_rome_d_kmean_dist_example}}
\end{minipage}
\vspace{-0.10in}
\caption{An example of the BD's convergence.}
\label{fig:BDconvergenceEg}
\vspace{-0.10in}
\end{figure}}

\vspace{-0.02in}
\subsection{Time Complexity Analysis}
\label{subsec:timecomplexity}
\vspace{-0.01in}
In the two-stage framework of BD, the number of decision variables in the MP in \textbf{Stage 1} is $K(\sum_{l=1}^M|\mathcal{Y}_l|)+M$, including $K(\sum_{l=1}^M|\mathcal{Y}_l|)$ variables ($\mathbf{z}_{\mathcal{Y}1}, \ldots, \mathbf{z}_{\mathcal{Y}_M}$) to determine the perturbation vectors of the boundary records, and the $M$ variables ($w_1,..., w_M$) to ``guess'' the utility loss of subproblems. Each $\mathrm{Sub}_l$ in \textbf{Stage 2} has $K |\mathcal{X}_l|$ decision variables to determine the perturbation vectors of the internal records $\mathbf{z}_{\mathcal{X}_l}$. \looseness = -1

The MP and the subproblems are both formulated as LP problems, with time complexity depending on the number of decision variables \cite{Cohen-STOC2019}. To facilitate analysis, we use a monotonically increasing function $O(T(n))$ to represent the time complexity of LP given the number of decision variables $n$. In each iteration, the MP in Stage 1 has $O(T(K \sum_{l=1}^M|\mathcal{Y}_l|))$ time complexity, and each subproblem $l$ in Stage 2 (run in parallel) has $O(T(K |\mathcal{X}_l|))$ time complexity. Since Stage 2 is terminated only after all subproblems are completed, its time complexity is $O(\max_{l} T(K |\mathcal{X}_l|)) = O(T( K \max_{l}|\mathcal{X}_l|))$ assuming all the subproblems are run in parallel. Hence, the total computation time of each iteration is given by $O(T(K \sum_{l=1}^M|\mathcal{Y}_l|) + T(K \max_{l} |\mathcal{X}_l|))$. Considering that both $\sum_{l=1}^M|\mathcal{Y}_l|$ and $\max_{l} |\mathcal{X}_l|$ is much smaller than the total number of records $|\mathcal{R}|$, the time complexity of each iteration of BD is significantly lower than that of the original PMO. There are two questions remain: 

(1) \emph{How many iterations are needed to converge the solution to the optimal?} 
In theory, if $|\cup_{l=1}^M \mathcal{Y}_l|$ is finite, generalized BD (including LP formulations) ends within a finite number of iterations for any given $\xi > 0$ \cite{Floudas2009}. To assess practical applicability, we will examine the convergence of our specific BD framework in \textbf{Section \ref{subsec:expEfficiency}} using different partitioning algorithms based on multiple datasets. \looseness = -1

(2) \emph{How to optimize the dataset partitioning to minimize the computation time $O(T(K \sum_{l=1}^M|\mathcal{Y}_l|) + T(K \max_{l}|\mathcal{X}_l|))$ in each iteration?} We introduce the detailed methods to address this problem in \textbf{Section \ref{subsec:cluster}}. 

\vspace{-0.04in}
\subsection{Secret Dataset Partitioning}
\label{subsec:cluster}
\vspace{-0.02in}
According to the time complexity analysis of the BD framework in Section \ref{subsec:timecomplexity}, we outline two primary objectives O-1 and O-2 for enhancing its computational efficiency when partitioning the secret dataset: 
\vspace{-0.00in}
\newline (O-1) \emph{Maintain strong mDP constraints within each subset and weak mDP constraints across different subsets}. This not only facilitates relatively independent computation among subproblems but also reduces the number of decision variables (boundary records) in the MP. 
\vspace{-0.00in}
\newline (O-2) \emph{Balance the number of records in different subsets}, to decrease the maximum time complexity of the subproblems. 

\vspace{-0.00in}
Achieving the above two objectives is similar to solving the \emph{RatioCut} problem \cite{Hagen-TCADICS1992}, which seeks to partition a graph with minimal cuts while ensuring a well-balanced size of subgraphs, a problem known to be NP-hard. 

Considering the computational trajectability, we apply a \emph{Distance-Vector (DV)}-based dataset partitioning algorithm. As the mDP constraints of two perturbation vectors depend on their records' distance, we embed each record $r_i$ by the \emph{distance vector} $\mathbf{d}_i = [d_{i,1}, ..., d_{i,N}]$ to characterize its mDP relationship with all other records in the mDP graph. We consider that two records, $r_i$ and $r_j$, exhibit a stronger connection when the Euclidean distance between their distance vectors, $\|\mathbf{d}_i - \mathbf{d}_j\|_2 = \sqrt{\sum_{l = 1}^N \left(d_{i,l} - d_{j,l}\right)^2}$, is lower, meaning that the two records are strongly coupled by the mDP constraints and also share similar mDP constraint relationship with other records. Accordingly, we partition the dataset to the subsets $\mathcal{R}_1, \ldots, \mathcal{R}_M$ using the distance vectors, which can be cast as the following \emph{k-means clustering} formulation:
\vspace{-0.03in}
\begin{equation}
\label{eq:kmean}
\textstyle \small \min ~ \sum_{l=1}^M \underbrace{\textstyle \sum_{r_i\in \mathcal{R}_l} \|\mathbf{d}_i-\boldsymbol{\mu}_l\|_2}_{\mbox{Reflect the connection within $\mathcal{R}_l$}}
\end{equation}
where $\boldsymbol{\mu}_l$ represents the \emph{centroid} of the distance vectors in $\mathcal{R}_l$, i.e., $\boldsymbol{\mu}_l = \frac{\sum_{r_i \in \mathcal{R}_l}\mathbf{d}_i}{|\mathcal{R}_l|}$ ($l = 1, \ldots, M$). Note that the objective function in Equ. (\ref{eq:kmean}) attains a lower value when the records within each partition are strongly connected and the sizes of $\mathcal{R}_1, ..., \mathcal{R}_M$ are well-balanced.

Besides the DV-based partitioning algorithm (labeled as ``\emph{k-mean-DV}'' or ``\emph{k-m-DV}''), for comparison, we carried out three other benchmarks, \emph{record-based partitioning}, \emph{adjacency matrix-based partitioning}, and \emph{balanced spectral clustering} \cite{Hagen1992}, labeled as ``\emph{k-m-rec}'', ``\emph{k-m-adj}'', and ``\emph{BSC}'', respectively. The details of the benchmarks are introduced in Section \ref{sec:benchmarks} in the supplementary file.


\vspace{-0.10in}
\section{Performance Evaluation}
\vspace{-0.03in}
\label{sec:performance}
In this section, we test the performance of our new computation approach using multiple datasets. We first introduce the experiment settings in \textbf{Section \ref{subsec:settings}}, and then evaluate the performance of our method in terms of partitioning balance in \textbf{Section \ref{subsec:expBalance}} and computation efficiency in \textbf{Section \ref{subsec:expEfficiency}}. More comprehensive experimental results (including the comparison with exponential mechanisms and the performance evaluation with different parameters, etc.) can be found in \textbf{Section \ref{sec:addExp}} in the supplementary file\footnote{The MATLAB source code of our method is available at: \url{https://github.com/chenxiunt/MetricDP_BendersDecomposition}}.

\vspace{-0.07in}
\subsection{Settings}
\label{subsec:settings}
\vspace{-0.03in}
\noindent \textbf{Datasets}. As mDP has been primarily used in text embeddings and geo-location data \cite{ImolaUAI2022}, we specifically choose text and geo-location datasets for the performance evaluation. Additionally, we assess our methods on a synthetic dataset. \looseness = -1
\newline \textbf{(1) Geo-location dataset in the road network}, composed of a set of ``nodes'' in the road network (retrieved by OpenStreetMap \cite{openstreetmap}), including road intersections, forks, junctions where roads intersect with others, and points where the road changes direction. We selected the city \emph{Rome, Italy} as the target region (specifically, the bounding area with coordinate $(lat=41.66, lon=12.24)$ as the south-west corner, and coordinate $(lat=42.10, lon=12.81)$ as the north-east corner). The distance metric between locations is defined by \emph{Haversine distance}, i.e.,  the angular distance between two points on the surface of a sphere. 

\noindent \textbf{(2) Geo-location dataset in 10 grid maps}, where each grid map is composed of 20$\times$25 grid cells (the size of each cell is 1km$\times$1km) in the target region, Rome, Italy. The center of each grid cell serves as a proxy for the cell's location. 
The distance metric between cells is defined as the \emph{Euclidean distance} between their centers. In both datasets (1) and (2), we set $\eta = 2.00 km$ and $\epsilon = 10.0/km$ by default. 

\noindent \textbf{(3) Text dataset}, which comprises 2,000 words. Each word is represented by a 300-dimensional vector using MATLAB's \texttt{word2vec} function \cite{word2vec} (pre-trained for 1 million English words), capturing both the semantic meaning of the word and its contextual usage. The distance between words is measured by the \emph{Euclidean distance} between their vectors. 

\begin{wrapfigure}{r}{0.24\textwidth}
\vspace{-0.07in}
\begin{minipage}{0.24\textwidth}
\centering
    \subfigure{
\includegraphics[width=1.00\textwidth]{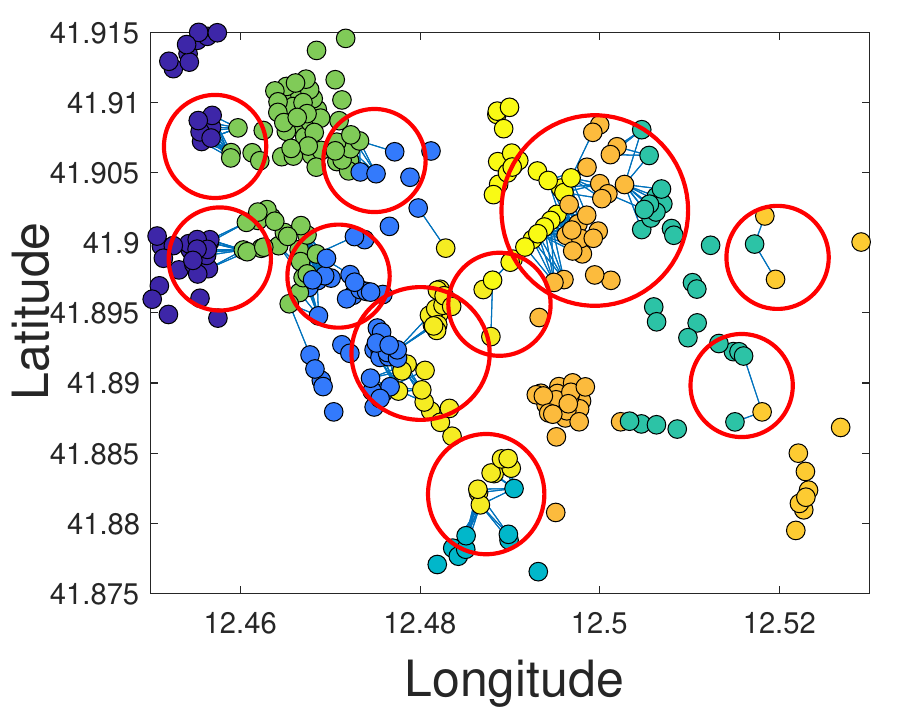}}
\vspace{-0.25in}
\caption{Example of MP components (each red circle highlights an MP's component).}
\vspace{-0.14in}
\label{fig:MPcomponents}
\end{minipage}
\end{wrapfigure}
\noindent \textbf{(4) Synthetic dataset}, which comprises 2,000 3-dimensional vectors, following the multivariate Gaussian distribution. The distance metric is defined as vectors' \emph{Euclidean distance}. In both datasets (3) and (4), we set $\eta = 2.0$ and $\epsilon = 10.0$ by default.

\vspace{0.03in}

\noindent \textbf{Metrics}: \textbf{(1) Problem sizes} of both subproblems and MPs. We measure the size of each decomposed problem by counting the number of perturbation vectors (each corresponds to a record) to derive. 
This metric reflects how effectively the dataset partitioning algorithms achieve balance in dividing the secret dataset. Note that the mDP graph of the MP may consist of several independent components. Fig. \ref{fig:MPcomponents} gives an example using a geo-location dataset in the road network, each solvable in isolation (as per \textbf{Property \ref{property:mDPcompo}}). Therefore, we measure the size of each component in the MP rather than the entire MP.

\noindent \textbf{(2) Computation time} to execute algorithms. The experiments are performed by a desktop with 13th Gen Intel Core i7 processor, 16 cores.

\noindent \textbf{(3) Number of iterations for the BD to converge} to the optiml solution. Considering the prolonged convergence tails of BD, we end the algorithm when the gap between its upper and lower bounds is lower than the threshold $\xi = 0.01$. 


\vspace{-0.07in}
\subsection{Secret Dataset Partitioning}
\label{subsec:expBalance}
\vspace{-0.03in}

\begin{table}[t]
\vspace{-0.00in}
\caption{Size of suproblems. Mean$\pm$1.96$\times$ standard deviation. }
\vspace{-0.1in}
\label{Tb:exp:partitionSub}
\centering
\footnotesize 
\begin{tabular}{ c|c|c|c|c}
\hline
\hline
\multicolumn{1}{ c  }{}
&\multicolumn{4}{ c }{Average size}
\\
\cline{2-5}
\multicolumn{1}{ c|  }{Datasets}
&\multicolumn{1}{ |c| }{\textbf{k-m-DV}}
 & \multicolumn{1}{ |c| }{k-m-rec}&\multicolumn{1}{ |c }{k-m-adj}&\multicolumn{1}{ |c }{BSC}
 \\ 
\hline
\hline
\multicolumn{1}{ c|  }{Road} & \textbf{9.5$\pm$1.2} & 10.1$\pm$0.8 & 17.7$\pm$1.3 & 15.4$\pm$0.8 \\ 
\multicolumn{1}{ c|  }{Grid} & \textbf{4.7$\pm$3.0} & 3.7$\pm$6.9 & 14.9$\pm$1.7 &3.0$\pm$4.7\\ 
\multicolumn{1}{ c|  }{Text} & \textbf{12.6$\pm$2.6} & 4.9$\pm$6.9 & 44.7$\pm$13.4 &33.0$\pm$16.3\\  
\multicolumn{1}{ c|  }{Syn} & \textbf{9.6$\pm$1.9} & 10.2$\pm$1.9 & 8.5$\pm$1.2 &8.4$\pm$5.2\\ 
\hline
\hline
\multicolumn{1}{ c  }{}
&\multicolumn{4}{ c }{Maximum size}
\\
\cline{2-5}
\multicolumn{1}{ c|  }{Datasets}
&\multicolumn{1}{ |c| }{\textbf{k-m-DV}}
 & \multicolumn{1}{ |c| }{k-m-rec}&\multicolumn{1}{ |c }{k-m-adj}&\multicolumn{1}{ |c }{BSC}
 \\ 
\hline
\hline
\multicolumn{1}{ c|  }{Road} & \textbf{18.6$\pm$8.9} & 16.1$\pm$10.1 & 34.6$\pm$12.5 & 32.1$\pm$15.3 \\ 
\multicolumn{1}{ c|  }{Grid} & \textbf{11.3$\pm$5.5} & 10.2$\pm$7.6 & 28.3$\pm$11.0 & 17.2$\pm$10.3 \\ 
\multicolumn{1}{ c|  }{Text} & \textbf{31.8$\pm$6.4} & 42.7$\pm$16.4 & 103.2$\pm$30.4 & 62.4$\pm$38.4 \\ 
\multicolumn{1}{ c|  }{Syn} & \textbf{17.9$\pm$2.9} & 19.8$\pm$5.1 & 90.0$\pm$24.6 & 90.2$\pm$56.6 \\ 
\hline
\end{tabular}
\vspace{-0.10in}
\end{table}
\begin{table}[t]
\caption{Size of MP components. Mean$\pm$1.96$\times$ standard deviation. }
\vspace{-0.1in}
\label{Tb:exp:partitionMP}
\centering
\footnotesize 
\begin{tabular}{ c|c|c|c|c}
\hline
\hline
\multicolumn{1}{ c  }{}
&\multicolumn{4}{ c }{Average size}
\\
\cline{2-5}
\multicolumn{1}{ c|  }{Datasets}
&\multicolumn{1}{ |c| }{\textbf{k-m-DV}}
 & \multicolumn{1}{ |c| }{k-m-rec}&\multicolumn{1}{ |c }{k-m-adj}&\multicolumn{1}{ |c }{BSC}
 \\ 
\hline
\hline
\multicolumn{1}{ c|  }{Road} & \textbf{11.3$\pm$6.0} & 37.9$\pm$24.7 & 40.4$\pm$34.3 & 64.7$\pm$14.5 \\ 
\multicolumn{1}{ c|  }{Grid} & \textbf{3.6$\pm$2.9} & 8.8$\pm$4.3 & 9.0$\pm$60.9 & 18.6$\pm$3.3 \\ 
\multicolumn{1}{ c|  }{Text} & \textbf{8.0$\pm$4.4} & 9.7$\pm$9.2 & 8.8$\pm$84.2 & 11.4$\pm$16.2 \\ 
\multicolumn{1}{ c|  }{Syn} & \textbf{10.9$\pm$4.3} & 12.4$\pm$8.7 & 17.9$\pm$9.0 & 18.3$\pm$9.2 \\ 
\hline
\hline
\multicolumn{1}{ c  }{}
&\multicolumn{4}{ c }{Maximum size}
\\
\cline{2-5}
\multicolumn{1}{ c|  }{Datasets}
&\multicolumn{1}{ |c| }{\textbf{k-m-DV}}
 & \multicolumn{1}{ |c| }{k-m-rec}&\multicolumn{1}{ |c }{k-m-adj}&\multicolumn{1}{ |c }{BSC}
 \\ 
\hline
\hline
\multicolumn{1}{ c|  }{Road} & \textbf{42.5$\pm$46.8} & 98.75$\pm$44.4 & 117.8$\pm$59.2 & 121.1$\pm$79.1 \\ 
\multicolumn{1}{ c|  }{Grid} & \textbf{22.1$\pm$3.4} & 10.3$\pm$5.1 & 99.5$\pm$95.1&27.1$\pm$4.2\\ 
\multicolumn{1}{ c|  }{Text} & \textbf{36.0$\pm$15.4} & 53.2$\pm$25.4&112.7$\pm$85.4 &79.2$\pm$34.2\\  
\multicolumn{1}{ c|  }{Syn} & \textbf{35.9$\pm$45.5} & 46.3$\pm$48.5 & 41.3$\pm$32.4 &63.6$\pm$41.0\\ 
\hline
\end{tabular}
\vspace{-0.18in}
\end{table}

Table \ref{Tb:exp:partitionSub} and Table \ref{Tb:exp:partitionMP} compare the sizes of subproblems and MP's components generated by the dataset partitioning algorithms, ``k-m-DV'', ``k-m-rec'', ``k-m-adj'', and ``BSC''. Each method undergoes testing on 10 instances for every dataset. The tables present the mean value$\pm$1.96$\times$standard deviation of 
the average/maximum size of subproblems and MP components across different instances. 
The detailed visual representation of the dataset partitioning in different datasets is given in Fig. \ref{fig:cluster_rome_downtown}--Fig. \ref{fig:cluster_random} in the supplementary file.

\vspace{-0.00in}
\begin{table*}[t]
\caption{Computation time (seconds). Mean$\pm$1.96$\times$ standard deviation. ``max iter. ex.'' means ``maximum iterations exceeded''}
\vspace{-0.10in}
\label{Tb:exp:time}
\centering
\footnotesize 
\begin{tabular}{ c|c|c|c|c|c|c}
\hline
\hline
\multicolumn{1}{ c  }{}
&\multicolumn{6}{ c }{LP-based methods}
\\
\cline{2-7}
\multicolumn{1}{ c  |}{} &\multicolumn{4}{ c |}{Benders decomposition} &\multicolumn{2}{ c }{Classic LP}
\\
\cline{2-7}
\multicolumn{1}{ c|  }{Datasets}
&\multicolumn{1}{ |c| }{\textbf{k-m-DV}}
 & \multicolumn{1}{ |c| }{k-m-rec}&\multicolumn{1}{ |c }{k-m-adj}&\multicolumn{1}{ |c| }{BSC}&\multicolumn{1}{ |c| }{Dual-simplex}&\multicolumn{1}{ |c }{Interior-point}
 \\ 
\hline
\hline
\multicolumn{1}{ c|  }{Geo-location dataset (road network)} & \textbf{290.7$\pm$33.5} & 1066.9$\pm$154.4 & max iter. ex. & max iter. ex. & max iter. ex. & max iter. ex. \\ 
\multicolumn{1}{ c|  }{Geo-location dataset (grid maps)} & \textbf{161.4$\pm$14.3} & 159.8$\pm$4.8 & max iter. ex. & 238.2$\pm$6.0 & max iter. ex. & max iter. ex. \\ 
\multicolumn{1}{ c|  }{Text dataset} & \textbf{27.2$\pm$0.3} & 925.0$\pm$79.0 & 58.7$\pm$9.7 &  1218.6$\pm$78.3 & max iter. ex. & max iter. ex. \\ 
\multicolumn{1}{ c|  }{Synthetic dataset} & \textbf{ 185.1$\pm$23.9} & 162.7$\pm$12.0 & max iter. ex. & 434.1$\pm$21.1 & max iter. ex. & max iter. ex. \\
\hline
\end{tabular}
\vspace{-0.10in}
\end{table*}

\begin{table}[t]
\vspace{-0.05in}
\caption{Number of iterations to converge to end the algorithm. \\
Mean$\pm$1.96$\times$ standard deviation.}
\vspace{-0.05in}
\label{Tb:exp:convergency}
\centering
\footnotesize 
\begin{tabular}{ c|c|c|c|c}
\hline
\hline
\multicolumn{1}{ c  }{}
&\multicolumn{4}{ c }{Dataset partitioning algorithms}
\\
\cline{2-5}
\multicolumn{1}{ c|  }{Datasets}
&\multicolumn{1}{ |c| }{\textbf{k-m-DV}}
 & \multicolumn{1}{ |c| }{k-m-rec}&\multicolumn{1}{ |c }{k-m-adj}&\multicolumn{1}{ |c }{BSC}
 \\ 
\hline
\hline
\multicolumn{1}{ c|  }{Road} & \textbf{128$\pm$10} & 180$\pm$17& $\geq$ 500 & $\geq$ 500 \\ 
\multicolumn{1}{ c|  }{Grid} & \textbf{143$\pm$4} & 130$\pm$6 & $\geq$ 500 & 185$\pm$7 \\ 
\multicolumn{1}{ c|  }{Text} & \textbf{41$\pm$6} & 204$\pm$19 & 42$\pm$5 & 154$\pm$14 \\ 
\multicolumn{1}{ c|  }{Syn} & \textbf{142$\pm$11} & 144$\pm$15 & $\geq$ 500 & 237$\pm$19 \\ 
\hline
\end{tabular}
\vspace{-0.05in}
\end{table}

The tables show that, in all four datasets, k-m-DV partitions the PMO (containing 500 secret records) into sufficiently small subproblems and MP components, with the mean maximum size not exceeding 50. This decomposition facilitates efficient resolution of the individual problems using classical LP solvers such as the \texttt{dual-simplex} method \cite{Linear&Nonlinear}.

In contrast to k-m-DV, the size variation in the decomposed problems is higher for the other three methods. Specifically, the maximum problem size of the decomposed problems for k-m-adj and BSC is higher than 100, indicating a challenge in solving these larger problems efficiently. Both k-m-adj and BSC are less effective in balancing the size of decomposed problems, as they focus solely on the mDP constraint between neighboring records, without considering the overall constraint distribution of all the other pairs. While k-m-rec demonstrates comparable performance to k-m-DV when applied to the geo-location data in the road network/grid maps, and synthetic data, it falls short in achieving balanced partitioning for text data. This discrepancy arises from its limited capability of capturing the mDP constraints between records 
when the data is represented in a relatively high-dimensional (300-dimensional) space.

\vspace{-0.03in}
\subsection{Computational Efficiency}
\label{subsec:expEfficiency}
\vspace{-0.00in}

Table \ref{Tb:exp:time} and Table \ref{Tb:exp:convergency} compare the computation time of the BD framework using the four dataset partitioning algorithms, ``k-m-DV'', ``k-m-rec'', ``k-m-adj'', and ``BSC'', and two classic LP solvers, \texttt{dual-simplex} and \texttt{interior-point}, both of which are provided by the MATLAB LP toolbox \texttt{linprog} \cite{MATLABlinprog}. A detailed comparison of the BD convergence of the four partitioning algorithms using the four datasets is shown in Fig. \ref{fig:convergence_rome_downtown}--Fig. \ref{fig:convergence_random} in the supplementary file.

As shown in Table \ref{Tb:exp:time}, both classic LP methods stop prematurely without achieving the optimal solution as they exceed the maximum number of iterations (the parameter \texttt{MaxIter} is set by 10,000). Within the BD framework, on average, k-m-DV achieves the lowest computation time. This efficiency can be attributed to the balanced sizes of decomposed subproblems, as shown in Table \ref{Tb:exp:partitionSub} and Table \ref{Tb:exp:partitionMP}. As discussed in Section \ref{subsec:timecomplexity}, the computation time of BD in each iteration is significantly influenced by the computation time of the largest subproblem, which is relatively smaller when the size variation of decomposed problems is minimized. Additionally, a smaller subproblem, on average, needs fewer iterations to converge to a feasible solution within the BD framework (refer to Fig. \ref{fig:subproblemvsconvergence1} in the supplementary file, which demonstrates a positive correlation between the subproblem size and the number of iterations to find a feasible solution). This relationship is further supported by Table \ref{Tb:exp:convergency}, where it is evident that k-m-DV exhibits the fewest iterations to reach convergence, contributing to its lower computation time.

\DEL{
\begin{wrapfigure}{r}{0.24\textwidth}
\vspace{-0.10in}
\begin{minipage}{0.24\textwidth}
\centering
    \subfigure{
\includegraphics[width=1.00\textwidth, height = 0.13\textheight]{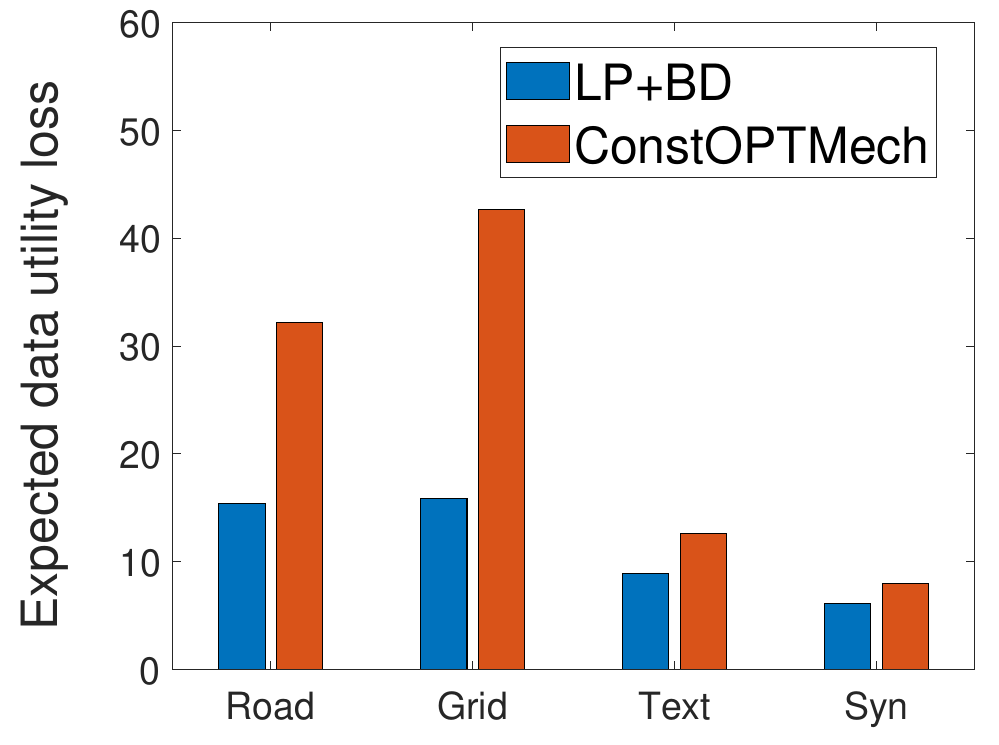}}
\vspace{-0.2in}
\caption{Data utility loss.}
\vspace{-0.05in}
\label{fig:scalability}
\end{minipage}
\end{wrapfigure}}

\begin{figure}[t]
\centering
\begin{minipage}{0.5\textwidth}
\centering
  \subfigure[\footnotesize Average computation time]{
\includegraphics[width=0.45\textwidth, height = 0.13\textheight]{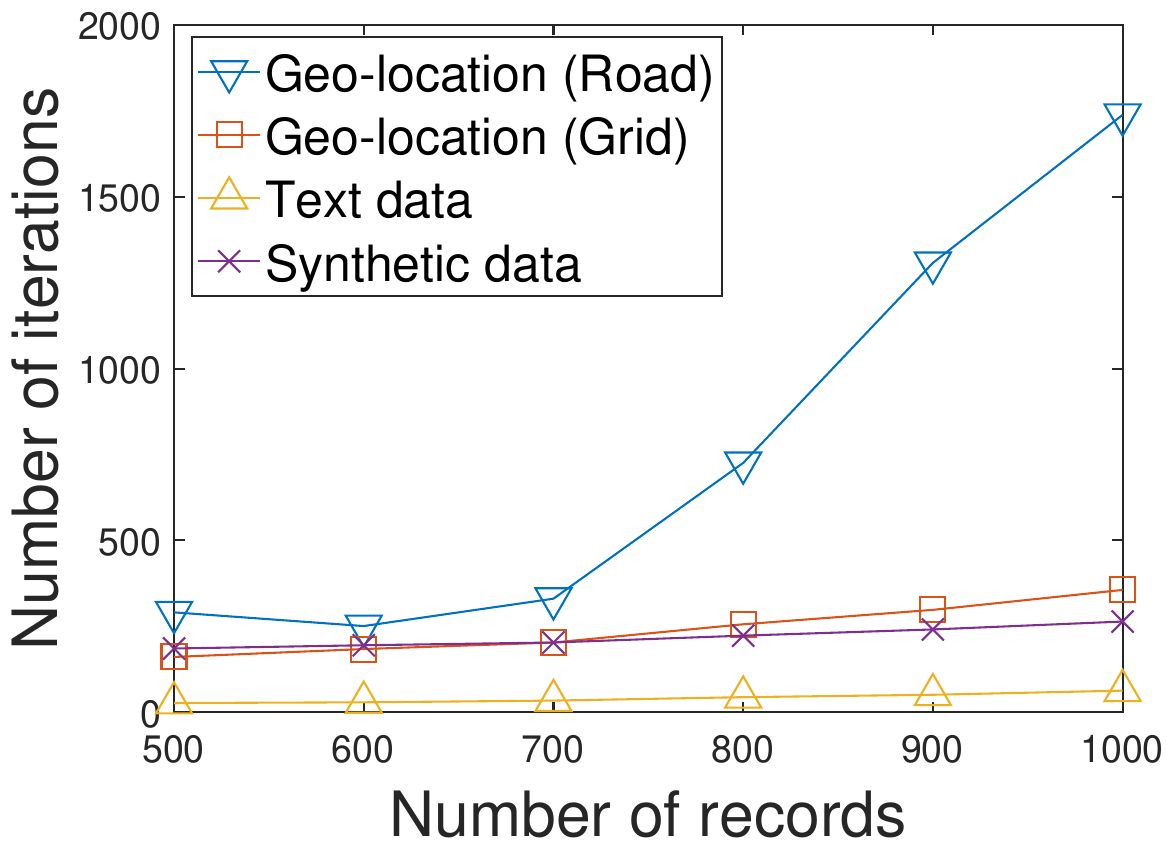}}
\label{}
\centering
  \subfigure[\footnotesize Avg. number of iterations]{
\includegraphics[width=0.45\textwidth, height = 0.13\textheight]{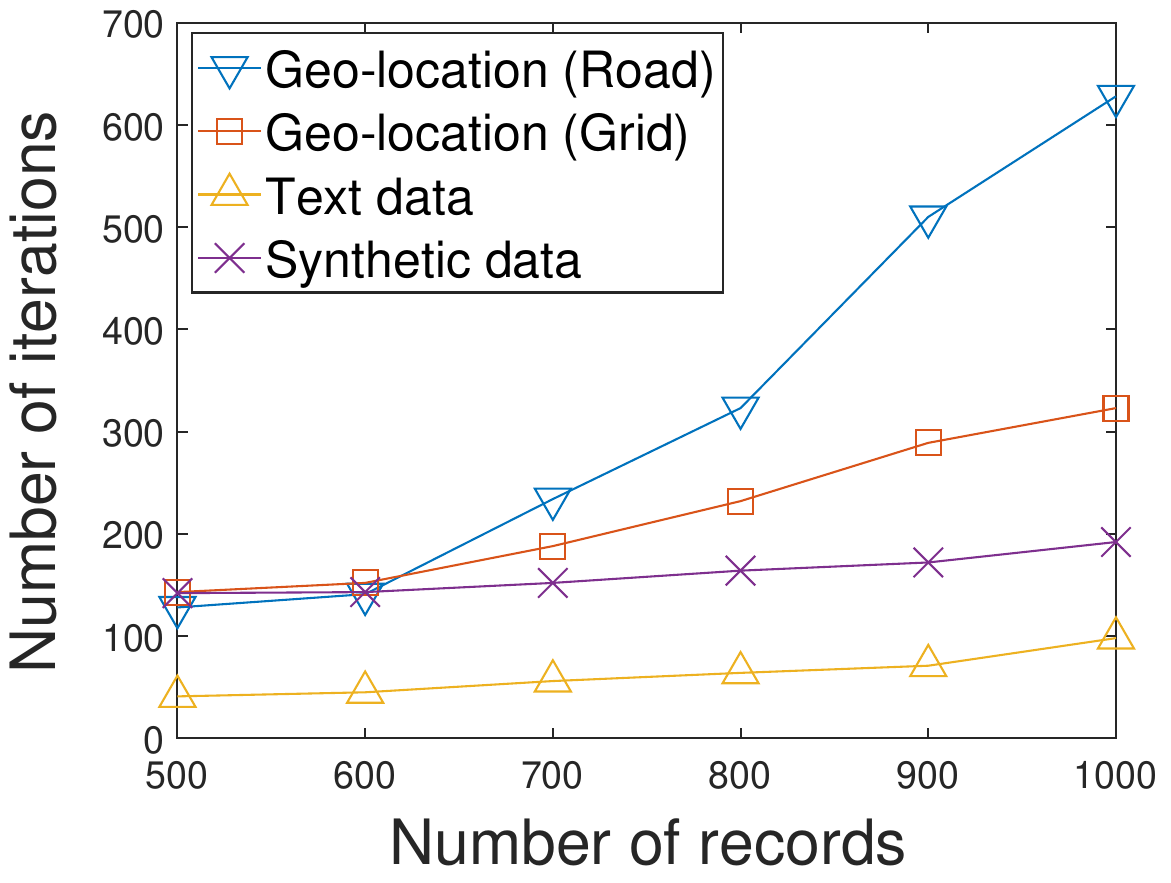}}
\label{}
\end{minipage}
\vspace{-0.12in}
\caption{Computation efficiency of k-m-DV in larger scale datasets.}
\label{fig:scalability}
\vspace{-0.12in}
\end{figure}
Finally, we expand the size of the secret dataset from 500 records to 1,000 records and assess the computational efficiency and number of iterations of ``k-m-DV'' in Fig. \ref{fig:scalability}(a)(b), respectively. Notably, the computation time for ``k-m-DV'' remains below 6 minutes when using grid maps, text data, and synthetic datasets. However, when using the road map location dataset, the computation time experiences a significant increase, reaching approximately 30 minutes when the dataset size is 1,000. This difference can be attributed to the uneven distribution of nodes in the large-scale road network, with low node density in suburban areas and high node density in downtown areas. This leads to some larger decomposed problems, resulting in a longer time to solve.

\vspace{-0.08in}
\section{Conclusion}
\vspace{-0.01in}

In this paper, we proposed to improve the scalability of LP-based mDP through secret dataset partitioning and BD. The experimental results, employing diverse datasets, demonstrate that the new computation approach expands the scale of the existing LP-based mDP by approximately 9 times. Additionally, our findings emphasize the \emph{significance of maintaining a balanced size between the decomposed subproblems and MP components when implementing BD}.

We identify several promising directions for further improving the proposed methods. First, while BD improves the scalability of metric-DP, its current convergence (which takes up to 650 iterations in our experiment) poses challenges for time-sensitive applications. A potential solution is to leverage \emph{reinforcement learning (RL)} to accelerate the convergence speed of BD by considering the cut selection of each subproblem in Stage 2 as a parameterized stochastic policy. 
The trained RL model is expected to identify a sequence of cuts given the fixed coefficients of each subproblem, eliminating the need for re-training with each new problem instance.

Second, recognizing that when the size of secret datasets is higher (e.g., over 1,000), the decomposed MP might remain too large for current LP solvers to handle. Therefore, a potential direction is to explore a combination of multiple decomposition techniques, such as Danzig-Wolfe decomposition and multi-layer BD decomposition, based on the mDP constraint features. 

\section{Acknowledgments}
This research is partially supported by U.S. NSF grants CNS2136948 and CNS-2313866.





\clearpage 
\appendix

\DEL{
\begin{eqnarray}
\min && \sum_{l=1}^N \mathbf{c}_{\mathcal{Y}_l} \mathbf{z}_{\mathcal{Y}_l} + \sum_{l=1}^N \mathbf{c}_{\mathcal{X}_l} \mathbf{z}_{\mathcal{X}_l} \\
\mathrm{s.t.} && \left[\mathbf{z}_{\mathcal{Y}_1}, ...,  \mathbf{z}_{\mathcal{Y}_M}\right]^{\top }\in \mathcal{Y} \\
&& \mathbf{A}_{\mathcal{X}_l}  \mathbf{z}_{\mathcal{X}_l}  + \mathbf{B}_{\mathcal{Y}_l} \mathbf{z}_{\mathcal{Y}_l} \geq \mathbf{b}_{\mathcal{X}_l, \mathcal{Y}_l} ~ i = 1, ..., N  \\ 
&& \mathbf{z}_{\mathcal{X}_l} \geq \mathbf{0}, ~ i = 1, ..., N 
\end{eqnarray}
where 
\begin{equation}
\mathbf{A}_{\mathcal{X}_l} = \left[\begin{array}{c}
-\mathbf{A}^{\mathrm{dp}}_{\mathcal{X}_l} 
\\
-\mathbf{A}^{\mathrm{um}}_{\mathcal{X}_l}  \\
\mathbf{A}^{\mathrm{um}}_{\mathcal{X}_l} \\
\end{array}\right], \mathbf{B}_{\mathcal{Y}_l} = \left[\begin{array}{c}
-\mathbf{B}^{\mathrm{dp}}_{\mathcal{Y}_l} \\
 \mathbf{0}_{\mathcal{Y}_l}  \\
\mathbf{0}_{\mathcal{Y}_l} \\
\end{array}\right],
\end{equation}
and
\begin{equation}
\mathbf{b}_{\mathcal{X}_l, \mathcal{Y}_l} = \left[\begin{array}{c}
\mathbf{0}_{\mathcal{R}_l} \\
-\mathbf{1}_{\mathcal{Y}_l}  \\
\mathbf{1}_{\mathcal{Y}_l} 
\end{array}\right] 
\end{equation}

\DEL{
Let 
\begin{equation}
f\left(\mathbf{z}_{\mathcal{Y}_l}\right) = \min_{\mathbf{z}_{\mathcal{X}_l} \geq \mathbf{0}} \left\{\mathbf{c}_{\mathcal{X}_l} \mathbf{z}_{\mathcal{X}_l}\left|\mathbf{A}_{\mathcal{X}_l}  \mathbf{z}_{\mathcal{X}_l}  \geq \mathbf{b}_{\mathcal{X}_l, \mathcal{Y}_l} - \mathbf{B}_{\mathcal{Y}_l} \mathbf{z}_{\mathcal{Y}_l}\right.\right\}
\end{equation}
or 
\begin{equation}
f\left(\mathbf{z}_{\mathcal{Y}_l}\right) = \max_{\mathbf{a}_{\mathcal{X}_l} \geq \mathbf{0}} \left\{\left(\mathbf{b}_{\mathcal{X}_l, \mathcal{Y}_l} -  \mathbf{B}_{\mathcal{Y}_l} \mathbf{z}_{\mathcal{Y}_l}\right)^{\top} \mathbf{a}_{\mathcal{X}_l}\left|\mathbf{A}_{\mathcal{X}_l}^{\top} \mathbf{a} \leq \mathbf{c}_{\mathcal{X}_l}\right.\right\}
\end{equation}}

\begin{equation}
\left[\begin{array}{cc}
\mathbf{A}^{\mathrm{dp}}_{\mathbf{z}_{\mathcal{X}_l} }
& \mathbf{B}^{\mathrm{dp}}_{\mathbf{z}_{\mathcal{Y}_l}} \\
\mathbf{A}^{\mathrm{um}}_{\mathbf{z}_{\mathcal{X}_l} } & \mathbf{0}_{\mathbf{z}_{\mathcal{Y}_l} } \\
\mathbf{0}_{\mathbf{z}_{\mathcal{X}_l} } & \mathbf{B}^{\mathrm{um}}_{\mathbf{z}_{\mathcal{Y}_l} } \\
-\mathbf{A}^{\mathrm{um}}_{\mathbf{z}_{\mathcal{X}_l} } & \mathbf{0}_{\mathbf{z}_{\mathcal{Y}_l} }\\
\mathbf{0}_{\mathbf{z}_{\mathcal{X}_l} } & -\mathbf{B}^{\mathrm{um}}_{\mathbf{z}_{\mathcal{Y}_l} }
\end{array}\right] \left[\begin{array}{l}
\mathbf{z}_{\mathcal{R}_l} \\ 
\mathbf{z}_{\mathcal{Y}_l} 
\end{array}\right] \leq \left[\begin{array}{l}
\mathbf{0}_{\mathcal{R}_l} \\
\mathbf{1}_{\mathbf{z}_{\mathcal{X}_l} } \\
\mathbf{1}_{\mathbf{z}_{\mathcal{Y}_l} } \\
-\mathbf{1}_{\mathbf{z}_{\mathcal{X}_l} } \\
-\mathbf{1}_{\mathbf{z}_{\mathcal{Y}_l} }
\end{array}\right] 
\end{equation}}

\section{Detailed Explanation of the Notations}
\subsection{Main notations used throughout the paper}
\vspace{-0.15in}
\begin{table}[h]
\caption{Main notations and their explanations.}
\vspace{-0.10in}
\label{Tb:Notationmodel}
\centering
\small 
\begin{tabular}{l l}
\hline
\hline
Sym.                  & Description \\
\hline
$N$                     & The number of records in the secret data set $\mathcal{R}$ \\ 
$K$                     & The number of records in the perturbed data set $\mathcal{O}$ \\ 
$M$                     & The number of subproblems \\ 
$\mathcal{R}$           & The secret dataset $\mathcal{R} = \left\{r_1, ..., r_N\right\}$, where $r_i$ denotes \\ 
& the $i$th record \\ 
$\mathcal{O}$           & The perturbed dataset $\mathcal{O} = \left\{o_1, ..., o_K\right\}$ \\ 
$\mathcal{E}$ & The set of neighboring records in $\mathcal{R}$ \\ 
$\mathcal{G}$                     & The mDP graph $\mathcal{G} = (\mathcal{R}, \mathcal{E})$ \\
$Q(\cdot)$              & The perturbation function \\
$\mathbf{Z}$            & The perturbation matrix \\ 
$z_{i,k}$               & The probability of selecting $o_k$ as the perturbed record  \\ 
                        & given the real record $r_i$ \\ 
$\epsilon$              & The privacy budget of mDP\\
$\eta$                  & The mDP neighbor threshold\\
$p_i$                   & The prior probability that a secret record is $r_i$ \\ 
$c_{i,k}$               & The data utility loss caused by the perturbed data $o_k$ \\ 
& given the real record $r_i$ \\
$d_{i,j}$               & The distance between the two records $r_i$ and $r_j$ \\
$\mathcal{G}_l$ & The $l$th subgraph of the mDP graph $\mathcal{G}_l = (\mathcal{R}_l, \mathcal{E}_l)$ \\ 
$\mathcal{R}_l$         & The set of secret records managed by subproblem $l$ \\ 
$\mathcal{X}_l$         & The set of internal records in $\mathcal{R}_l$ \\ 
$\mathcal{Y}_l$         & The set of boundary records in $\mathcal{R}_l$ \\ 
$\mathbf{z}_{\mathcal{R}_l}$    & The perturbation vectors of all the records in $\mathcal{R}_l$ \\ 
$\mathbf{z}_{\mathcal{X}_l}$    & The perturbation vectors of the internal records in $\mathcal{X}_l$ \\
$\mathbf{z}_{\mathcal{Y}_l}$    & The perturbation vectors of the boundary records in $\mathcal{Y}_l$ \\
\hline
\end{tabular}
\normalsize
\end{table}
\vspace{-0.00in}
\vspace{-0.1in}
\subsection{Detailed Explanation of the Notations in the PMO Reformulation}
\label{subsec:PMONotationExp}
\begin{itemize}
\item The vector $\mathbf{c}_{\mathcal{Y}_l} = \left\{\mathbf{c}_i \left|r_i \in \mathcal{Y}_l\right.\right\}$ includes the utility loss coefficients of all the records $r_i \in \mathcal{Y}_l$, where each $\mathbf{c}_i = [c_{i, 1}, ..., c_{i,K}]$ includes the utility loss coefficients of the perturbation vector $\mathbf{z}_i = [z_{i, 1}, ..., z_{i,K}]$; Similarly, the vector $\mathbf{c}_{\mathcal{X}_l} = \left\{\mathbf{c}_i \left|r_i \in \mathcal{X}_l\right.\right\}$ includes the utility loss coefficients of all the records $r_i \in \mathcal{X}_l$. 
\item The coefficient matrices $\mathbf{A}_{\mathcal{X}_l}$ and $\mathbf{B}_{\mathcal{Y}_l}$ can be written as
\vspace{-0.00in}
\begin{equation}
\mathbf{A}_{\mathcal{X}_l}  = -\left[\begin{array}{c}\mathbf{A}_{\mathcal{X}_l}^{\mathrm{mDP}} \\ \mathbf{A}_{\mathcal{X}_l}^{\mathrm{um}} \\
-\mathbf{A}_{\mathcal{X}_l}^{\mathrm{um}}\end{array}\right], 
\mathbf{B}_{\mathcal{Y}_l}  = -\left[\begin{array}{c}\mathbf{B}_{\mathcal{Y}_l}^{\mathrm{mDP}} \\ \mathbf{B}_{\mathcal{Y}_l}^{\mathrm{um}} \\
-\mathbf{B}_{\mathcal{Y}_l}^{\mathrm{um}}\end{array}\right]
\end{equation}
The coefficient vector $\mathbf{b}_{\mathcal{X}_l, \mathcal{Y}_l}$ can be written as 
\begin{equation}
\mathbf{b}_{\mathcal{X}_l, \mathcal{Y}_l}  = -\left[\begin{array}{c}\mathbf{b}_{\mathcal{X}_l, \mathcal{Y}_l}^{\mathrm{mDP}} \\ \mathbf{b}_{\mathcal{X}_l, \mathcal{Y}_l}^{\mathrm{um}} \\ -\mathbf{b}_{\mathcal{X}_l, \mathcal{Y}_l}^{\mathrm{um}} \end{array}\right],
\end{equation}
In the above notations, the supscripts ``mDP'' and ``um'' mean the ``mDP constraints'' and the ``unit measure constraints'', respectively. They form the constraint matrix for perturbation vectors $\mathbf{z}_{\mathcal{X}_l}$ and $\mathbf{z}_{\mathcal{Y}_l}$: 
\begin{equation}
\left[\begin{array}{cc} \mathbf{A}_{\mathcal{X}_l}^{\mathrm{mDP}} & \mathbf{B}_{\mathcal{Y}_l}^{\mathrm{mDP}} \\ \mathbf{A}_{\mathcal{X}_l}^{\mathrm{um}} & \mathbf{B}_{\mathcal{Y}_l}^{\mathrm{um}} \\
-\mathbf{A}_{\mathcal{X}_l}^{\mathrm{um}} & -\mathbf{B}_{\mathcal{Y}_l}^{\mathrm{um}}\end{array}\right] \left[\begin{array}{c} \mathbf{z}_{\mathcal{X}_l} \\ \mathbf{z}_{\mathcal{Y}_l} \end{array}\right] \leq \left[\begin{array}{c}\mathbf{b}_{\mathcal{X}_l, \mathcal{Y}_l}^{\mathrm{mDP}} \\ \mathbf{b}_{\mathcal{X}_l, \mathcal{Y}_l}^{\mathrm{um}} \\ -\mathbf{b}_{\mathcal{X}_l, \mathcal{Y}_l}^{\mathrm{um}} \end{array}\right].
\end{equation}
Specifically, the block matrix $\left[\mathbf{A}_{\mathcal{X}_l}^{\mathrm{mDP}}, \mathbf{B}_{\mathcal{Y}_l}^{\mathrm{mDP}}\right]$ includes the mDP constraints between the boundary records in $\mathcal{R}_{l}$: 
\begin{eqnarray}
\label{eq:}
\scriptsize   
\nonumber && \left[\mathbf{A}_{\mathcal{X}_l}^{\mathrm{mDP}}, \mathbf{B}_{\mathcal{Y}_l}^{\mathrm{mDP}}\right] \\ \nonumber 
&=& \scriptsize  
\left[\begin{array}{lcccr} 
\ddots & \cdots     & \cdots  & \cdots                              & \iddots \\ 
\cdots & 1   & \cdots  & -e^{{\epsilon d_{i,j}}}    & \cdots \\ 
\cdots & -e^{{\epsilon d_{i,j}}}    & \cdots  & 1    & \cdots \\
\iddots & \cdots     & \cdots  & \cdots                              & \ddots \\ \end{array}\right] 
\hspace{-0.06in}
\begin{array}{l}
\left\}
\begin{array}{l}
\forall r_i, r_j \in \mathcal{R}_{l} \\
\mbox{s.t.}~d_{i, j} \leq \eta
\end{array}
\right.
\\
\end{array}
\end{eqnarray}
$\left[\mathbf{A}_{\mathcal{X}_l}^{\mathrm{um}}, \mathbf{B}_{\mathcal{Y}_l}^{\mathrm{um}}\right]$ includes $|\mathcal{R}_{l}|$ rows, where each row corresponds to the unit measure constraint of a record $r_i \in \mathcal{R}_{l}$. $\mathbf{A}_{\mathcal{X}_l}^{\mathrm{mDP}}$ and $\mathbf{A}_{\mathcal{X}_l}^{\mathrm{um}}$ include the coefficients of $\mathbf{z}_{\mathcal{X}_l}$ in the constraint matrices. $\mathbf{B}_{\mathcal{Y}_l}^{\mathrm{mDP}}$ and $\mathbf{B}_{\mathcal{Y}_l}^{\mathrm{um}}$ include the coefficients of $\mathbf{z}_{\mathcal{Y}_l}$ in the constraint matrices. 

$\mathbf{b}_{\mathcal{X}_l, \mathcal{Y}_l}^{\mathrm{mDP}}$ is an all-zeros vector, which corresponds to the right-hand side coefficients of the constraint matrix $\left[\mathbf{A}_{\mathcal{X}_l}^{\mathrm{mDP}}, \mathbf{B}_{\mathcal{Y}_l}^{\mathrm{mDP}}\right]$ in the LP formulation. 

$\mathbf{b}_{\mathcal{X}_l, \mathcal{Y}_l}^{\mathrm{um}}$ is an all-ones vector, which corresponds to the right-hand side coefficients of the constraint matrix $\left[\mathbf{A}_{\mathcal{X}_l}^{\mathrm{um}}, \mathbf{B}_{\mathcal{Y}_l}^{\mathrm{um}}\right]$ in the LP formulation.
Note that 
\begin{equation}
\left[\begin{array}{cc} \mathbf{A}_{\mathcal{X}_l}^{\mathrm{um}} & \mathbf{B}_{\mathcal{Y}_l}^{\mathrm{um}} \\
-\mathbf{A}_{\mathcal{X}_l}^{\mathrm{um}} & -\mathbf{B}_{\mathcal{Y}_l}^{\mathrm{um}}\end{array}\right] \left[\begin{array}{c} \mathbf{z}_{\mathcal{X}_l} \\ \mathbf{z}_{\mathcal{Y}_l} \end{array}\right] \leq \left[\begin{array}{c}\mathbf{b}_{\mathcal{X}_l, \mathcal{Y}_l}^{\mathrm{um}} \\ -\mathbf{b}_{\mathcal{X}_l, \mathcal{Y}_l}^{\mathrm{um}} \end{array}\right].
\end{equation}
is equivalent to
\vspace{-0.00in}
\begin{equation}
\left[\begin{array}{cc} \mathbf{A}_{\mathcal{X}_l}^{\mathrm{um}} & \mathbf{B}_{\mathcal{Y}_l}^{\mathrm{um}} \end{array}\right] \left[\begin{array}{c} \mathbf{z}_{\mathcal{X}_l} \\ \mathbf{z}_{\mathcal{Y}_l} \end{array}\right] = \mathbf{b}_{\mathcal{X}_l, \mathcal{Y}_l}^{\mathrm{um}}.
\end{equation}

\item $\mathbf{B}_{\mathcal{Y}_1, ..., \mathcal{Y}_M}$ includes the constraints of all the boundary records in $\mathcal{Y}_1, ..., \mathcal{Y}_M$, composed of three parts
\begin{equation}
\mathbf{B}_{\mathcal{Y}_1, ..., \mathcal{Y}_M}  = -\left[\begin{array}{c}\mathbf{B}_{\mathcal{Y}_1, ..., \mathcal{Y}_M}^{\mathrm{mDP}} \\ \mathbf{B}_{\mathcal{Y}_1, ..., \mathcal{Y}_M}^{\mathrm{um}} \\ 
-\mathbf{B}_{\mathcal{Y}_1, ..., \mathcal{Y}_M}^{\mathrm{um}}\end{array}\right]
\end{equation}
where $\mathbf{B}_{\mathcal{Y}_1, ..., \mathcal{Y}_M}^{\mathrm{mDP}}$ includes the mDP constraints between any pair of boundary records in $\mathcal{Y}_1, ..., \mathcal{Y}_M$: 
\normalsize 
\small
\begin{eqnarray}
&& \mathbf{B}_{\mathcal{Y}_1, ..., \mathcal{Y}_M}^{\mathrm{mDP}}  \\ \nonumber 
&=& \scriptsize  
\left[\begin{array}{lcccr} 
\ddots & \cdots     & \cdots  & \cdots                              & \iddots \\ 
\cdots & 1   & \cdots  & -e^{{\epsilon d_{i,j}}}    & \cdots \\ 
\cdots & -e^{{\epsilon d_{i,j}}}    & \cdots  & 1    & \cdots \\
\iddots & \cdots     & \cdots  & \cdots                              & \ddots \\ \end{array}\right] 
\hspace{-0.06in}
\begin{array}{l}
\left\}
\begin{array}{l}
\forall r_i, r_j \in \cup_l\mathcal{Y}_{l} \\
\mbox{s.t.}~d_{i, j} \leq \eta
\end{array}
\right.
\\
\end{array}
\end{eqnarray} 
\normalsize 
and $\mathbf{B}_{\mathcal{Y}_1, ..., \mathcal{Y}_M}^{\mathrm{um}}$ includes $|\cup_l\mathcal{Y}_{l}|$ rows, where each row corresponds to the unit measure constraint of a record $r_i \in \cup_l \mathcal{Y}_{l}$. 
\item $\mathbf{b}_{\mathcal{Y}_1, ..., \mathcal{Y}_M}$ has three components: 
$$
\mathbf{b}_{\mathcal{Y}_1, ..., \mathcal{Y}_M}  = -\left[\begin{array}{c}\mathbf{b}_{\mathcal{Y}_1, ..., \mathcal{Y}_M}^{\mathrm{mDP}} \\ \mathbf{b}_{\mathcal{Y}_1, ..., \mathcal{Y}_M}^{\mathrm{um}} \\ -\mathbf{b}_{\mathcal{Y}_1, ..., \mathcal{Y}_M}^{\mathrm{um}} \end{array}\right],$$
where 
$\mathbf{b}_{\mathcal{Y}_1, ..., \mathcal{Y}_M}^{\mathrm{mDP}}$ is an all-zeros vector, which corresponds to the right-hand side coefficients of the constraint matrix $\mathbf{B}_{\mathcal{Y}_1, ..., \mathcal{Y}_M}^{\mathrm{mDP}}$, and $\mathbf{b}_{\mathcal{Y}_1, ..., \mathcal{Y}_M}^{\mathrm{um}}$ is an all-ones vector, which corresponds to the right-hand side coefficients of the constraint matrix $\mathbf{B}_{\mathcal{Y}_1, ..., \mathcal{Y}_M}^{\mathrm{um}}$. 
Note that 
\begin{equation}
\left[\begin{array}{c}\mathbf{B}_{\mathcal{Y}_1, ..., \mathcal{Y}_M}^{\mathrm{um}} \\ 
-\mathbf{B}_{\mathcal{Y}_1, ..., \mathcal{Y}_M}^{\mathrm{um}}\end{array}\right] \left[\begin{array}{c} \mathbf{z}_{\mathcal{Y}_1} \\ \vdots \\ \mathbf{z}_{\mathcal{Y}_M} \end{array}\right] \leq \left[\begin{array}{c} \mathbf{b}_{\mathcal{Y}_1, ..., \mathcal{Y}_M}^{\mathrm{um}} \\ -\mathbf{b}_{\mathcal{Y}_1, ..., \mathcal{Y}_M}^{\mathrm{um}} \end{array}\right]
\end{equation} 
is equivalent to 
\begin{equation}
\mathbf{B}_{\mathcal{Y}_1, ..., \mathcal{Y}_M}^{\mathrm{um}} \left[\begin{array}{c} \mathbf{z}_{\mathcal{Y}_1} \\ \vdots \\ \mathbf{z}_{\mathcal{Y}_M} \end{array}\right] = \mathbf{b}_{\mathcal{Y}_1, ..., \mathcal{Y}_M}^{\mathrm{um}}.
\end{equation} 
\end{itemize}

\DEL{
\begin{equation}
\left[\begin{array}{c}
-\mathbf{A}^{\mathrm{dp}}_{\mathbf{z}_{\mathcal{X}_l} }
\\
-\mathbf{A}^{\mathrm{um}}_{\mathbf{z}_{\mathcal{X}_l} } \\
\mathbf{0}_{\mathbf{z}_{\mathcal{X}_l} } \\
\mathbf{A}^{\mathrm{um}}_{\mathbf{z}_{\mathcal{X}_l} }\\
\mathbf{0}_{\mathbf{z}_{\mathcal{X}_l} }
\end{array}\right] 
\mathbf{z}_{\mathcal{X}_l}  + \left[\begin{array}{ll}
-\mathbf{B}^{\mathrm{dp}}_{\mathbf{z}_{\mathcal{Y}_l}} \\
 \mathbf{0}_{\mathbf{z}_{\mathcal{Y}_l} } \\
-\mathbf{B}^{\mathrm{um}}_{\mathbf{z}_{\mathcal{Y}_l} } \\
\mathbf{0}_{\mathbf{z}_{\mathcal{Y}_l} }\\
\mathbf{B}^{\mathrm{um}}_{\mathbf{z}_{\mathcal{Y}_l} }
\end{array}\right] 
\mathbf{z}_{\mathcal{Y}_l} 
\geq \left[\begin{array}{l}
\mathbf{0}_{\mathcal{R}_l} \\
-\mathbf{1}_{\mathbf{z}_{\mathcal{X}_l} } \\
-\mathbf{1}_{\mathbf{z}_{\mathcal{Y}_l} } \\
\mathbf{1}_{\mathbf{z}_{\mathcal{X}_l} } \\
\mathbf{1}_{\mathbf{z}_{\mathcal{Y}_l} }
\end{array}\right] 
\end{equation}}

\section{Omitted Proofs}
\label{sec:proofs}
\subsection{Proof of Property \ref{property:mDPcompo} \\
(Independent Computation of mDP Components)}
\label{subsec:proof:mDPcompo}
\begin{proof}
Given the mDP graph has $m$ components $\mathcal{C}_1, ..., \mathcal{C}_m$, the corresponding LP formulation of PMO is 
\normalsize
\small 
\vspace{-0.0in}
\begin{eqnarray}
\label{eq:independentObj}
\min && \textstyle  \sum_{l=1}^m \mathbf{c}_{\mathcal{R}_l} \mathbf{z}_{\mathcal{R}_l} \\ 
\mathrm{s.t.} && \mbox{mDP constraints of each $\mathbf{z}_{i}$ in $\mathcal{R}_l$ are satisfied,}~\forall l \\ 
&& \mbox{Unit measure of each $\mathbf{z}_{i}$ in $\mathcal{R}_l$ is satisfied,}~\forall l\\  \label{eq:independentconstr}
&& 0 \leq z_{i,k} \leq 1, \forall (r_i, o_k) \in \mathcal{R}_l \times \mathcal{O},~\forall l. 
\end{eqnarray}
\normalsize
Since both mDP constraints and unit measure constraints are disjoint between the perturbation vectors across different subsets $\mathcal{R}_1$, ..., $\mathcal{R}_m$, the LP formulated in Equ. (\ref{eq:independentObj})--(\ref{eq:independentconstr}) is equivalent to solving the following $\mathrm{Sub}_l$ independently ($l=1, ..., m$):  
\normalsize
\small 
\vspace{-0.05in}
\begin{eqnarray}
\min && \textstyle  \mathbf{c}_{\mathcal{R}_l} \mathbf{z}_{\mathcal{R}_l} \\ 
\mathrm{s.t.} && \mbox{mDP constraints of each $\mathbf{z}_{i}$ in $\mathcal{R}_l$ are satisfied} \\  
&& \mbox{Unit measure of each $\mathbf{z}_{i}$ in $\mathcal{R}_l$ is satisfied} \\ 
&& 0 \leq z_{i,k} \leq 1, \forall (r_i, o_k) \in \mathcal{R}_l \times \mathcal{O}. 
\end{eqnarray}
\normalsize
The proof is completed. 
\end{proof}

\subsection{Proof of Property \ref{property:chainrule}} 
\label{subsec:proof:chainrule}
\begin{proof}
We let $\{r_{i}, r_{l_1}, r_{l_2}, ..., r_{l_{n-1}}, r_{l_n}, r_{j}\}$ represent the sequence of records in the shortest path between $r_i$ and $r_j$. Since each pair of adjacent records satisfy mDP, for each $o_k \in \mathcal{O}$, we have 
\vspace{-0.00in}
\normalsize
\begin{eqnarray}
&& \frac{z_{i,k}}{z_{l_1,k}} \leq e^{\epsilon d_{i, {l_1}}}, \frac{z_{l_n,k}}{z_{j,k}} \leq e^{\epsilon d_{{l_n}, {j}}},\\
&& \frac{z_{l_m,k}}{z_{l_{m+1}},k} \leq e^{\epsilon d_{{l_m}, {l_{m+1}}}}~(m = 1, ..., n-1), 
\end{eqnarray}
\normalsize
from which we can derive that 
\begin{eqnarray} 
\nonumber \frac{z_{i,k}}{z_{j,k}} &=& \frac{z_{i,k}}{z_{l_1,k}} \prod_{m=1}^{n-1} \frac{z_{l_m,k}}{z_{l_{m+1}},k} \frac{z_{l_n,k}}{z_{j,k}} \\ 
&\leq&  e^{\epsilon d_{i, {l_1}}} \prod_{m=1}^{n-1} e^{\epsilon d_{{l_m}, {l_{m+1}}}} e^{\epsilon d_{{l_n}, {j}}} \\   \nonumber 
&=& e^{\epsilon \left(d_{i, {l_1}} + \sum_{m=1}^{n-1} d_{{l_m}, {l_{m+1}}} + d_{{l_n}, j}\right)} 
\\
&=& e^{\epsilon D_{i,j}}
\end{eqnarray}
\normalsize
The proof is completed. 
\end{proof}

\section{Benchmarks of Secret Dataset Partitioning}
\label{sec:benchmarks}
In this section, we introduce the details of the three benchmarks of dataset partitioning, ``k-m-rec'' (\textbf{Section \ref{subsec:benchmarks_kmean_rec}}), ``k-m-adj'' (\textbf{Section \ref{subsec:benchmarks_kmean_adj}}), and ``BSC'' (\textbf{Section \ref{subsec:benchmarks_BSC}}). 

\subsection{K-means based on records (k-m-rec)}
\label{subsec:benchmarks_kmean_rec}
Each record $r_i$ is represented by vector $\mathbf{v}_i = [v_{i,1}, ..., v_{i, L}]$, where $L$ is the dimension of the vectors. For instance, a geo-location record in the road network can be represented as a 2-dimensional vector (longitude, latitude), and a word can be represented as a 300-dimensional vector using the \texttt{word2vec} function \cite{word2vec}. To determine the partitions $\mathcal{R}_1, \ldots, \mathcal{R}_M$, we formulate the following \emph{k-means clustering} formulation:
\vspace{-0.00in}
\begin{equation}
\label{eq:kmean-rec}
\min ~ \sum_{l=1}^M \sum_{r_i\in \mathcal{R}_l} \|\mathbf{v}_i-\boldsymbol{\mu}_l\|_2
\end{equation}
where $\boldsymbol{\mu}_l$ represents the \emph{centroid} of the vectors in $\mathcal{R}_l$, i.e., $\boldsymbol{\mu}_l = \frac{\sum_{r_j \in \mathcal{R}_l}\mathbf{v}_j}{|\mathcal{R}_l|}$ ($l = 1, \ldots, M$). 

\subsection{K-means based on adjacency matrix (k-m-adj)}
\label{subsec:benchmarks_kmean_adj}

Each record $r_i$ is embedded by its adjacency vector to the neighbors $\mathbf{w}_i = [w_{i,1}, ..., w_{i, N}]$, where each indicator variable $w_{i,j}$ represents whether the record $r_j$ is a neighbor of $r_i$ in the mDP graph, i.e., 
\begin{equation}
\label{eq:w}
w_{i,j} = \left\{\begin{array}{ll} 1 & \mbox{if $r_j$ is a neighbor of $r_i$} \\
0 & \mbox{if $r_j$ is not a neighbor of $r_i$} \end{array}\right.
\end{equation}
We formulate the following \emph{k-means clustering} problem to determine the partitions $\mathcal{R}_1, \ldots, \mathcal{R}_M$:
\vspace{-0.00in}
\begin{equation}
\label{eq:kmean-adj}
\min ~ \sum_{l=1}^M \sum_{r_i\in \mathcal{R}_l} \|\mathbf{w}_i-\boldsymbol{\mu}_l\|_2
\end{equation}
where $\boldsymbol{\mu}_l$ represents the \emph{centroid} of the adjacency vectors in $\mathcal{R}_l$, i.e., $\boldsymbol{\mu}_l = \frac{\sum_{r_j \in \mathcal{R}_l}\mathbf{w}_j}{|\mathcal{R}_l|}$ ($l = 1, \ldots, M$).

Note that, similar to the objective function in Equ. (\ref{eq:kmean}), the objective functions in Equ. (\ref{eq:kmean-rec}) and Equ. (\ref{eq:kmean-adj}) attain a lower value when the records within each subset $\mathcal{R}_l$ are strongly connected and the sizes of $\mathcal{R}_1, ..., \mathcal{R}_M$ are well-balanced.

\subsection{Balanced Spectral Clustering (BSC)}
\label{subsec:benchmarks_BSC}
The problem of minimizing the number of mDP constraints (edges) across subsets $\mathcal{R}_1$, ..., $\mathcal{R}_M$, and balancing the number of internal records of all the subsets can be formulated as the following \emph{RaioCut problem} \cite{Ng-NIPS2001}: 
\vspace{-0.05in}
\begin{equation}
\label{eq:partitionobj}
\min~ 
\sum_{l=1}^M \frac{\mathrm{CR}\left(\mathcal{R}_l, \overline{\mathcal{R}}_l\right)}{|\mathcal{R}_l|} 
\end{equation}
where $\overline{\mathcal{R}}_l$ denotes the complement of $\mathcal{R}_l$ (i.e. $\overline{\mathcal{R}}_l = \mathcal{R}\backslash \mathcal{R}_l$), and $\mathrm{CR}\left(\mathcal{R}_l, \overline{\mathcal{R}}_l\right)$ denote the number of edges (cuts) across $\mathcal{R}_l$ and other subsets $\overline{\mathcal{R}}_l$. \emph{RaioCut problem} is known NP-hard and its relaxed versions can be solved by \emph{Spectral clustering} \cite{vonluxburg2007tutorial}. Given the adjacence matrix $\mathbf{W} = \left\{w_{i,j}\right\}_{N \times N}$ of the mDP graph $\mathcal{G}$, the \emph{degree matrix} of $\mathbf{W}$ is denoted by $\textbf{D}$, which is a diagonal matrix with the degrees of nodes $r_1$, ..., $r_n$ on the diagonal. Then, the \emph{unnormalized graph Laplacian matrix} $\mathbf{L}$ is given by $\mathbf{L} = \mathbf{D} - \mathbf{W}$. 

We define $M$ indicator vectors
$\mathbf{f}_l = [f_{1,l}, . . . , f_{N, l}]^{\top}$, where 
\begin{equation}
\label{eq:f}
f_{i,l} = \left\{\begin{array}{ll}
\frac{1}{\sqrt{|\mathcal{R}_l|}} & r_i \in \mathcal{R}_l  \\
0 & r_i \notin \mathcal{R}_l\end{array}\right. ~(i = 1, ..., N)
\end{equation}
and let the matrix $\mathbf{F} = [\mathbf{f}_1, ..., \mathbf{f}_M]$ contain the $M$ indicator vectors. Since the columns in $\mathbf{F}$ are orthonormal to each other,  $\mathbf{F}^{\top}\mathbf{F} = \mathbf{I}_M$, where $\mathbf{I}_M$ is an identity matrix. Moreover, the objective function in Equ. (\ref{eq:partitionobj}) can be rewritten as $
\sum_{l=1}^M \frac{\mathrm{CR}\left(\mathcal{R}_l, \overline{\mathcal{R}}_l\right)}{|\mathcal{R}_l|} = \mathrm{Trace}\left(\mathbf{F}^{\top}\mathbf{L}\mathbf{F}\right)$ \cite{vonluxburg2007tutorial}.

Accordingly, relaxing $\mathbf{F}$ to the continuous region $\mathbb{R}^{N\times M}$, denoted by $\tilde{\mathbf{F}} = \left\{\tilde{f}_{i,j}\right\}_{N\times M}$, we can formulate the relaxed problem of the dataset partitioning (in Equ. (\ref{eq:partitionobj})) as: 
\begin{equation}
\label{eq:traceminrel}
\min_{\tilde{\mathbf{F}} \in \mathbb{R}^{N\times M}}~ 
\mathrm{Trace}\left(\tilde{\mathbf{F}}^{\top}\mathbf{L}\tilde{\mathbf{F}}\right)~\mathrm{s.t.}~ \tilde{\mathbf{F}}^{\top}\tilde{\mathbf{F}} = \mathbf{I}_M,  
\end{equation}
which is the standard \emph{trace minimization problem}. 
As per the \emph{Rayleigh-Ritz Theorem} \cite{lutkepohl1996handbook}, the optimal $\tilde{\mathbf{F}}$ is attained when its columns encompass the first $M$ eigenvectors of $\mathbf{L}$. To revert $\tilde{\mathbf{F}}$ to the discrete form $\mathbf{F}$ defined in Equ. (\ref{eq:f}), we apply k-means on the row of $\tilde{\mathbf{F}}$, i.e., 
\begin{equation}
\label{eq:kmean-BSC}
\min ~ \sum_{l=1}^M \sum_{r_i\in \mathcal{R}_l} \|\tilde{\mathbf{f}}^i-\boldsymbol{\mu}_l\|_2
\end{equation}
where $\tilde{\mathbf{f}}^i$ denotes the $i$th row of $\tilde{\mathbf{F}}$ and  $\boldsymbol{\mu}_l$ represents the \emph{centroid} of $\tilde{\mathbf{f}}^i$ in $\mathcal{R}_l$, i.e., $\boldsymbol{\mu}_l = \frac{\sum_{r_j \in \mathcal{R}_l}\tilde{\mathbf{f}}^j}{|\mathcal{R}_l|}$ ($l = 1, \ldots, M$). 

\DEL{
We can also obtain $|\mathcal{Y}_l|$ and $|\mathcal{X}_l|$ ($l = 1, ..., M$) 
\normalsize
\small 
\begin{eqnarray}
&& \textstyle |\mathcal{Y}_l| = \sum_{r_i \in \mathcal{R}_l}H\left(\textstyle \sum_{r_j \in \overline{\mathcal{R}}_l} w_{i,j}-0.5\right), \\
&& \textstyle |\mathcal{X}_l| = \sum_{r_i \in \mathcal{R}_l} H\left(\textstyle 0.5 - \sum_{r_j \in \overline{\mathcal{R}}_l} w_{i,j}\right)
\end{eqnarray}
\normalsize
where $H(\cdot)$ is the \emph{Heaviside step function}, i.e., $H(n) = 1$ if $n\geq 0$ and $H(n) = 0$ if $n < 0$.

\begin{equation}
\label{eq:frel}
\small \left\{\begin{array}{ll}
\frac{1 - \frac{|\mathcal{Y}_l|}{\Lambda|\mathcal{X}_l|}}{\Lambda|\mathcal{X}_l|} = \tilde{f}^2_{i,l} & r_i \in \mathcal{R}_l \mbox{ and } r_i \in \mathcal{X}_l  \\
\frac{1}{\Lambda|\mathcal{X}_l|} = \tilde{f}^2_{i,l} & r_i \in \mathcal{R}_l \mbox{ and } r_i \in \mathcal{Y}_l  \\
0 & r_i \notin \mathcal{R}_l\end{array}\right.
\end{equation}

\begin{equation}
\tilde{f}^2_{i,l}\Lambda^2|\mathcal{X}_l|^2 - \Lambda|\mathcal{X}_l| + |\mathcal{Y}_l| = 0 
\end{equation}
\begin{equation}
|\mathcal{X}_l| = \frac{1}{\Lambda \tilde{f}^2_{i,l} }
\end{equation}
implying that 
\begin{eqnarray}
|\mathcal{X}_l| &=& \frac{1}{\Lambda \tilde{f}^2_{i,l}} \\
|\mathcal{Y}_l| &=& \frac{1}{\tilde{f}^2_{i,l}} - \tilde{f}^2_{i,l}\Lambda^2|\mathcal{X}_l|^2
\end{eqnarray}}


\section{Additional Experimental Results}
\label{sec:addExp}

In this section, we provide the additional experimental results, including the data utility loss (\textbf{Section \ref{subsec:UL}}), the convergence speed of subproblems (\textbf{Section \ref{subsec:convergencespeed}}), the computation time of MP components and subproblems (\textbf{Section \ref{subsec:computetime}}), the size of decomposed problems (\textbf{Section \ref{subsec:datasetpartition}}), and examples of BD convergence (\textbf{Section \ref{subsec:BDconvergence}}). 

\subsection{Data Utility Loss}
\label{subsec:UL}

\textbf{Benchmark}. In Section \ref{sec:performance}, we have shown that our approach, labeled as ``BD'', can achieve a much lower computation time than the traditional LP-based methods (in this part, we only use the dataset partitioning algorithm ``k-m-DV'' in the BD framework). Notably, data perturbation using the \emph{exponential mechanism} also exhibits lower time complexity but at the expense of data utility. To demonstrate, in this part, we assess the expected data utility loss of our method by comparing it with an exponential mechanism \cite{McSherry-FOCS2007}, labeled as ``ExpMech''. Specifically, given a secret data set $\mathcal{R}$ and its distance measure $d$, ExpMech selects the perturbed record based on the following probability distribution:
\begin{equation}
\small \mathbb{P}\mathrm{r}\left\{Q(r_i) = o_k\right\} = \frac{e^{-\epsilon d_{i, k}/2}}{\sum_{l} e^{-\epsilon d_{i, l}/2}}.
\end{equation}

\noindent \textbf{Expected data utility loss}. Given the perturbation matrix $\mathbf{Z}$, the \emph{expected data utility loss} is calculated by $\sum_{i=1}^N \mathbf{c}_i \mathbf{z}_{i}$, where $\mathbf{c}_i = [p_i c_{i, 1}, ..., p_i c_{i,K}]$ and $p_i$ denotes the prior probability of the record being $r_i$. Here, for the geo-location data in the road network and grip maps, we consider the location-based services where the users are recommended to travel to a designated location based on their perturbed locations (e.g., hotel/restaurant recommendation). In this case, the utility loss can be measured by the discrepancy between the estimated travel cost and the actual travel cost to reach the designated locations.

Specifically, when applying the road network and the grid map datasets, we randomly deploy 100 destinations over the target region. For each destination located at $r_{\mathrm{dest}}$, we calculate the utility loss caused by a perturbed record $o_k$ given the real record $r_i$ by $\left|pd(r_i,r_{\mathrm{dest}}) - pd(o_k,r_{\mathrm{dest}})\right|$, where $pd(r_i,r_{\mathrm{dest}})$ (or $pd(o_k,r_{\mathrm{dest}})$) denotes the \emph{path distance} from the location $r_i$ (or $o_k$) to the destination $r_{\mathrm{dest}}$. The expected data utility loss $c_{i,k}$ is calculated by 
\begin{equation}
\textstyle 
\small c_{i,k} = \sum_{r_{\mathrm{dest}} \in \mathcal{R}}p_{\mathrm{dest}}\left|pd(r_i,r_{\mathrm{dest}}) - pd(o_k,r_{\mathrm{dest}})\right|.
\end{equation}
where $p_{\mathrm{dest}}$ is the prior probability of the destination being at location $r_{\mathrm{dest}}$. 

\begin{figure}[t]
\centering
\hspace{0.00in}
\begin{minipage}{0.225\textwidth}
  \subfigure[\small Road network]{
\includegraphics[width=1.00\textwidth, height = 0.13\textheight]{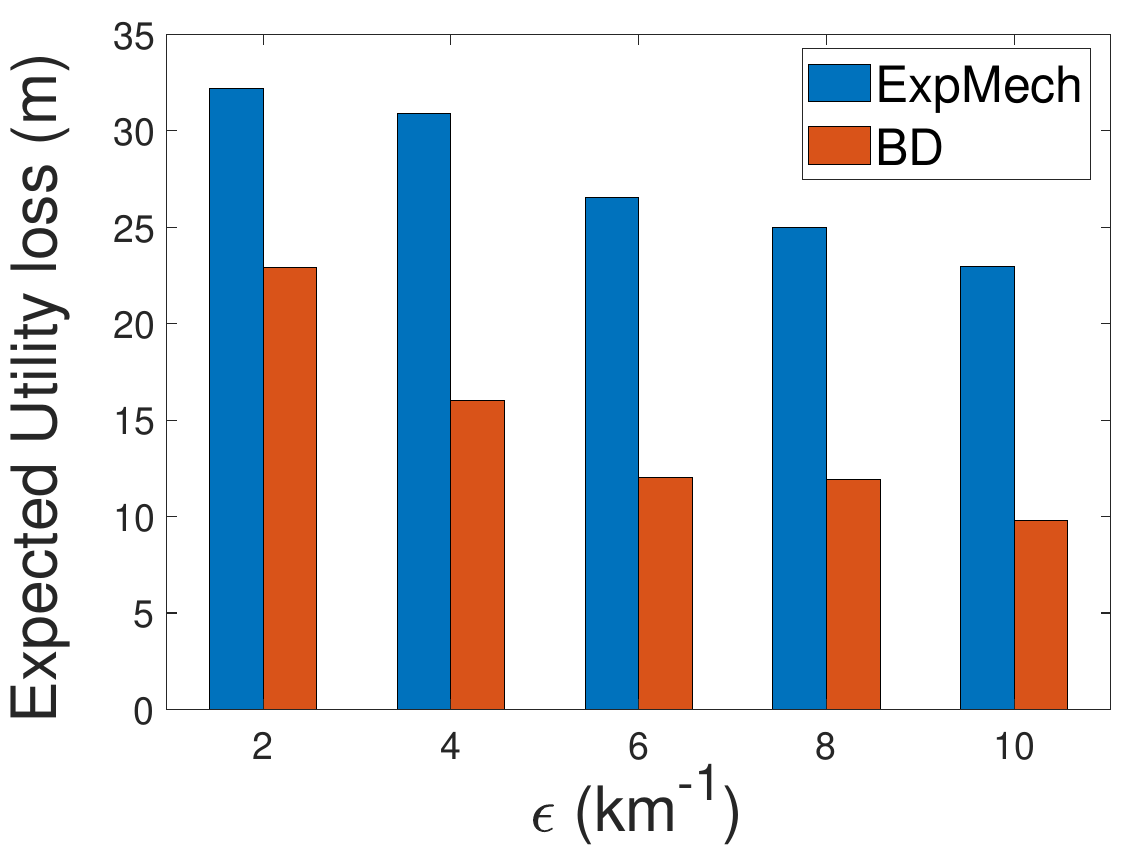}}
\end{minipage}
\begin{minipage}{0.225\textwidth}
  \subfigure[\small Grid map]{
\includegraphics[width=1.00\textwidth, height = 0.13\textheight]{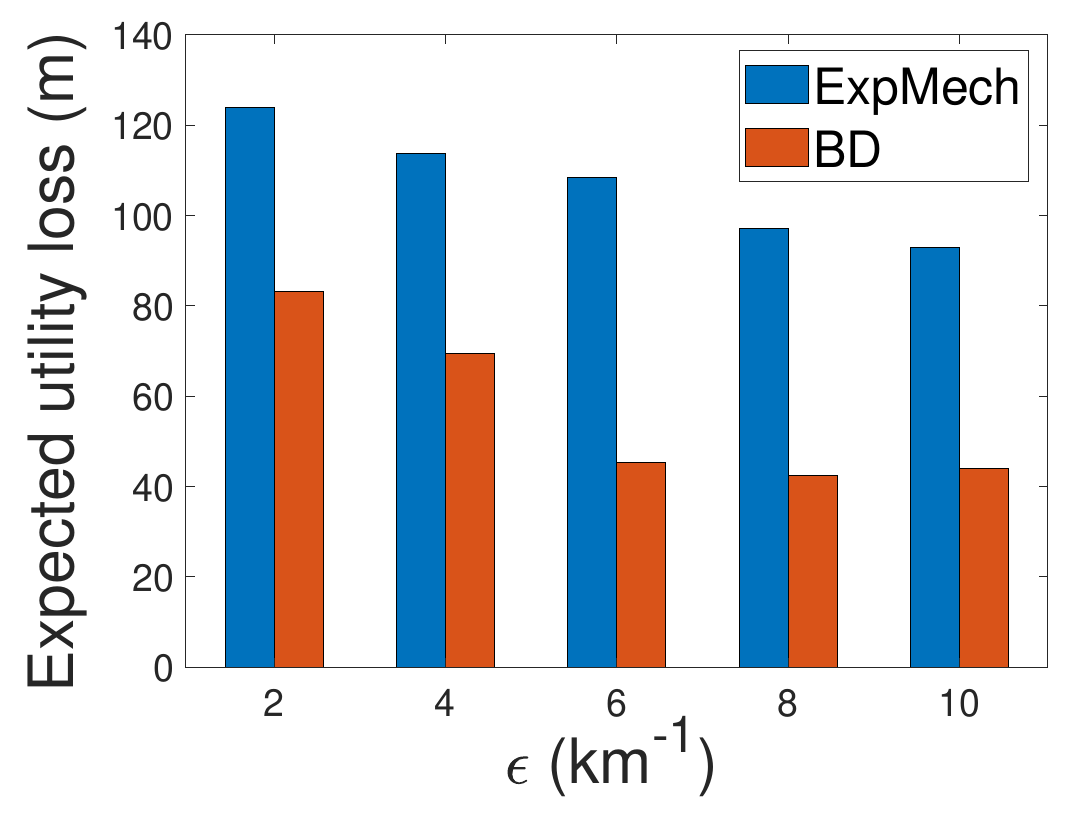}}
\vspace{-0.00in}
\end{minipage}
\begin{minipage}{0.225\textwidth}
  \subfigure[\small Text data]{
\includegraphics[width=1.00\textwidth, height = 0.13\textheight]{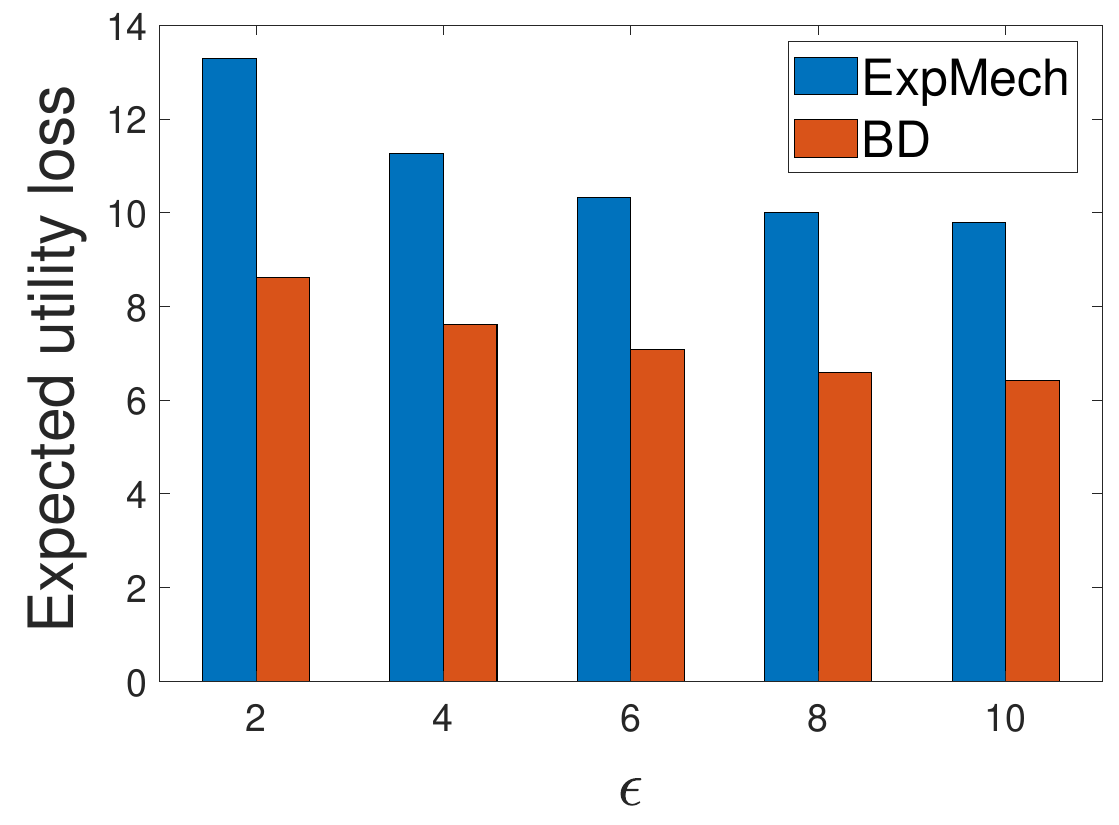}}
\end{minipage}
\begin{minipage}{0.225\textwidth}
  \subfigure[\small Synthetic data]{
\includegraphics[width=1.00\textwidth, height = 0.13\textheight]{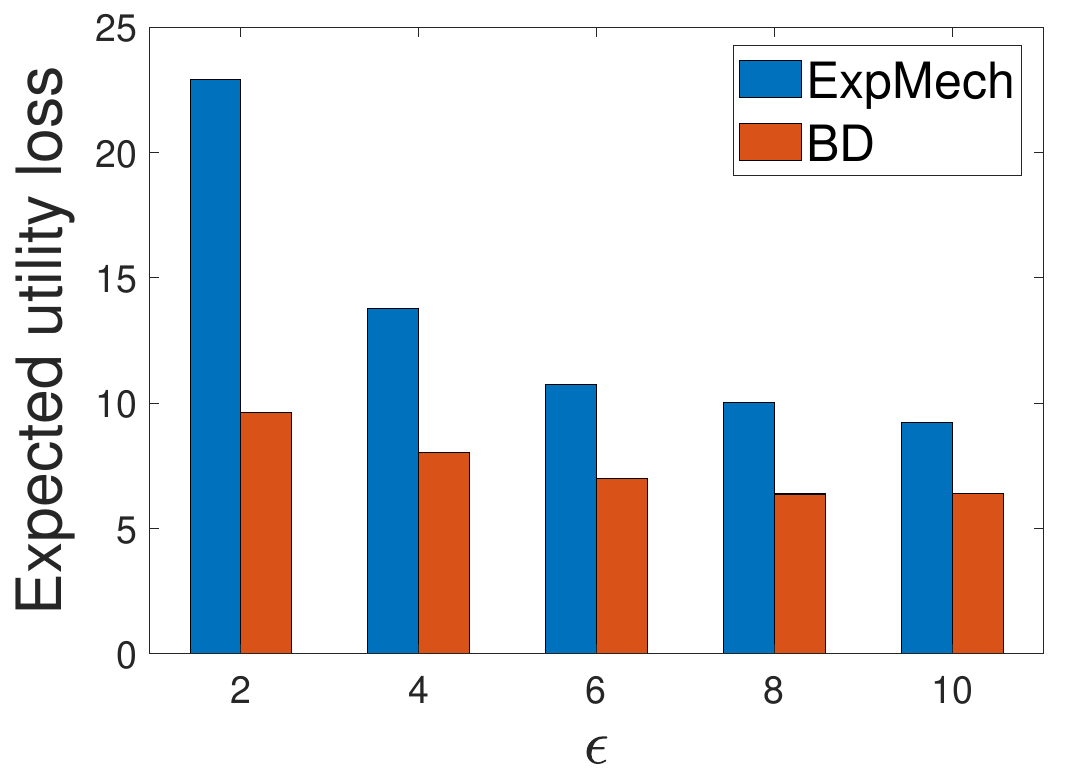}}
\end{minipage}
\vspace{-0.00in}
\caption{\normalsize Expected utility loss.}
\label{fig:utilityloss}
\vspace{-0.00in}
\end{figure}

In the context of both the text dataset and the synthetic dataset, we define the utility loss coefficient $c_{i,k}$ as the Euclidean distance between the original record $r_i$ and the perturbed data $o_k$. Note that our framework is not confined to particular downstream data processing tasks and can easily be extended to diverse applications where the accurate measurement of $c_{i,k}$ is applicable.


In Fig \ref{fig:utilityloss}(a)(b)(c)(d), we depict the expected data utility loss of BD and ExpMech using the four datasets, with the privacy budget increased from 2 to 10. From the figures, we find that on average, the data utility of BD is 47.19\%, 46.99\%, 22.49\%, and 24.05\% lower than that of ExpMech when using the road network, grid map, text, and synthetic datasets, respectively. ExpMech demonstrates a higher level of utility loss, as it doesn't capture the correlation between utility loss and perturbation while determining the perturbation vector for each record. Additionally, we note that the expected data utility loss decreases as the privacy budget $\epsilon$ increases for both methods. This trend is attributed to a higher value of $\epsilon$ imposing fewer restrictions on perturbation vectors, allowing the algorithms to select perturbed data that is closer to the original records, resulting in a lower utility loss.

\subsection{Convergence Speed of Subproblems}
\label{subsec:convergencespeed}

\begin{figure}[t]
\centering
\hspace{0.00in}
\begin{minipage}{0.23\textwidth}
  \subfigure[\small Road network]{
\includegraphics[width=1.00\textwidth, height = 0.13\textheight]{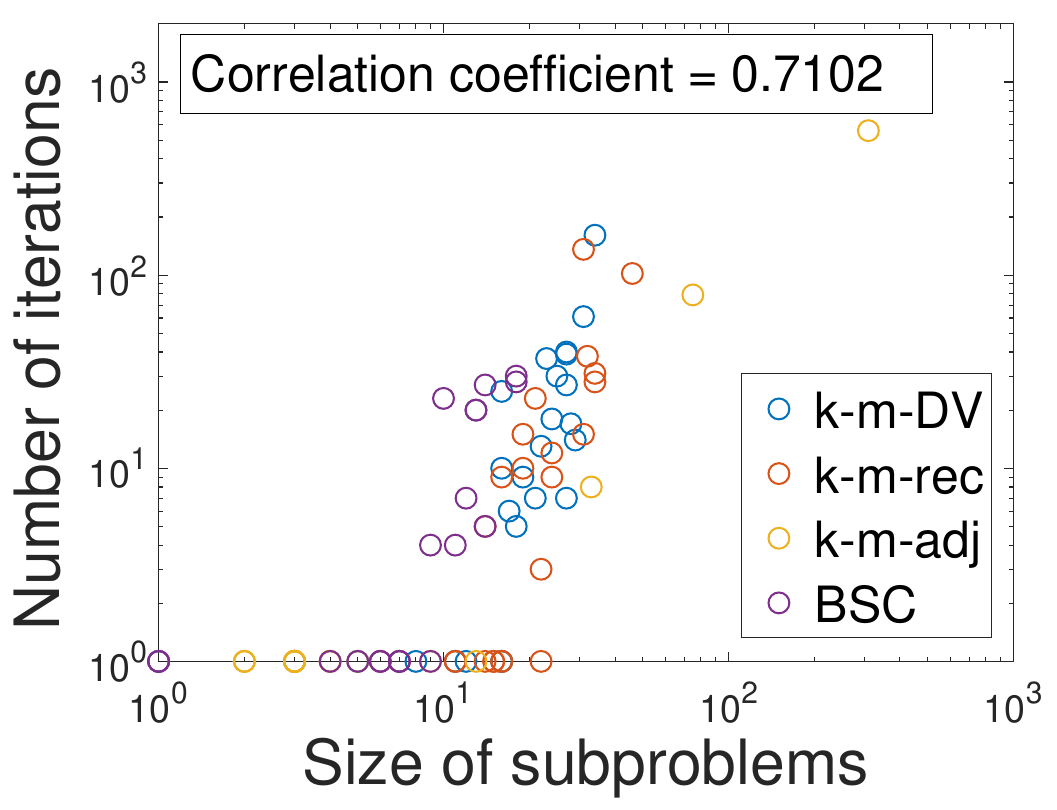}}
\vspace{-0.00in}
\end{minipage}
\hspace{0.00in}
\begin{minipage}{0.23\textwidth}
  \subfigure[\small Grid map]{
\includegraphics[width=1.00\textwidth, height = 0.13\textheight]{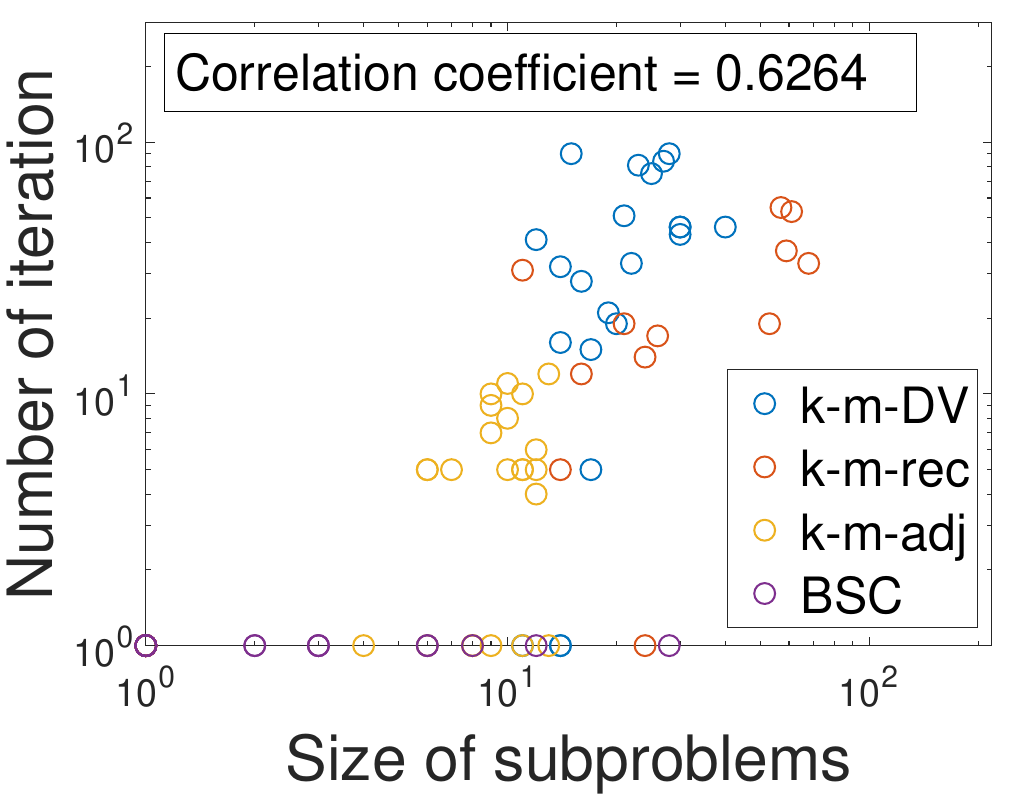}}
\vspace{-0.00in}
\end{minipage}
\hspace{0.00in}
\begin{minipage}{0.23\textwidth}
  \subfigure[\small Text data]{
\includegraphics[width=1.00\textwidth, height = 0.13\textheight]{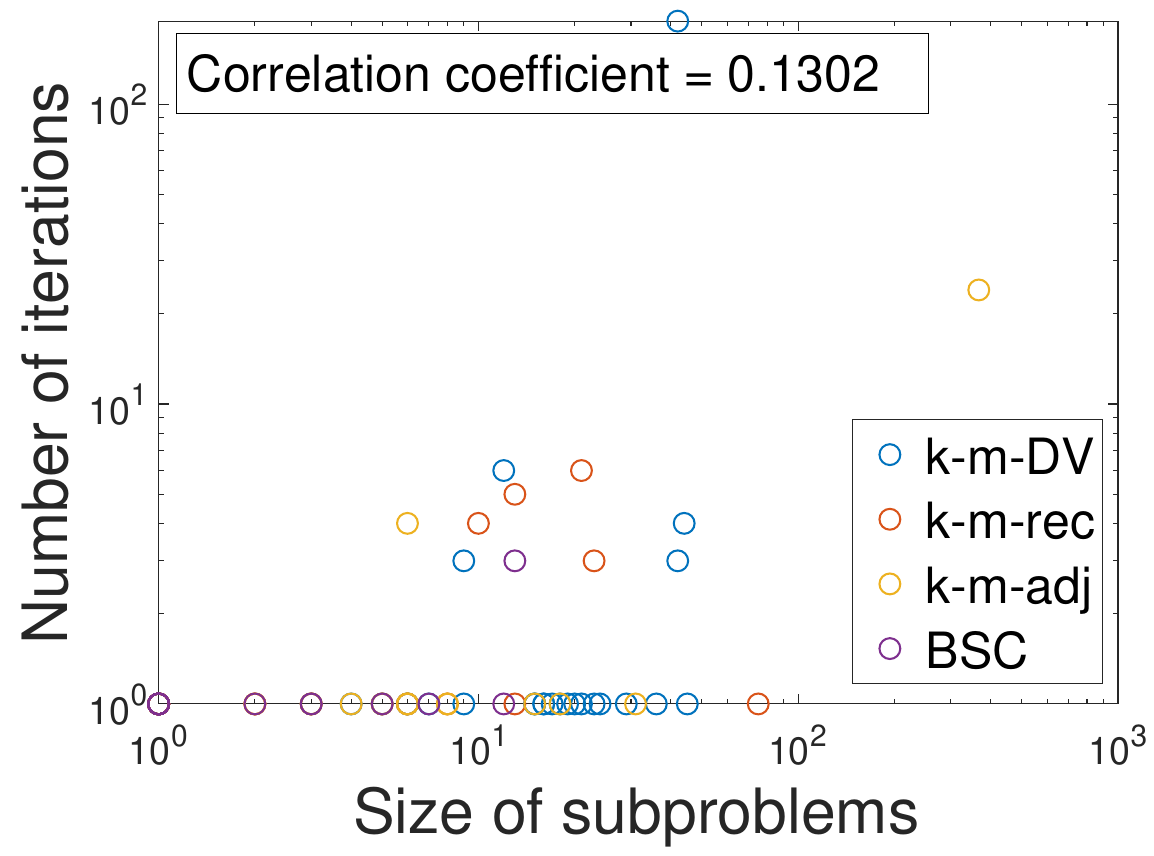}}
\vspace{-0.00in}
\end{minipage}
\hspace{0.00in}
\begin{minipage}{0.23\textwidth}
  \subfigure[\small Synthetic data]{
\includegraphics[width=1.00\textwidth, height = 0.13\textheight]{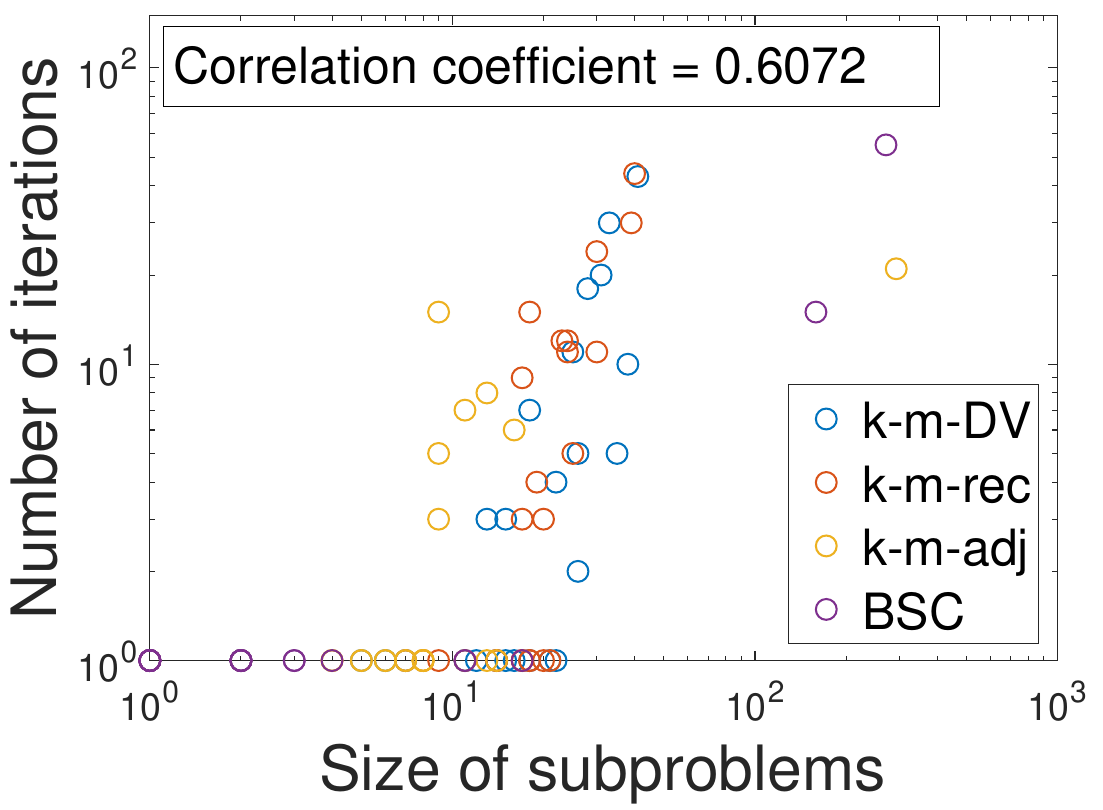}}
\end{minipage}
\vspace{-0.00in}
\caption{\normalsize Subproblem size vs. convergence.}
\label{fig:subproblemvsconvergence1}
\vspace{-0.00in}
\end{figure}

In Table \ref{Tb:exp:time}, our findings indicate that the convergence speed of ``k-m-DV'' surpasses that of the other three dataset partitioning methods. A key contributing factor is the effective balancing of decomposed subproblems by ``k-m-DV'', resulting in the minimization of the size of the ``largest'' subproblem. This reduction in subproblem size plays a crucial role in accelerating convergence, as smaller subproblems require fewer iterations, on average, to reach a feasible solution. In this section, we present additional statistical results to illustrate the specific correlations between subproblem size and their respective convergence time. For simplicity, in what follows, when mentioning the ``convergence time'' of a subproblem, we mean the number of iterations for the subproblem to attain a feasible solution. 

Fig. \ref{fig:subproblemvsconvergence1}(a)(b)(c)(d) depicts the convergence time of subproblems versus their varying problem sizes in the four datasets. As observed in the figures, the correlation coefficients between the convergence time and the subproblem sizes are 0.712, 0.6264, 0.1302, and 0.6072, respectively, suggesting a positive correlation between convergence time and subproblem sizes. This is because larger subproblems, on average, encompass more linear constraints to satisfy, leading to an extended duration in finding a feasible solution. 

In addition, Fig. \ref{fig:subproblemvsconvergence2}(a)(b)(c)(d) shows the correlations between the convergence time of subproblems and the number of boundary records in those subproblems in the four datasets. Not surprisingly, we observe a positive correlation between convergence time and the number of boundary records from the figures. This is because a higher number of boundary records fixed by the MP, on average, imposes stronger constraints on the internal records of the subproblems, which results in smaller feasible regions and, consequently, more iterations for subproblems to find feasible solutions.

\begin{figure}[t]
\centering
\hspace{0.00in}
\begin{minipage}{0.23\textwidth}
  \subfigure[\small Road network]{
\includegraphics[width=1.00\textwidth, height = 0.13\textheight]{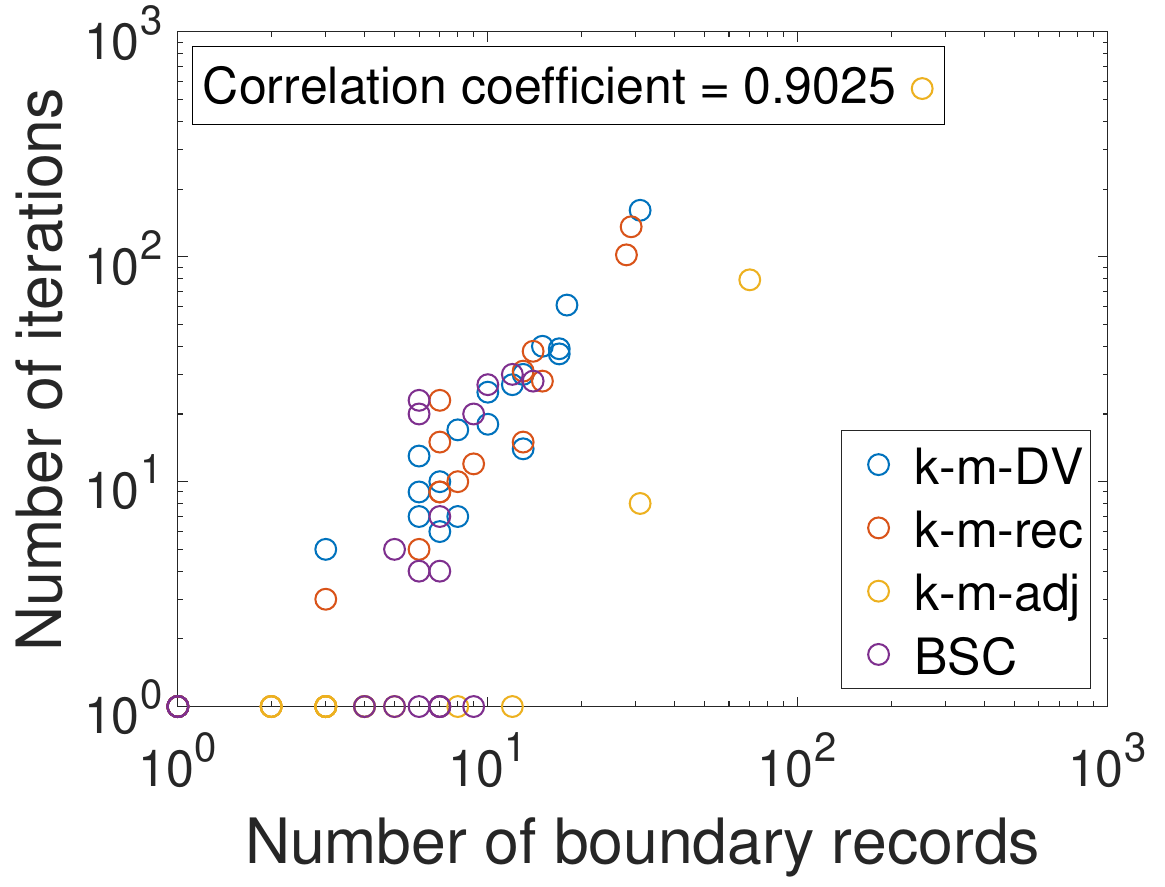}}
\vspace{-0.00in}
\end{minipage}
\hspace{0.00in}
\begin{minipage}{0.23\textwidth}
  \subfigure[\small Grid map]{
\includegraphics[width=1.00\textwidth, height = 0.13\textheight]{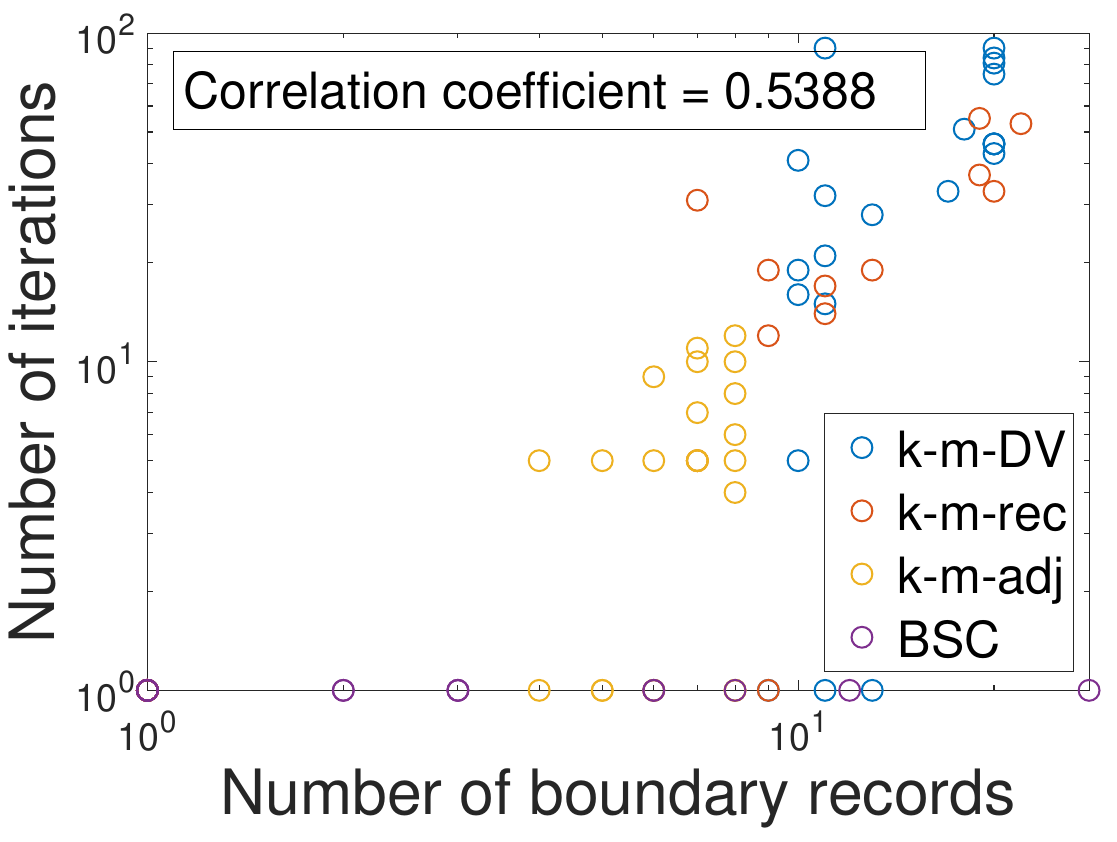}}
\vspace{-0.00in}
\end{minipage}
\hspace{0.00in}
\begin{minipage}{0.23\textwidth}
  \subfigure[\small Text data]{
\includegraphics[width=1.00\textwidth, height = 0.13\textheight]{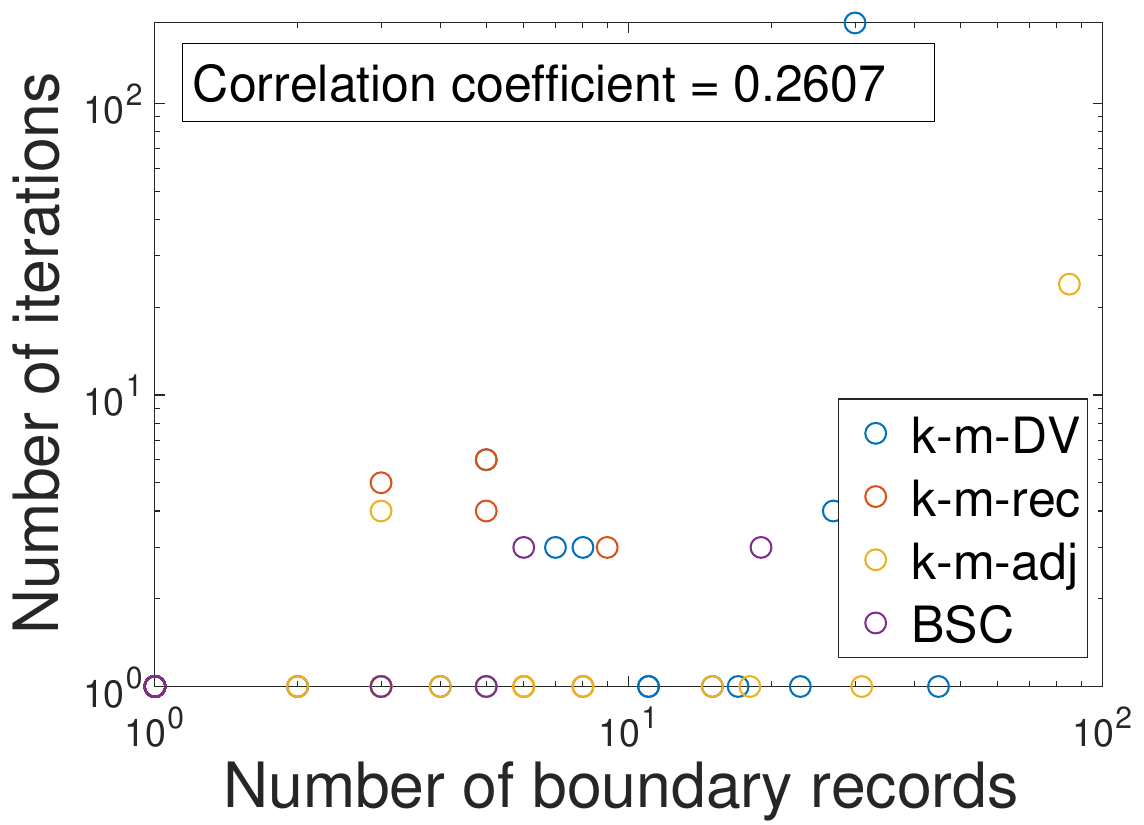}}
\vspace{-0.00in}
\end{minipage}
\hspace{0.00in}
\begin{minipage}{0.23\textwidth}
  \subfigure[\small Synthetic data]{
\includegraphics[width=1.00\textwidth, height = 0.13\textheight]{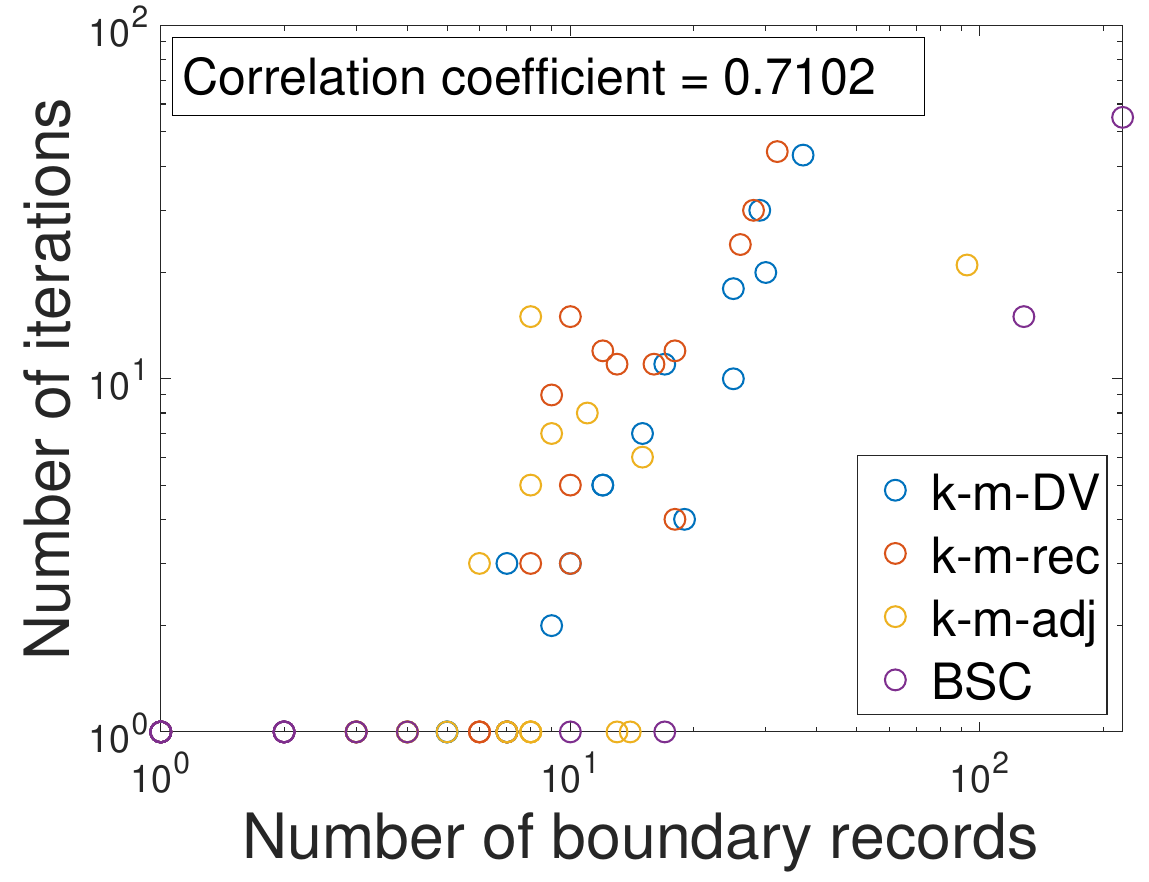}}
\end{minipage}
\vspace{-0.00in}
\caption{\normalsize Boundary record size in subproblems vs. convergence.}
\label{fig:subproblemvsconvergence2}
\vspace{-0.00in}
\end{figure}

\subsection{Computation Time of MP Components and Subproblems}
\label{subsec:computetime}

\begin{figure}[h]
\centering
\hspace{0.00in}
\begin{minipage}{0.23\textwidth}
  \subfigure[\small Road network]{
\includegraphics[width=1.00\textwidth, height = 0.13\textheight]{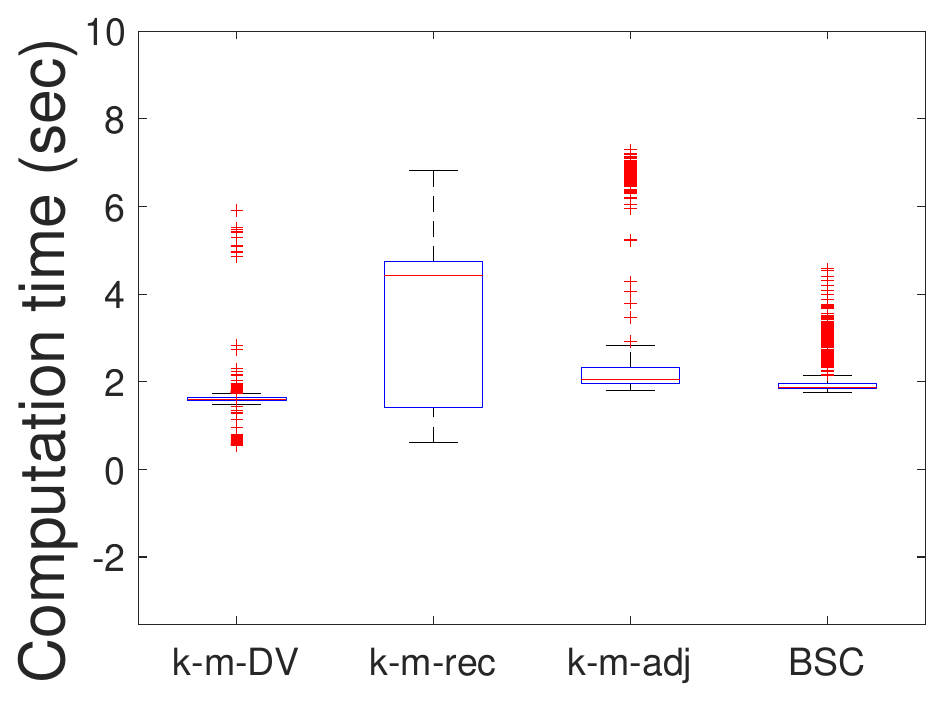}}
\vspace{-0.00in}
\end{minipage}
\hspace{0.00in}
\begin{minipage}{0.23\textwidth}
  \subfigure[\small Grid maps]{
\includegraphics[width=1.00\textwidth, height = 0.13\textheight]{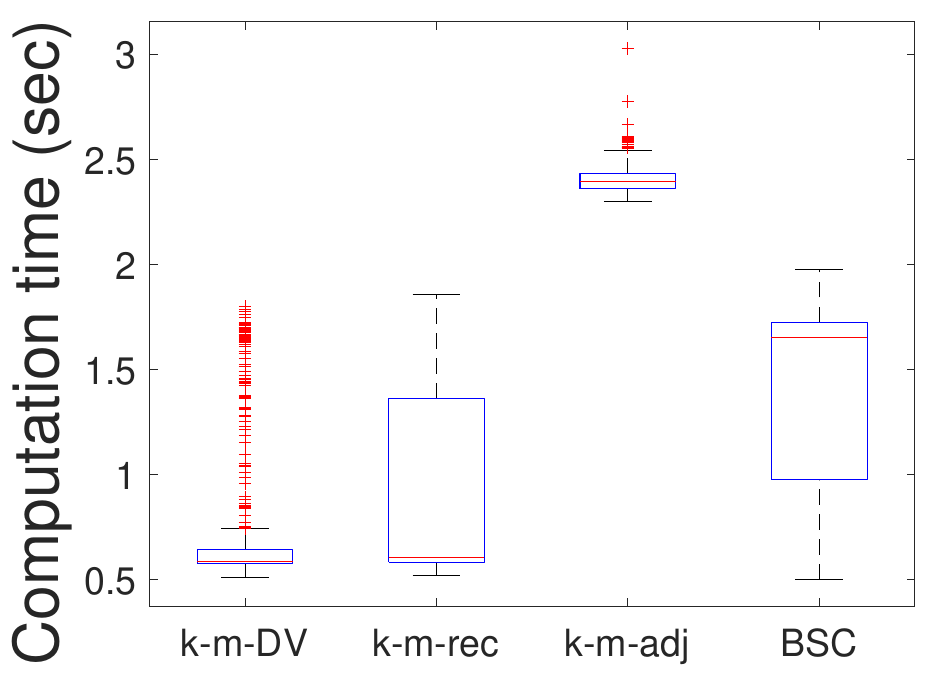}}
\vspace{-0.00in}
\end{minipage}
\begin{minipage}{0.23\textwidth}
  \subfigure[\small Text data]{
\includegraphics[width=1.00\textwidth, height = 0.13\textheight]{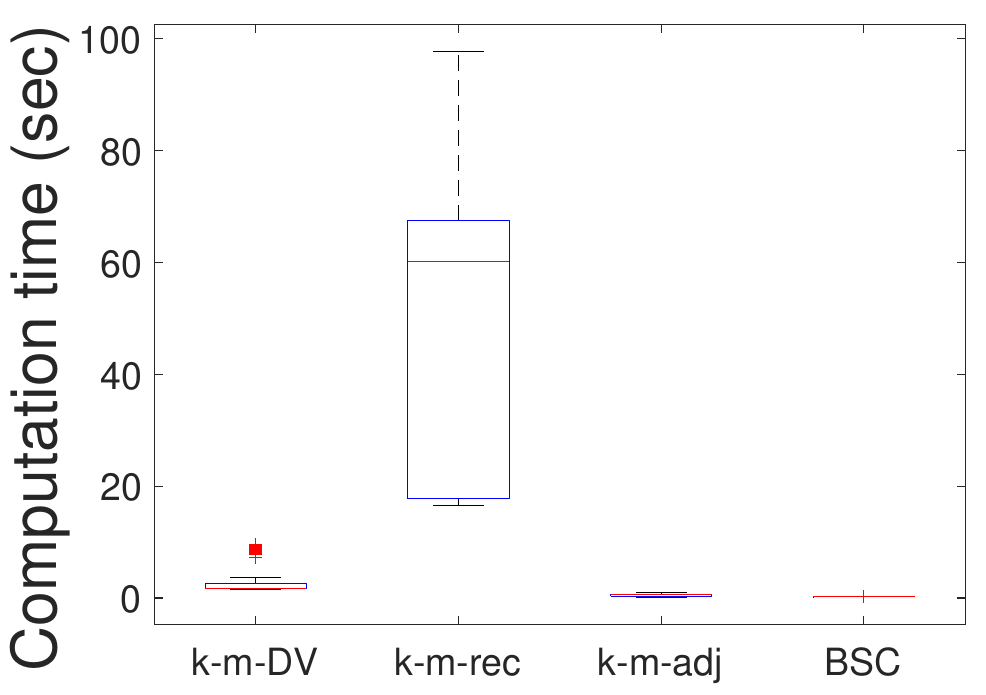}}
\vspace{-0.00in}
\end{minipage}
\hspace{0.00in}
\begin{minipage}{0.23\textwidth}
  \subfigure[\small Synthetic data]{
\includegraphics[width=1.00\textwidth, height = 0.13\textheight]{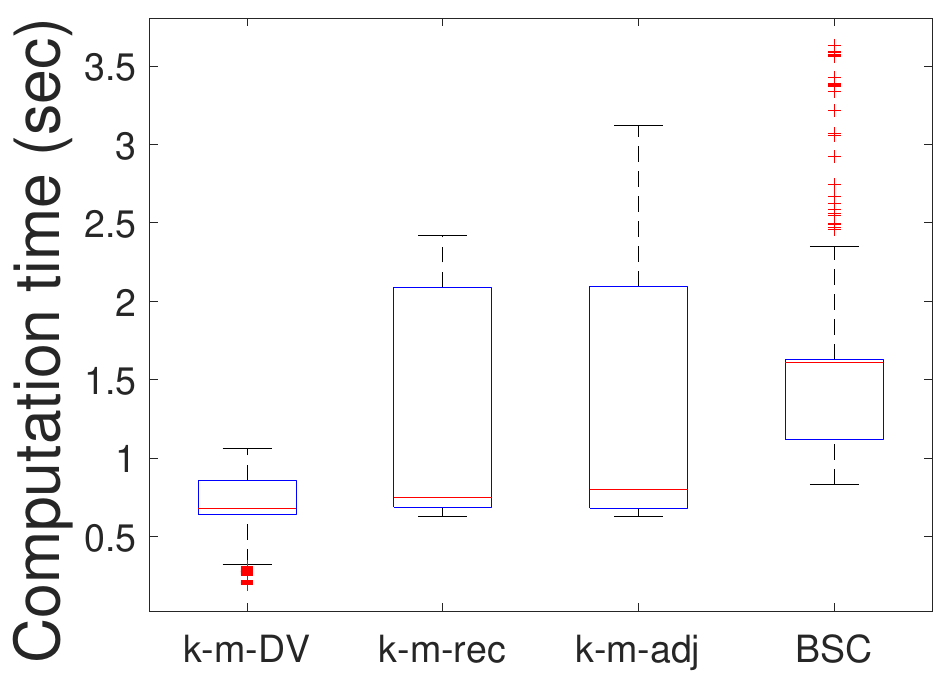}}
\vspace{-0.00in}
\end{minipage}
\caption{\normalsize Computation time of suproblems.}
\label{fig:Sub}
\vspace{-0.00in}
\end{figure}

\begin{figure}[h]
\centering
\hspace{0.00in}
\begin{minipage}{0.23\textwidth}
  \subfigure[\small Road network]{
\includegraphics[width=1.00\textwidth, height = 0.13\textheight]{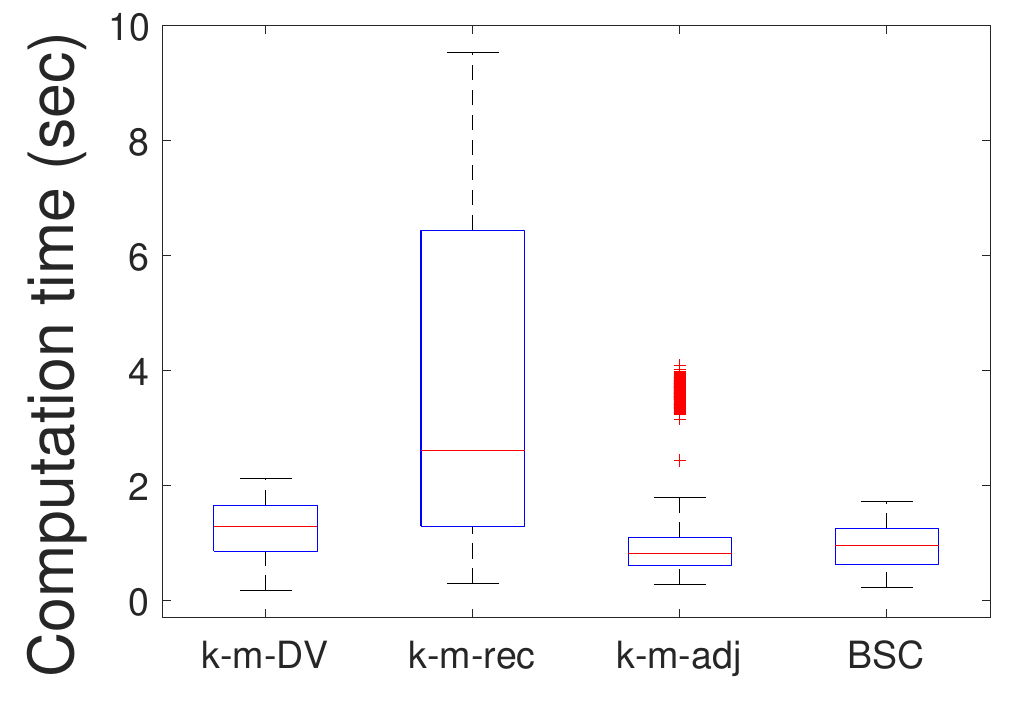}}
\vspace{-0.00in}
\end{minipage}
\hspace{0.00in}
\begin{minipage}{0.23\textwidth}
  \subfigure[\small Grid maps]{
\includegraphics[width=1.00\textwidth, height = 0.13\textheight]{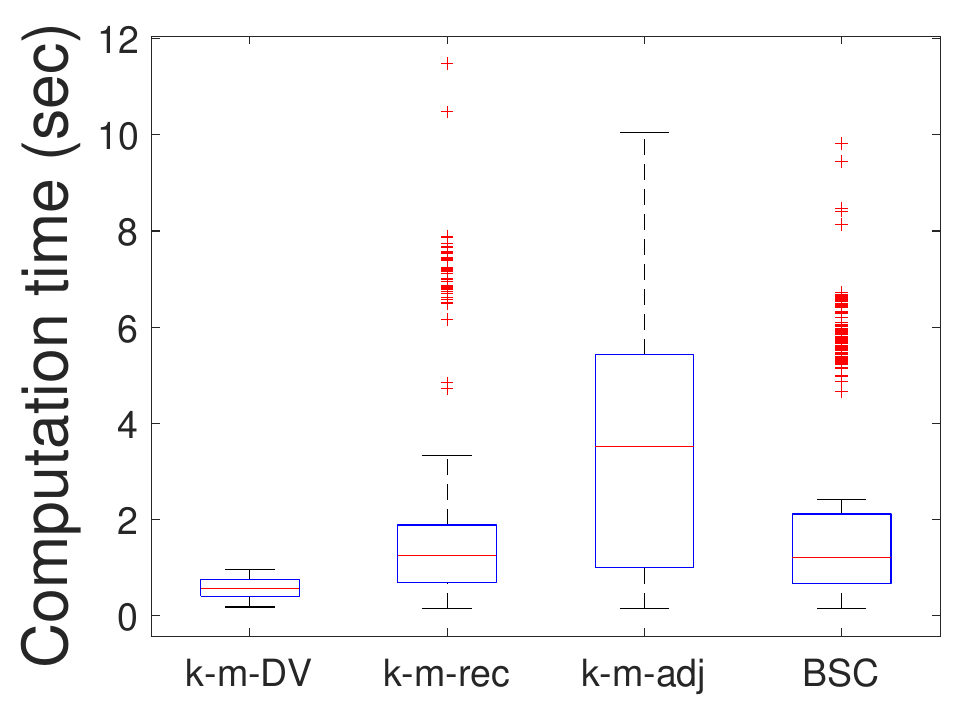}}
\vspace{-0.00in}
\end{minipage}
\begin{minipage}{0.23\textwidth}
  \subfigure[\small Text data]{
\includegraphics[width=1.00\textwidth, height = 0.13\textheight]{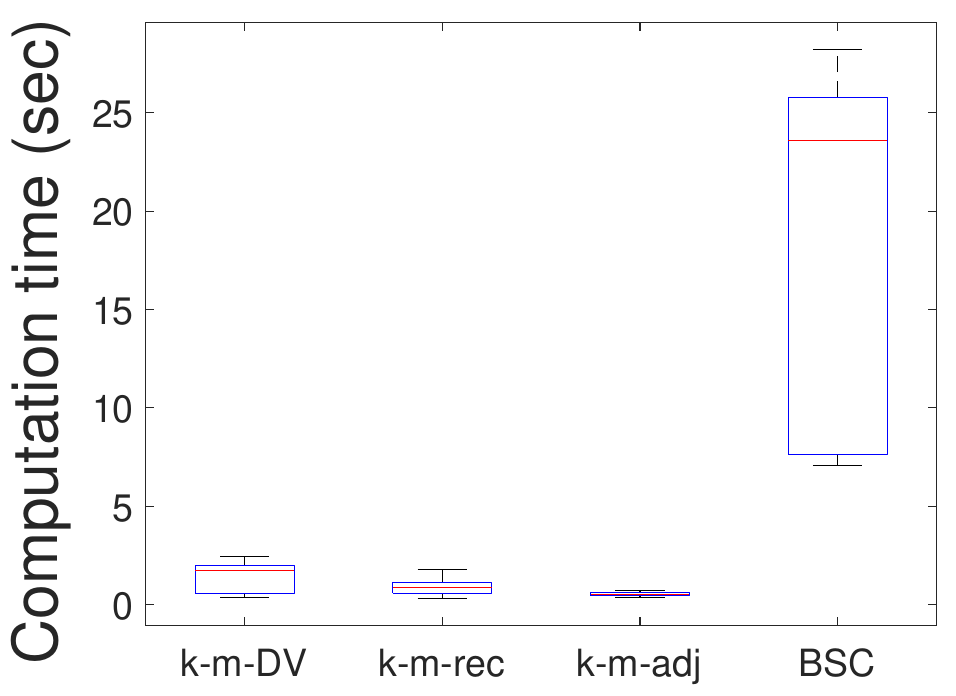}}
\vspace{-0.00in}
\end{minipage}
\hspace{0.00in}
\begin{minipage}{0.23\textwidth}
  \subfigure[\small Synthetic data]{
\includegraphics[width=1.00\textwidth, height = 0.13\textheight]{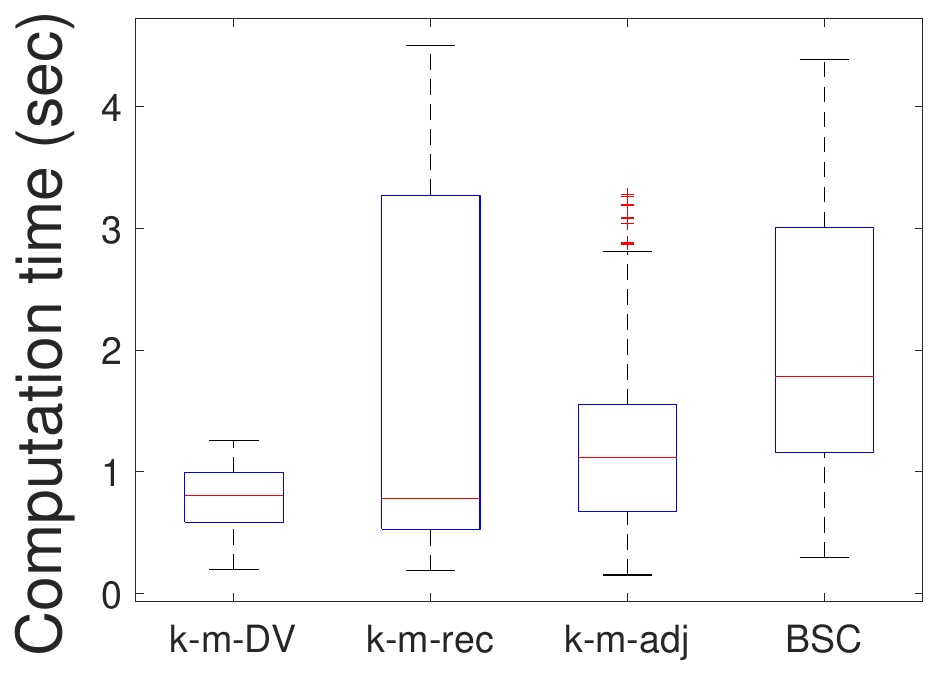}}
\end{minipage}
\vspace{-0.00in}
\caption{\normalsize Computation time of MPs.}
\label{fig:MPs}
\vspace{-0.00in}
\end{figure}

Notably, the computation time of BD is impacted by not only the convergence time but also the computation time of decomposed MP and subproblems. In this part, we depict the boxplot of the computation time of subproblems (in Fig. \ref{fig:Sub}(a)(b)(c)(d)) and MP (in Fig. \ref{fig:MPs}(a)(b)(c)(d)) when applying different dataset partitioning methods. 

From the figures, it is evident that ``k-m-DV'' has a lower computation time for both subproblems and MPs across the four datasets. Specifically, its average computation time for MPs and subproblems is 1.505 seconds and 1.194 seconds, respectively. This is because the subproblems and MP components partitioned by ``k-m-DV'' are sufficiently small problems, as illustrated in Table \ref{Tb:exp:partitionSub} and Table \ref{Tb:exp:partitionMP}. This partitioning results in shorter computation times for these decomposed problems.


\subsection{Dataset Partitioning}
\label{subsec:datasetpartition}
Recall that Table \ref{Tb:exp:partitionSub} and Table \ref{Tb:exp:partitionMP} in Section \ref{subsec:expBalance} present a comparison of the sizes of subproblems and MP components among the four dataset partitioning algorithms when the number of subsets $M$ is set by 25.  In Fig. \ref{fig:problemsizevsnr_agents}(a)--(d), we increase $M$ from 25 to 34 and evaluate how the average size of MP components and the average number of internal records in subproblems change in the four datasets. In Fig. \ref{fig:problemsizevsnr_agents}, we only use ``k-m-DV'' as the dataset partitioning algorithm. From the figures, we find that with the increase of $M$, the size of subproblems decreases. This is because, on average, with more subproblems partitioned, there is a lower number of records in each subproblem. However, the size of MP components does not necessarily decrease with the increase in $M$. Particularly, when using the geo-location dataset in grid maps, the size of MP components even increases when $M$ increases. This is because with more subsets partitioned, there are more boundary records across different subsets, leading to a larger MP and, consequently, a higher likelihood of generating larger MP components.

\begin{figure}[t]
\centering
\hspace{0.00in}
\begin{minipage}{0.23\textwidth}
  \subfigure[\small Road network]{
\includegraphics[width=1.00\textwidth, height = 0.13\textheight]{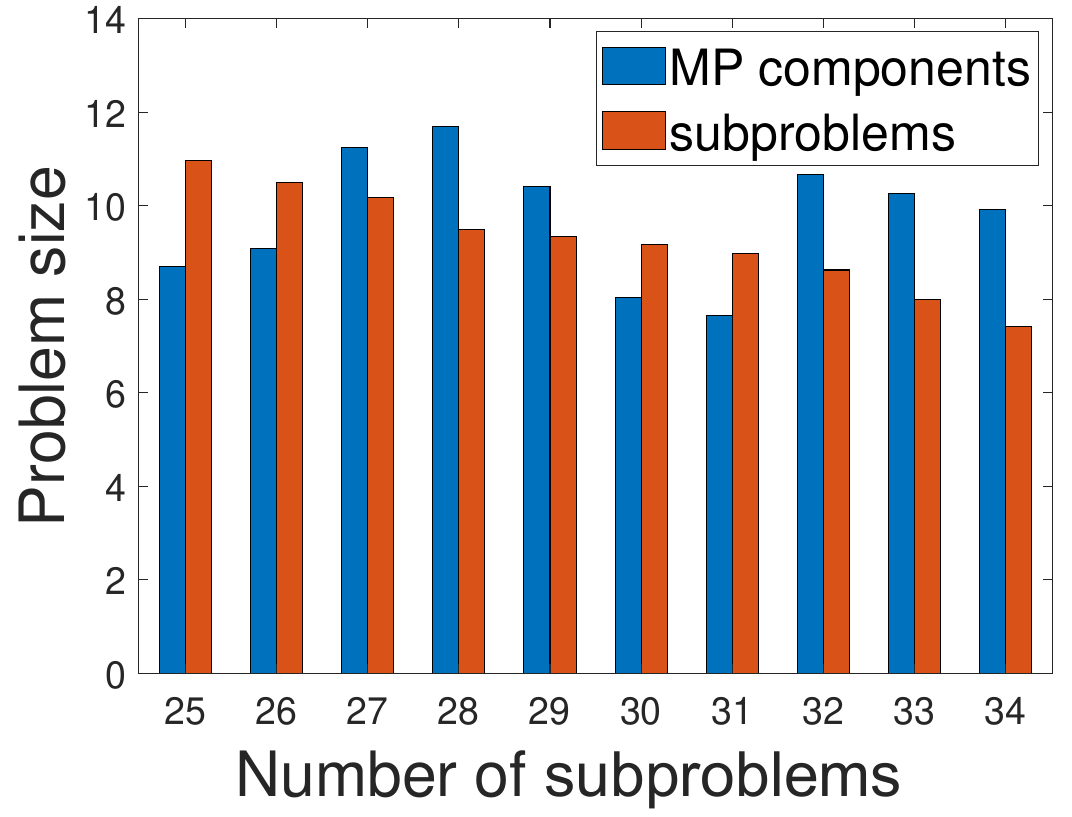}}
\vspace{-0.00in}
\end{minipage}
\hspace{0.00in}
\begin{minipage}{0.23\textwidth}
  \subfigure[\small Grid map]{
\includegraphics[width=1.00\textwidth, height = 0.13\textheight]{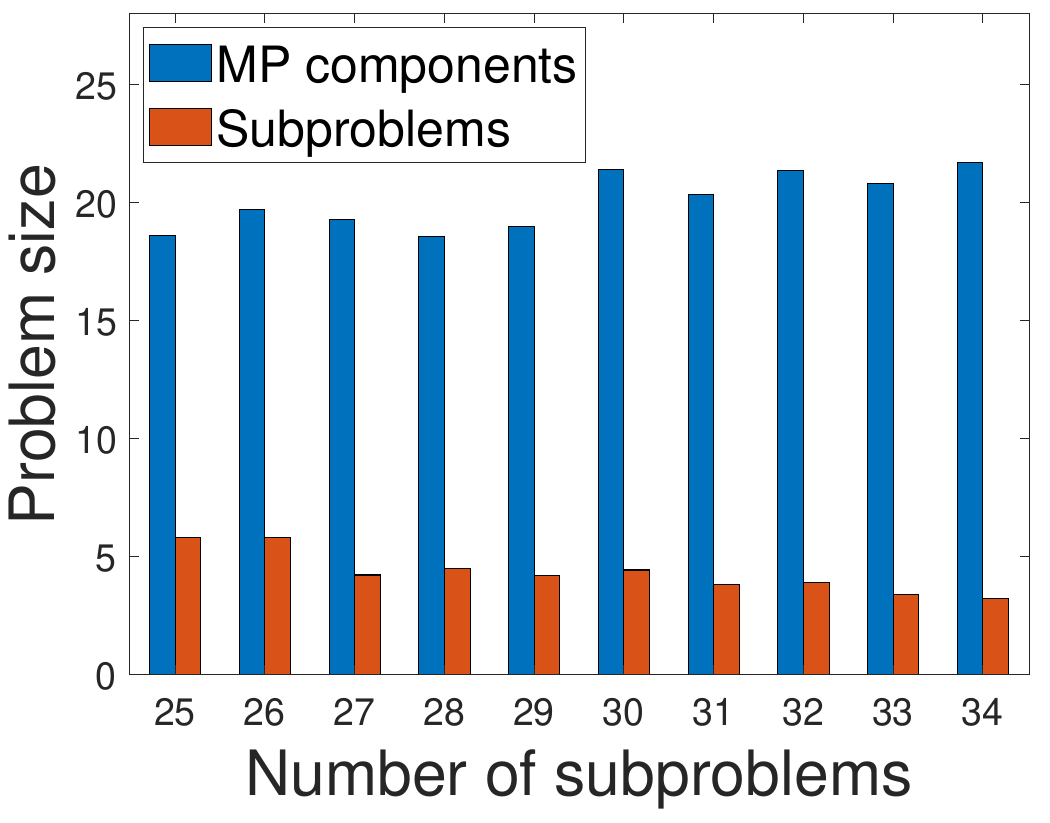}}
\vspace{-0.00in}
\end{minipage}
\begin{minipage}{0.23\textwidth}
  \subfigure[\small Text data]{
\includegraphics[width=1.00\textwidth, height = 0.13\textheight]{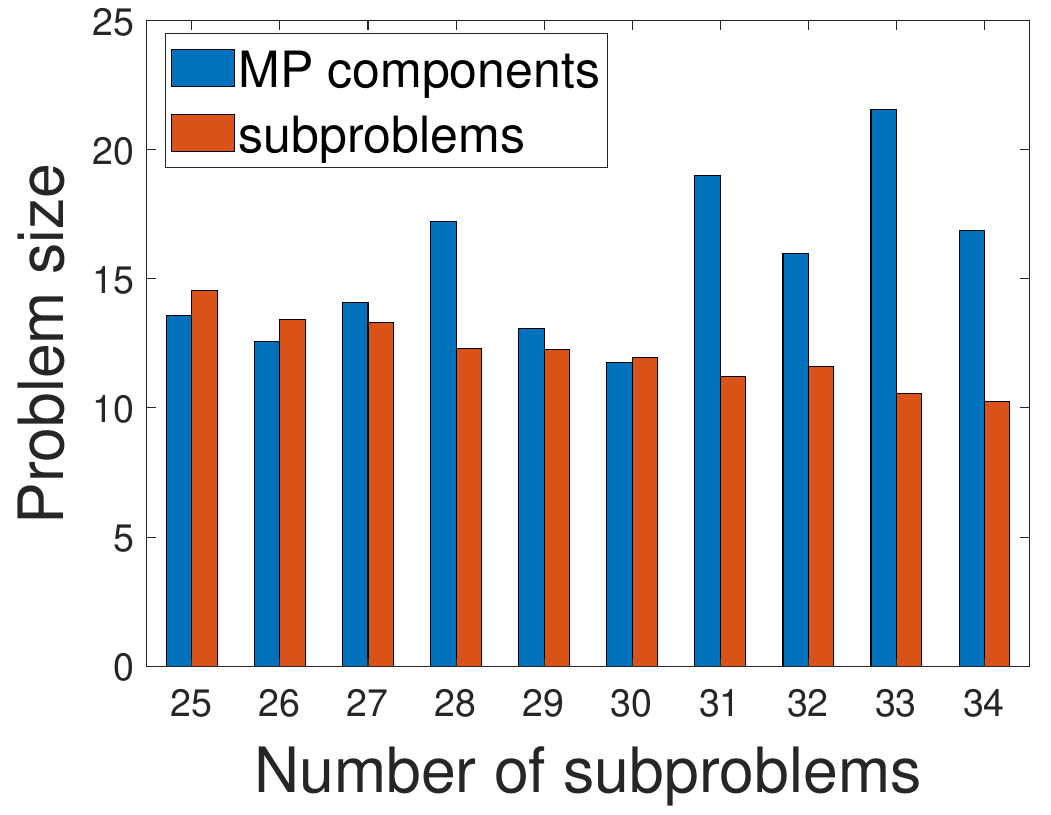}}
\vspace{-0.00in}
\end{minipage}
\hspace{0.00in}
\begin{minipage}{0.23\textwidth}
  \subfigure[\small Synthetic data]{
\includegraphics[width=1.00\textwidth, height = 0.13\textheight]{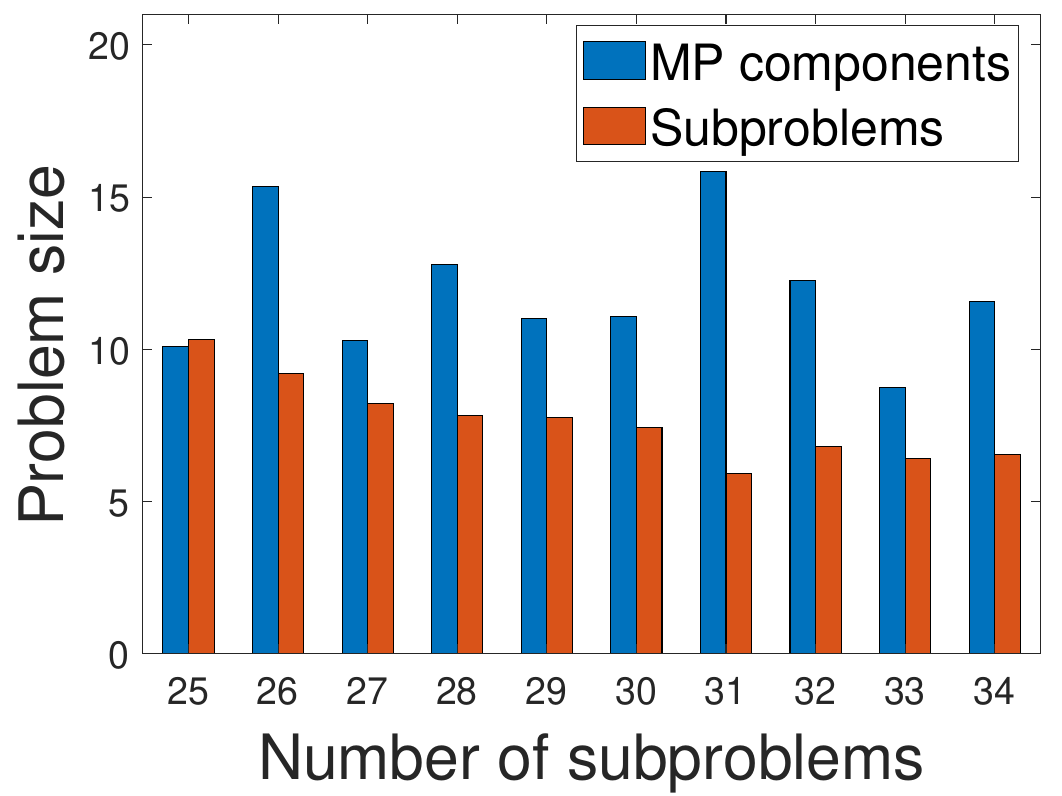}}
\vspace{-0.00in}
\end{minipage}
\vspace{-0.00in}
\caption{\normalsize Problem size vs. number of subproblems.}
\label{fig:problemsizevsnr_agents}
\vspace{-0.00in}
\end{figure}

In Fig. \ref{fig:cluster_rome_downtown}--Fig. \ref{fig:cluster_random}, we give examples to visually illustrate how the secret records are partitioned by various algorithms. Specifically, 

\begin{itemize}
\item Fig. \ref{fig:cluster_rome_downtown}(a)--(d) shows how \textbf{500 geo-locations in the road network} are partitioned using the four methods.
\item Fig. \ref{fig:cluster_grid}(a)--(d) shows how \textbf{500 geo-locations in a grid map} are partitioned using the four methods.
\item Fig. \ref{fig:cluster_rome_text}(a)--(d) shows how \textbf{500 text records} are partitioned using the four methods.
\item Fig. \ref{fig:cluster_random}(a)--(d) shows how \textbf{500 3-dimensional synthetic records} are partitioned using the four methods.
\end{itemize}

\begin{figure*}[t]
\centering
\hspace{0.00in}
\begin{minipage}{0.235\textwidth}
  \subfigure[\small k-m-DV]{
\includegraphics[width=1.00\textwidth, height = 0.14\textheight]{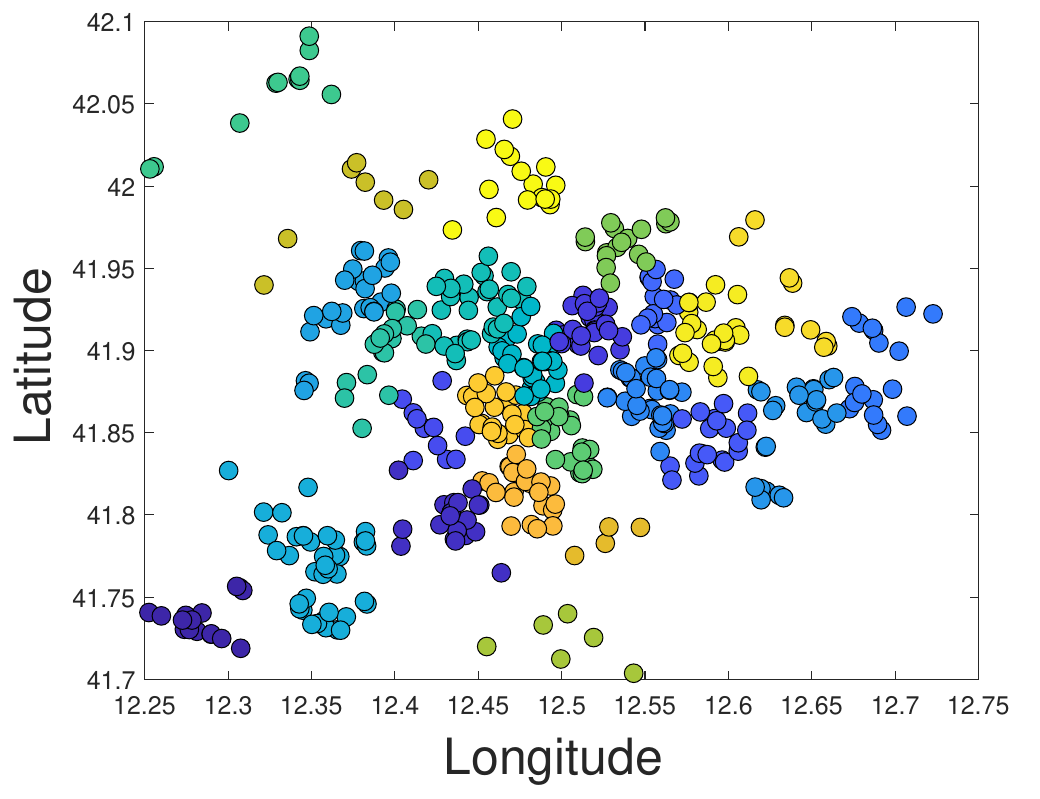}}
\vspace{-0.10in}
\end{minipage}
\hspace{0.00in}
\begin{minipage}{0.235\textwidth}
  \subfigure[\small
  k-m-rec]{
\includegraphics[width=1.00\textwidth, height = 0.14\textheight]{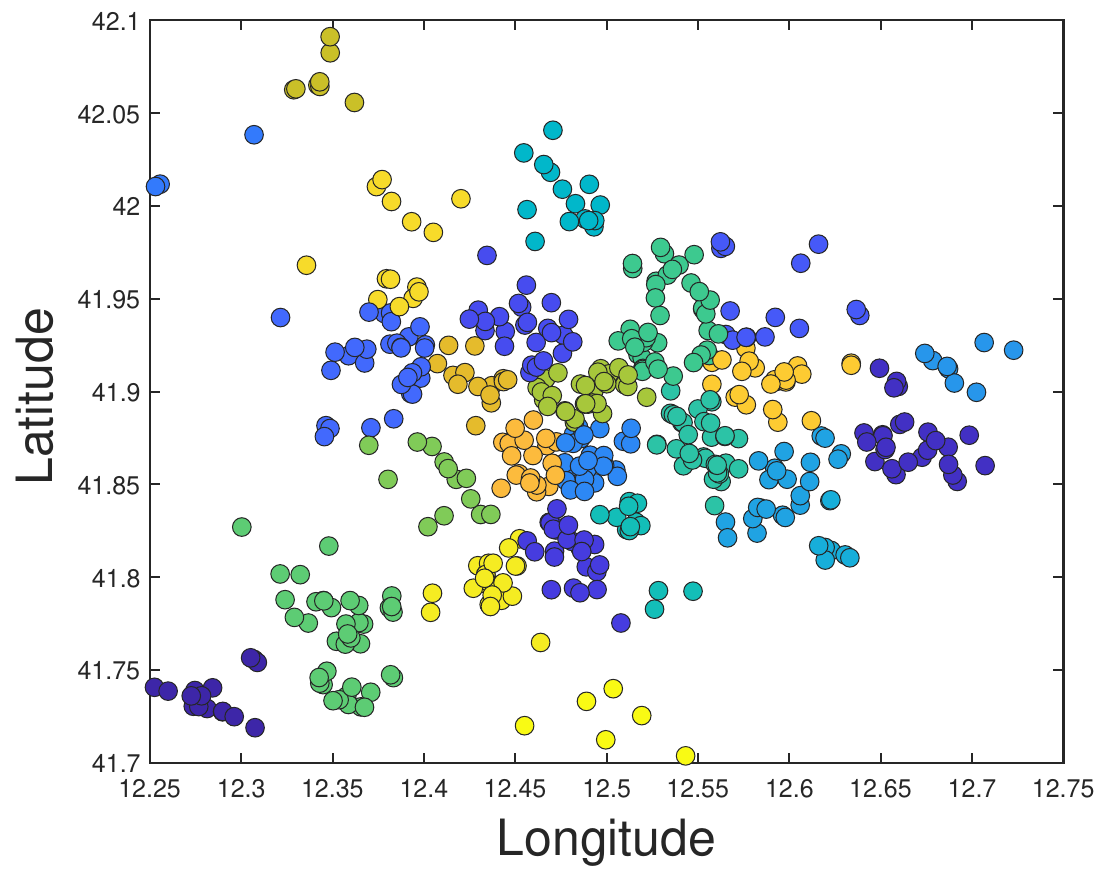}}
\vspace{-0.10in}
\end{minipage}
\hspace{0.00in}
\begin{minipage}{0.235\textwidth}
  \subfigure[\small k-m-adj]{
\includegraphics[width=1.00\textwidth, height = 0.14\textheight]{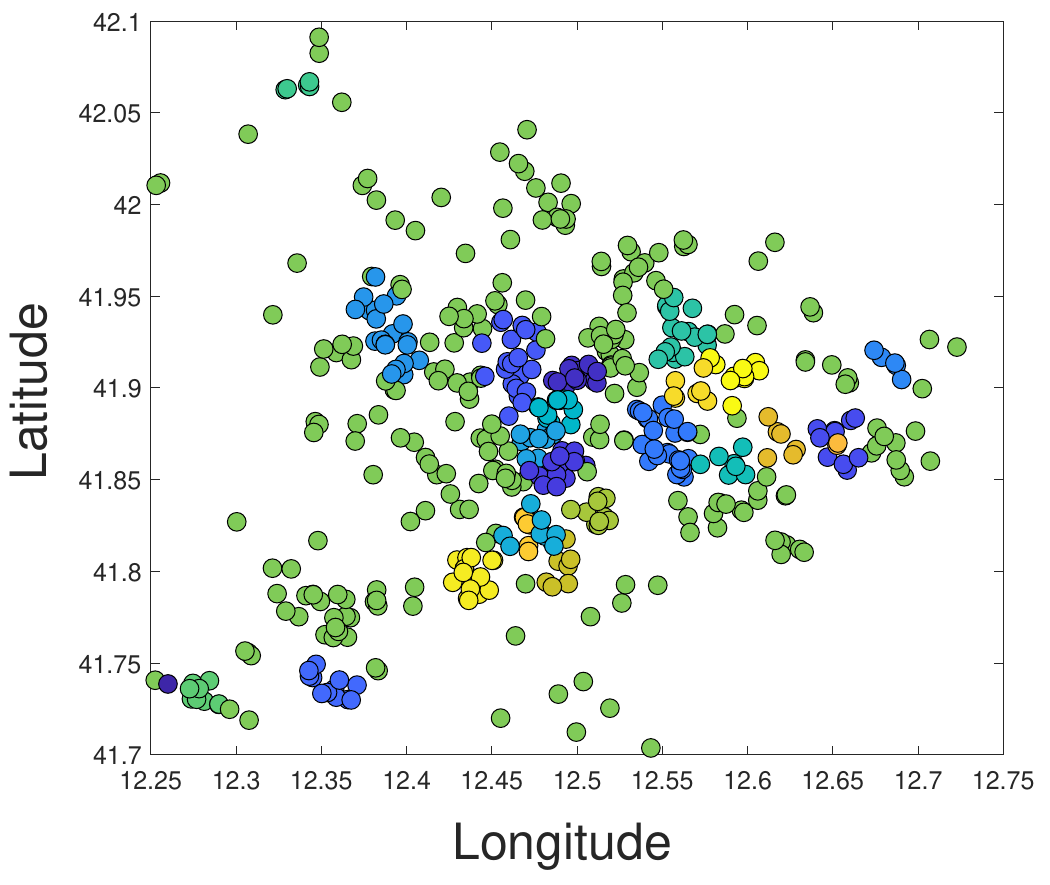}}
\vspace{-0.10in}
\end{minipage}
\hspace{0.00in}
\begin{minipage}{0.235\textwidth}
  \subfigure[\small BSC]{
\includegraphics[width=1.00\textwidth, height = 0.14\textheight]{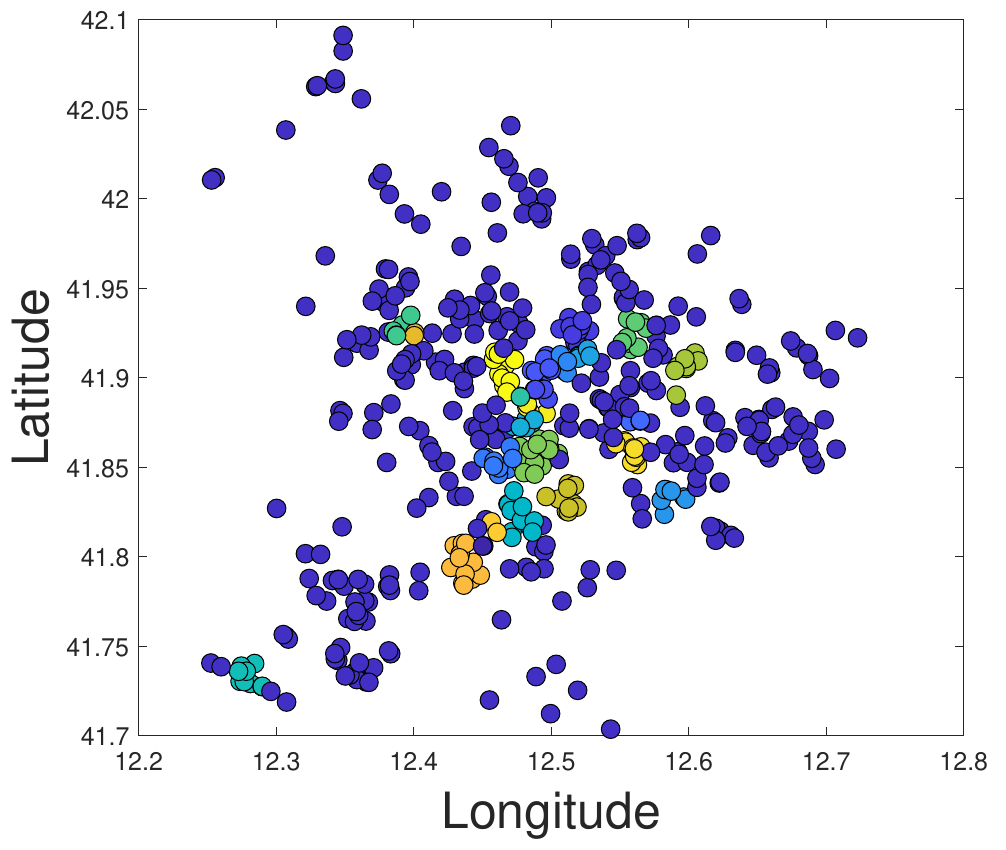}}
\vspace{-0.1in}
\end{minipage}
\caption{\normalsize Example: Dataset partitioning (Geo-location data in the road network).}
\label{fig:cluster_rome_downtown}
\vspace{-0.00in}
\vspace{0.10in}
\begin{minipage}{0.235\textwidth}
  \subfigure[\small k-m-DV]{
\includegraphics[width=1.00\textwidth, height = 0.14\textheight]{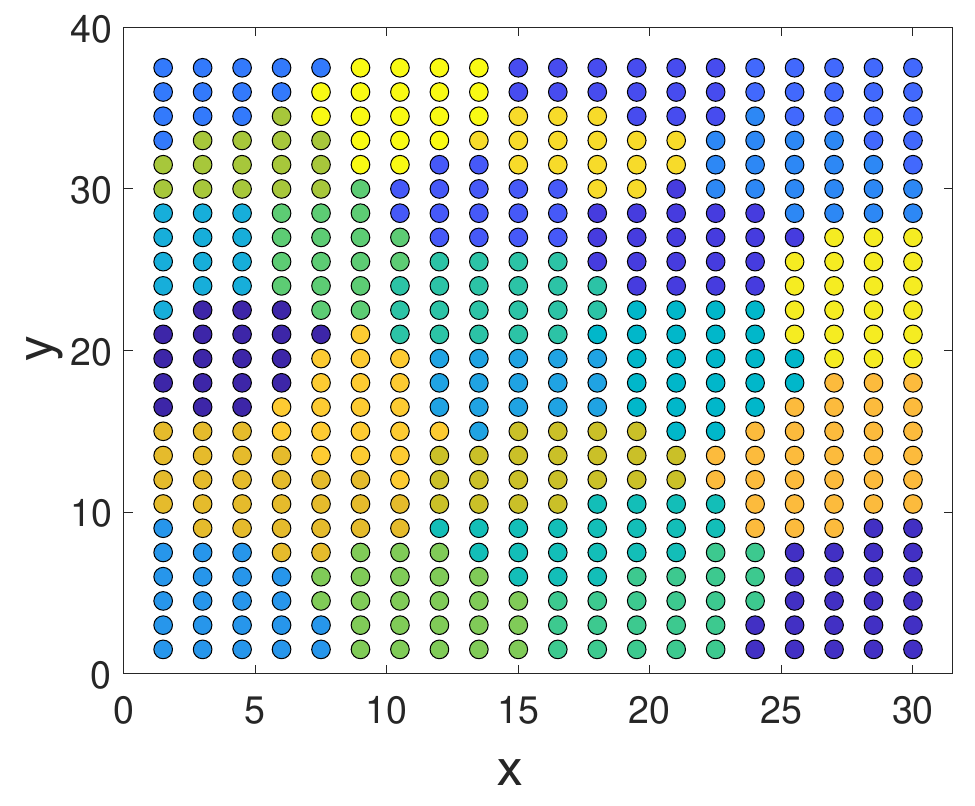}}
\vspace{-0.10in}
\end{minipage}
\hspace{0.00in}
\begin{minipage}{0.235\textwidth}
  \subfigure[\small k-m-rec]{
\includegraphics[width=1.00\textwidth, height = 0.14\textheight]{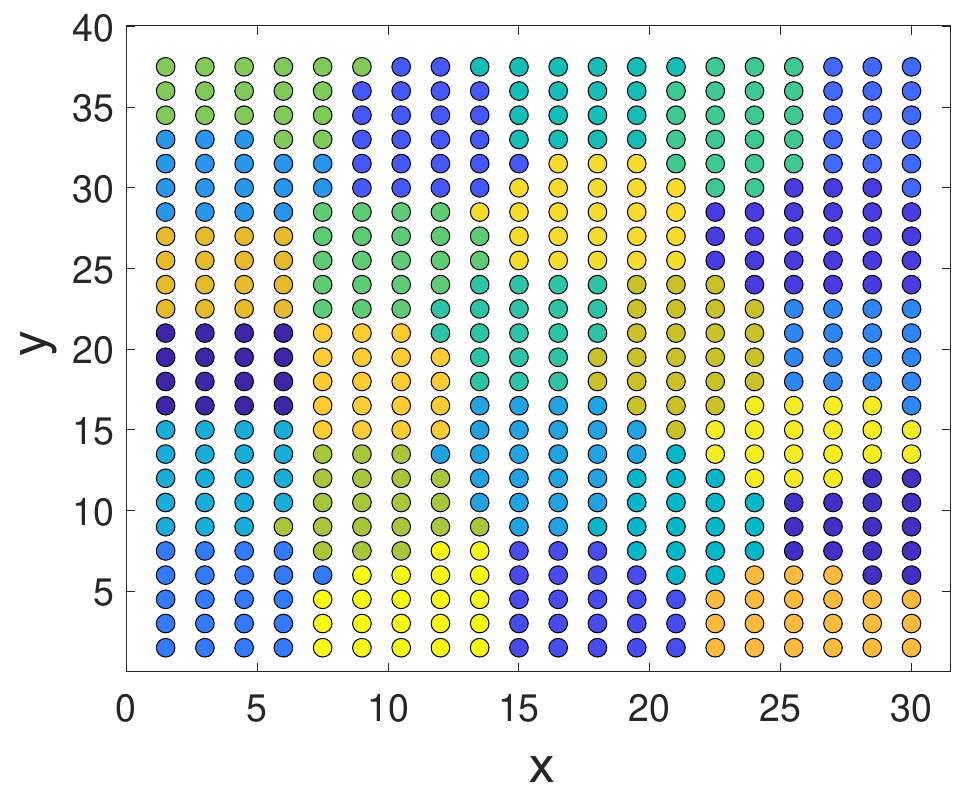}}
\vspace{-0.10in}
\end{minipage}
\hspace{0.00in}
\begin{minipage}{0.235\textwidth}
  \subfigure[\small k-m-adj]{
\includegraphics[width=1.00\textwidth, height = 0.14\textheight]{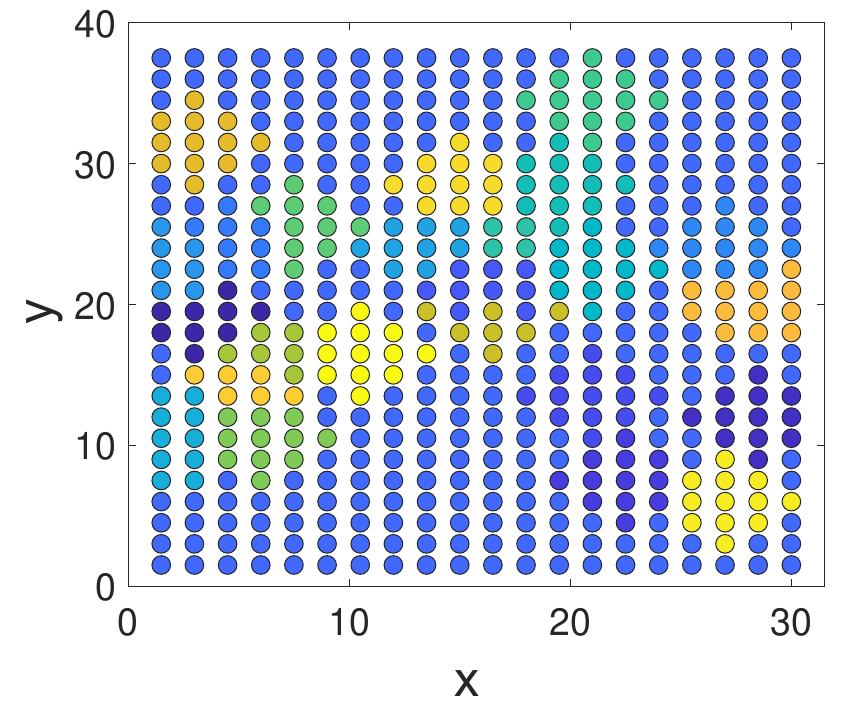}}
\vspace{-0.10in}
\end{minipage}
\hspace{0.00in}
\begin{minipage}{0.235\textwidth}
  \subfigure[\small BSC]{
\includegraphics[width=1.00\textwidth, height = 0.14\textheight]{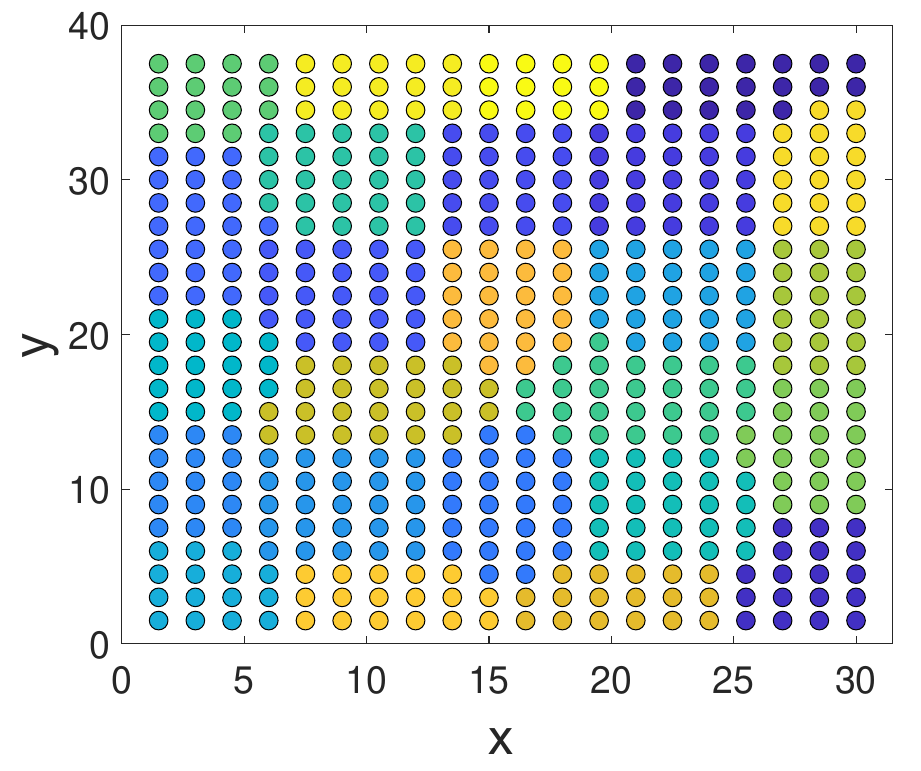}}
\vspace{-0.1in}
\end{minipage}
\caption{\normalsize Example: Dataset partitioning (Geo-location data in a grid map).}
\label{fig:cluster_grid}
\vspace{0.10in}
\begin{minipage}{0.235\textwidth}
  \subfigure[\small k-m-DV]{
\includegraphics[width=1.00\textwidth, height = 0.14\textheight]{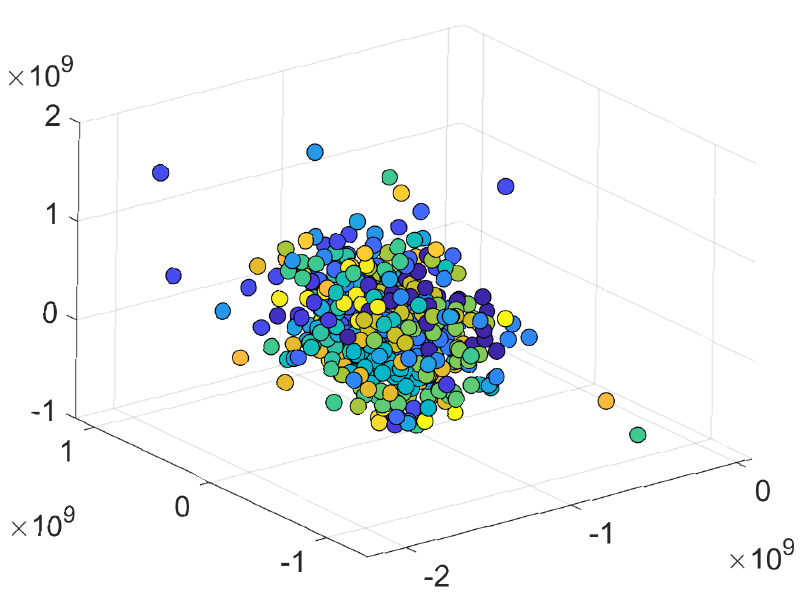}}
\vspace{-0.10in}
\end{minipage}
\hspace{0.00in}
\begin{minipage}{0.235\textwidth}
  \subfigure[\small k-m-rec]{
\includegraphics[width=1.00\textwidth, height = 0.14\textheight]{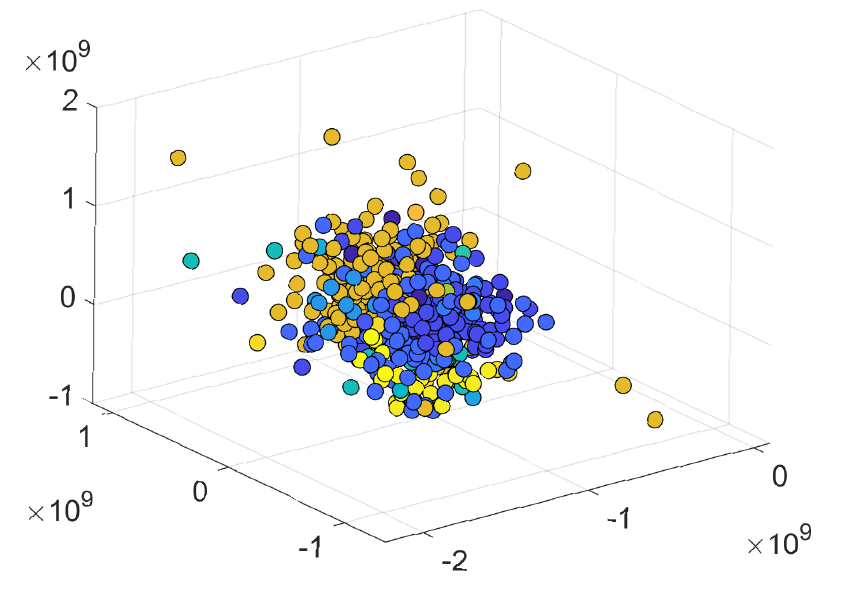}}
\vspace{-0.10in}
\end{minipage}
\hspace{0.00in}
\begin{minipage}{0.235\textwidth}
  \subfigure[\small k-m-adj]{
\includegraphics[width=1.00\textwidth, height = 0.14\textheight]{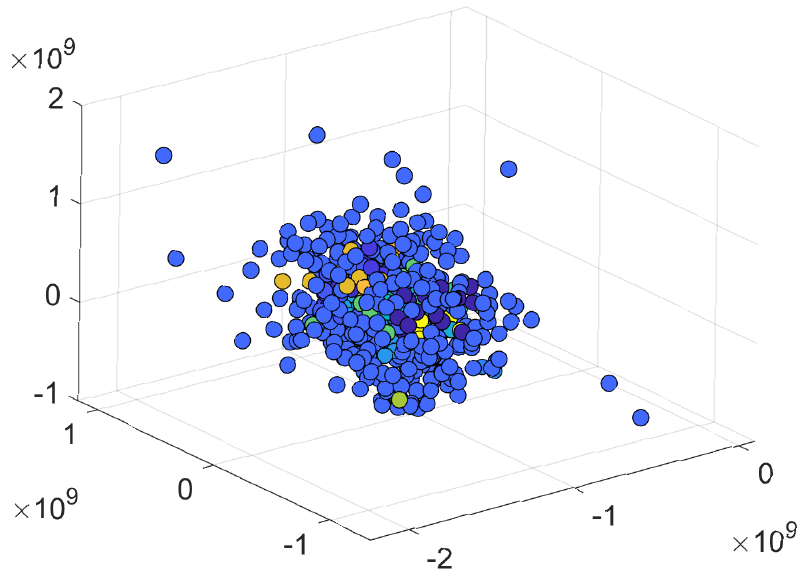}}
\vspace{-0.10in}
\end{minipage}
\hspace{0.00in}
\begin{minipage}{0.235\textwidth}
  \subfigure[\small BSC]{
\includegraphics[width=1.00\textwidth, height = 0.14\textheight]{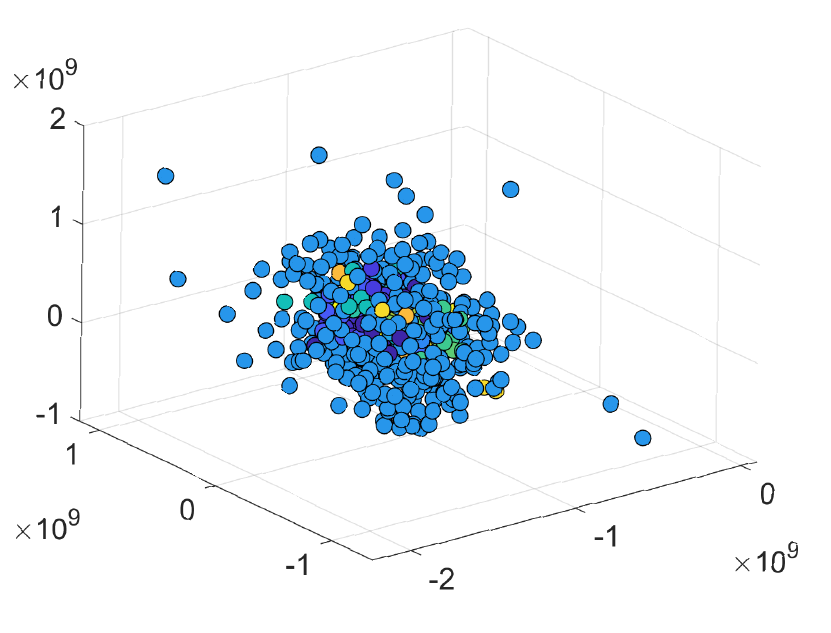}}
\vspace{-0.1in}
\end{minipage}
\caption{\normalsize Example: Dataset partitioning (Text dataset).}
\label{fig:cluster_rome_text}
\vspace{-0.00in}
\vspace{0.10in}
\begin{minipage}{0.235\textwidth}
  \subfigure[\small k-m-DV]{
\includegraphics[width=1.00\textwidth, height = 0.14\textheight]{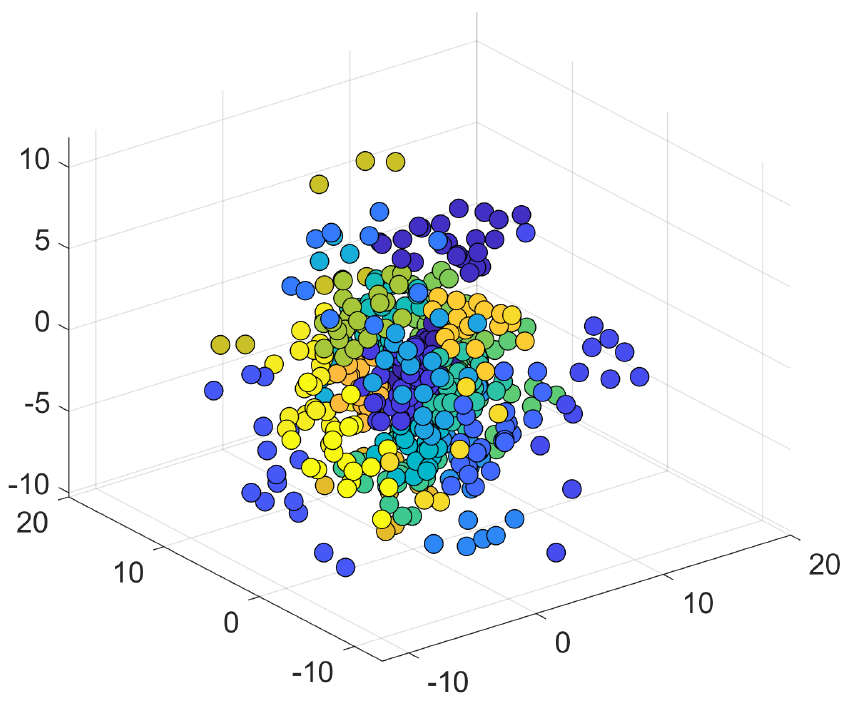}}
\vspace{-0.10in}
\end{minipage}
\hspace{0.00in}
\begin{minipage}{0.235\textwidth}
  \subfigure[\small k-m-rec]{
\includegraphics[width=1.00\textwidth, height = 0.14\textheight]{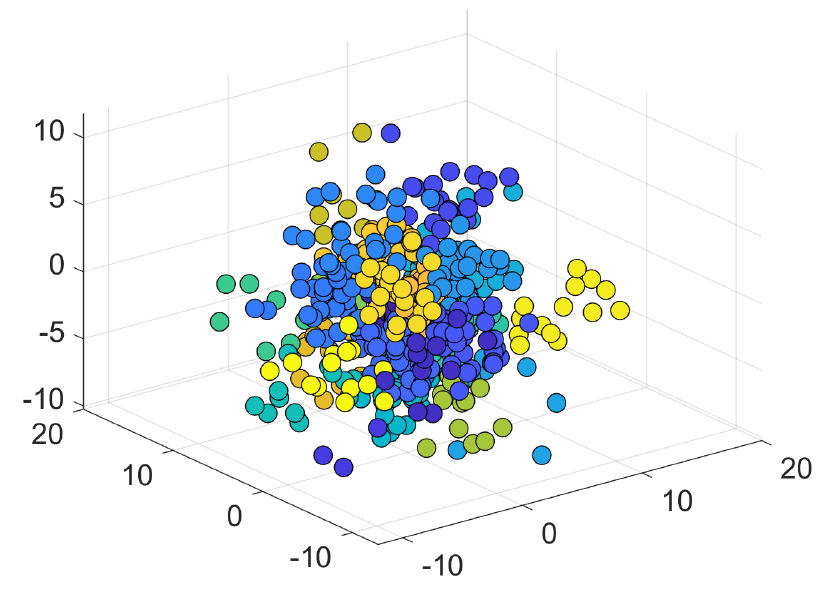}}
\vspace{-0.10in}
\end{minipage}
\hspace{0.00in}
\begin{minipage}{0.235\textwidth}
  \subfigure[\small k-m-adj]{
\includegraphics[width=1.00\textwidth, height = 0.14\textheight]{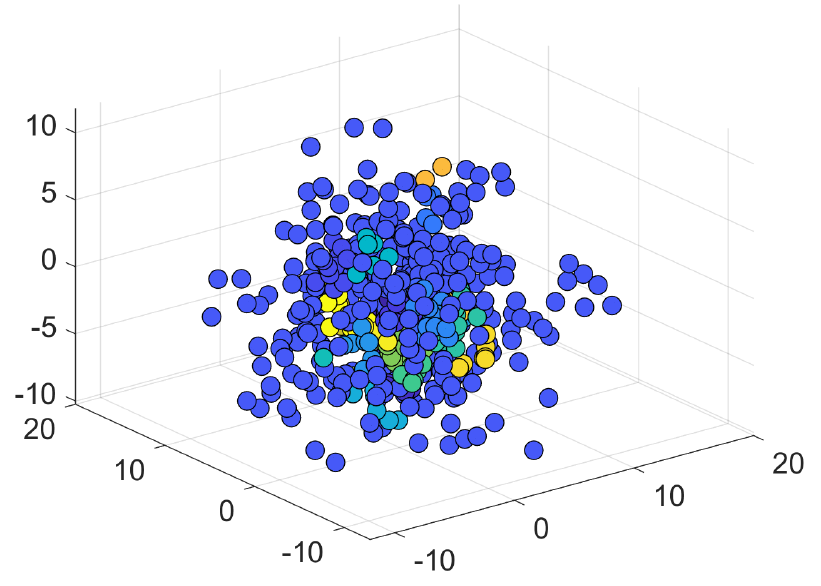}}
\vspace{-0.10in}
\end{minipage}
\hspace{0.00in}
\begin{minipage}{0.235\textwidth}
  \subfigure[\small BSC]{
\includegraphics[width=1.00\textwidth, height = 0.14\textheight]{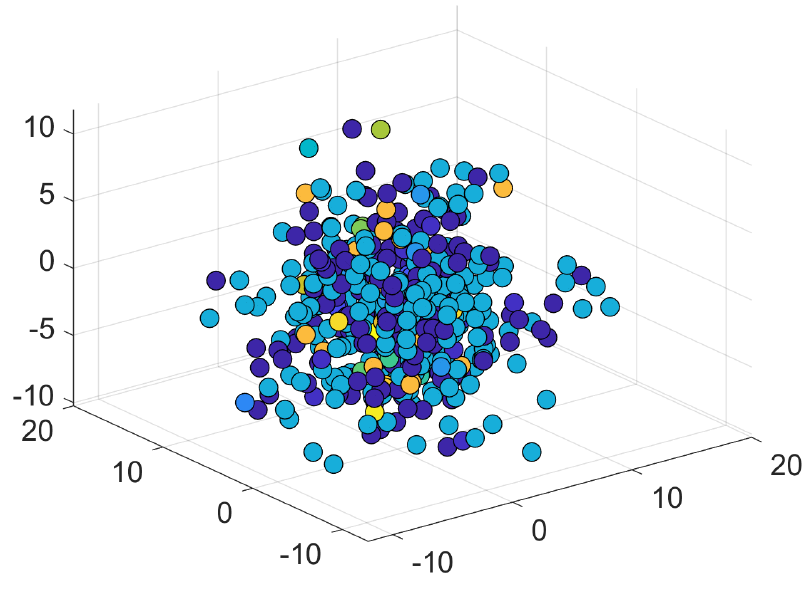}}
\vspace{-0.1in}
\end{minipage}
\caption{\normalsize Example: Dataset partitioning (Synthetic dataset).}
\label{fig:cluster_random}
\vspace{-0.00in}
\end{figure*}

\subsection{Examples of BD Convergence}
\label{subsec:BDconvergence}
Recall that Table \ref{Tb:exp:convergency} compares the number of iterations to converge to the optimal among the four dataset partitioning algorithms. In this part, we give examples to visually illustrate how the BD algorithm convergence over iterations when applying different dataset partitioning algorithms. Specifically, we test the convergence by comparing the upper bound and the lower bound in \textbf{Proposition \ref{prop:BDbound}}, as the optimal data utility loss must be within the upper and lower bounds. 
\begin{itemize}
\item Fig. \ref{fig:convergence_rome_downtown}(a)--(d) compares the convergence of BD when applying the four dataset partitioning methods in the \textbf{geo-location data (road network)}.
\item Fig. \ref{fig:convergence_grid}(a)--(d) compares the convergence of BD when applying the four dataset partitioning methods in the \textbf{geo-location data (grid map)}.
\item Fig. \ref{fig:convergence_text}(a)--(d) compares the convergence of BD when applying the four dataset partitioning methods in a \textbf{text dataset}.
\item Fig. \ref{fig:convergence_random}(a)--(d) compares the convergence of BD when applying the four dataset partitioning methods in a \textbf{synthetic dataset}.
\end{itemize}
When creating Figs. \ref{fig:convergence_rome_downtown} to \ref{fig:convergence_random}, we assigned an objective value of 9 to any subproblem that returned an unbounded solution, which helps visualize the number of unbounded solutions among the subproblems.

\begin{figure*}[h]
\centering
\hspace{0.00in}
\begin{minipage}{0.235\textwidth}
  \subfigure[\small k-m-DV]{
\includegraphics[width=1.00\textwidth, height = 0.13\textheight]{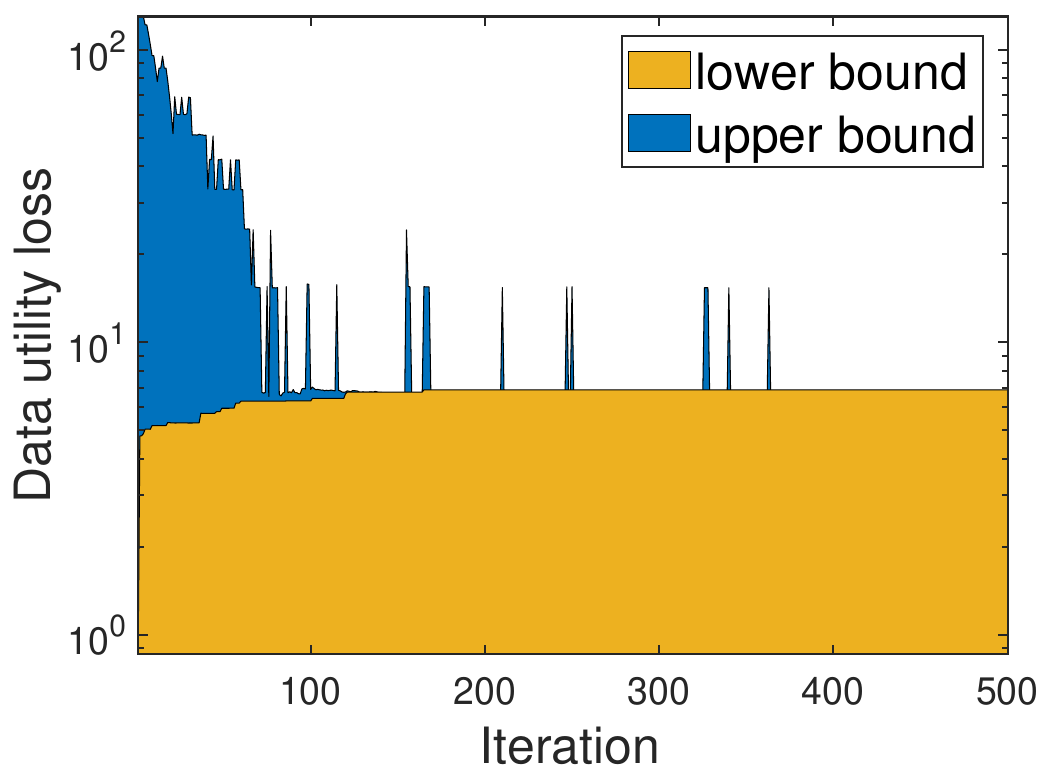}}
\vspace{-0.10in}
\end{minipage}
\hspace{0.00in}
\begin{minipage}{0.235\textwidth}
  \subfigure[\small
  k-m-rec]{
\includegraphics[width=1.00\textwidth, height = 0.13\textheight]{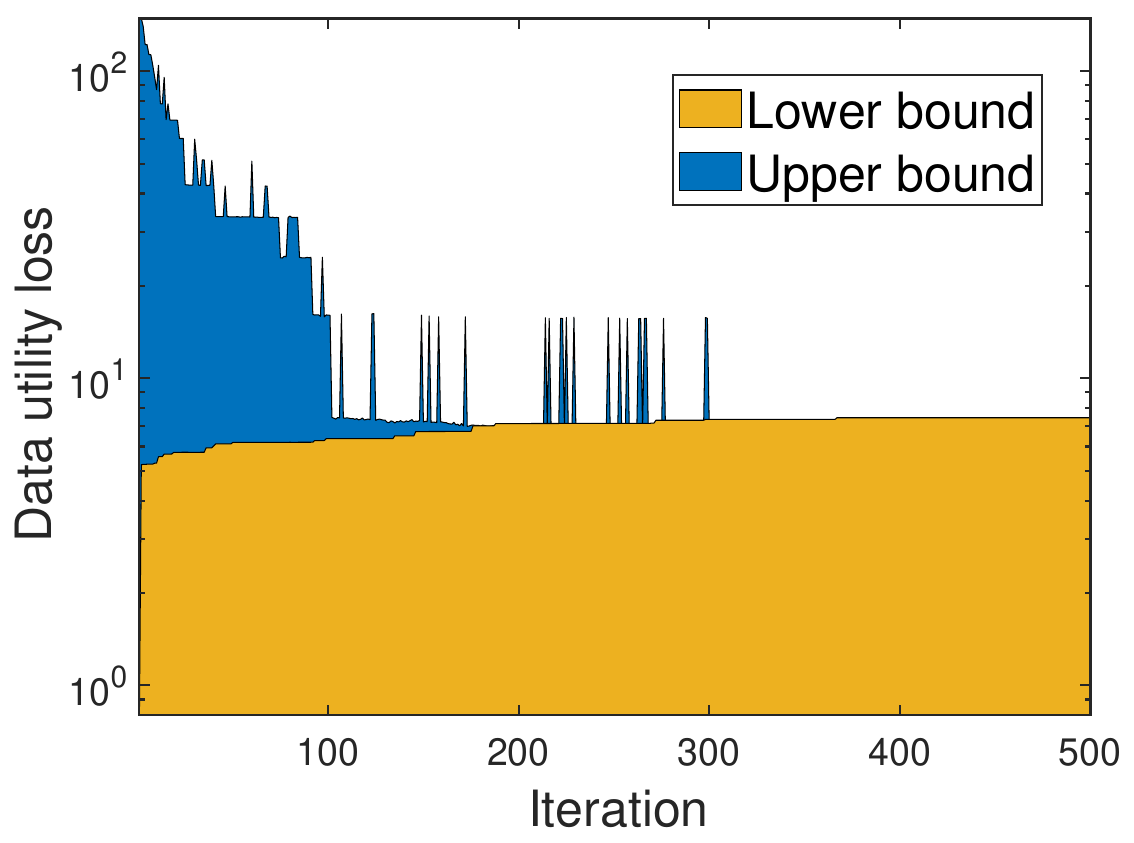}}
\vspace{-0.10in}
\end{minipage}
\hspace{0.00in}
\begin{minipage}{0.235\textwidth}
  \subfigure[\small k-m-adj]{
\includegraphics[width=1.00\textwidth, height = 0.13\textheight]{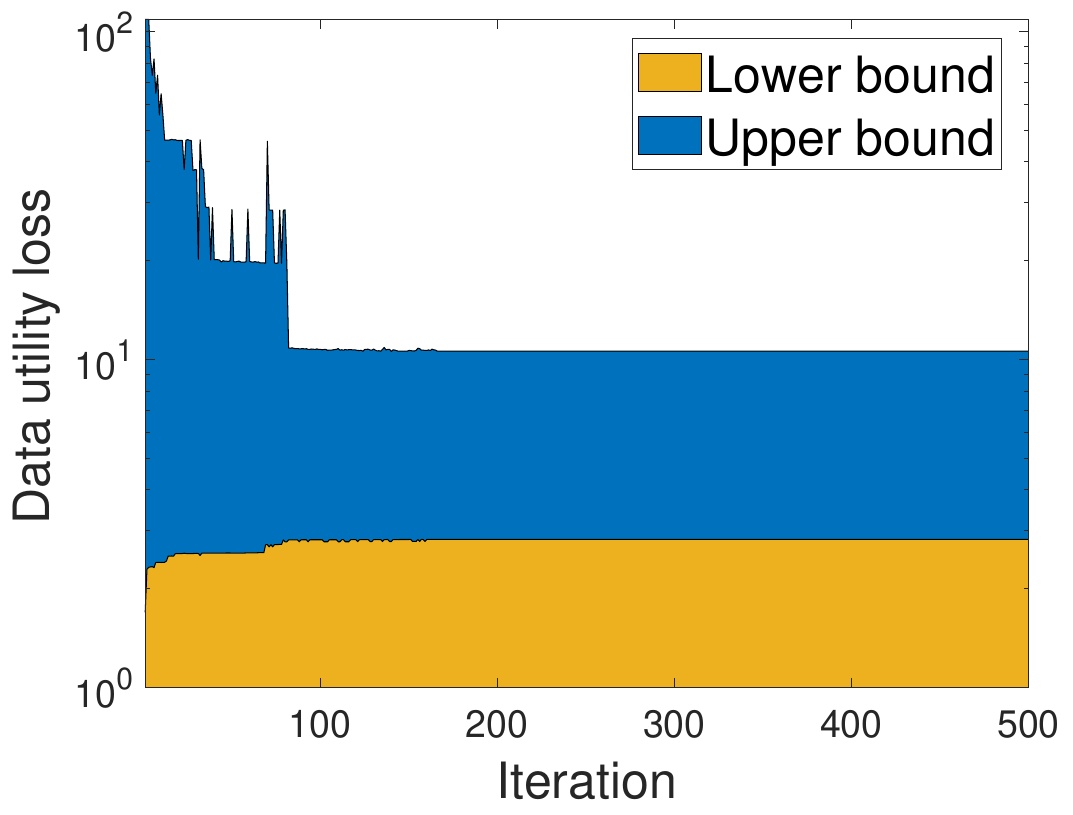}}
\vspace{-0.10in}
\end{minipage}
\hspace{0.00in}
\begin{minipage}{0.235\textwidth}
  \subfigure[\small BSC]{
\includegraphics[width=1.00\textwidth, height = 0.13\textheight]{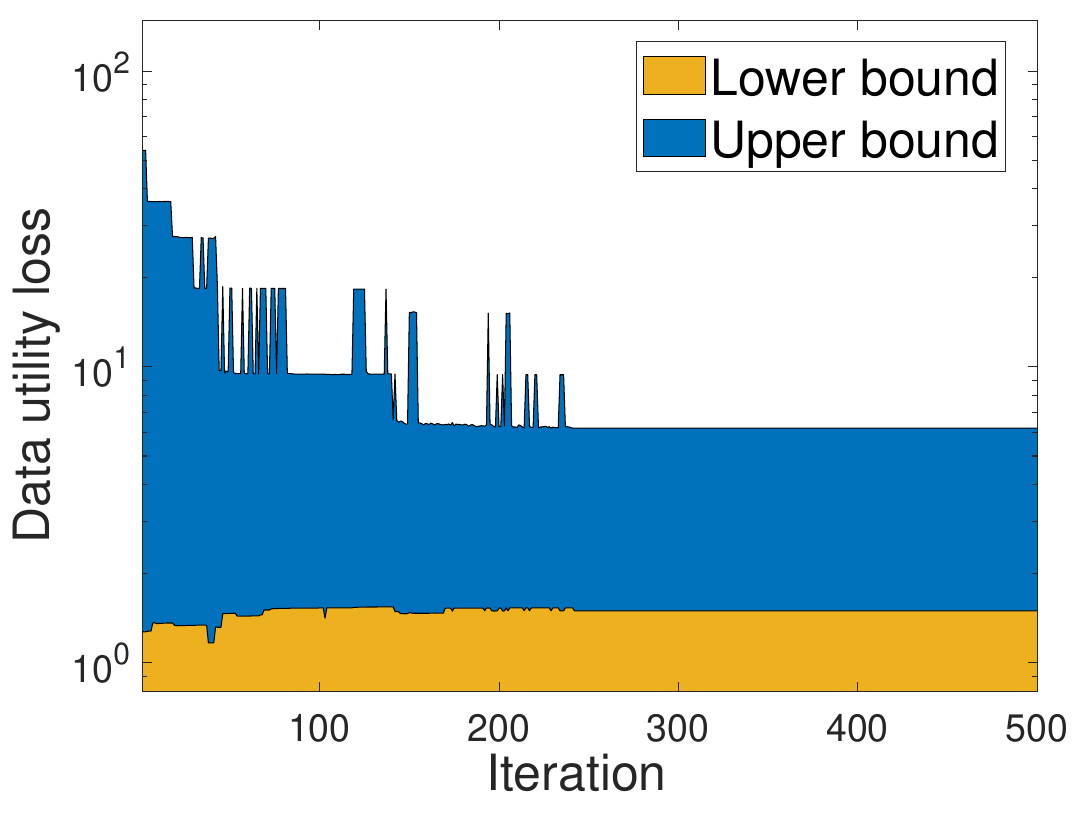}}
\vspace{-0.10in}
\end{minipage}
\caption{\normalsize Example: Convergence of Benders Decomposition (Geo-location data in the road network).}
\label{fig:convergence_rome_downtown}
\vspace{0.10in}
\centering
\hspace{0.00in}
\begin{minipage}{0.235\textwidth}
  \subfigure[\small k-m-DV]{
\includegraphics[width=1.00\textwidth, height = 0.13\textheight]{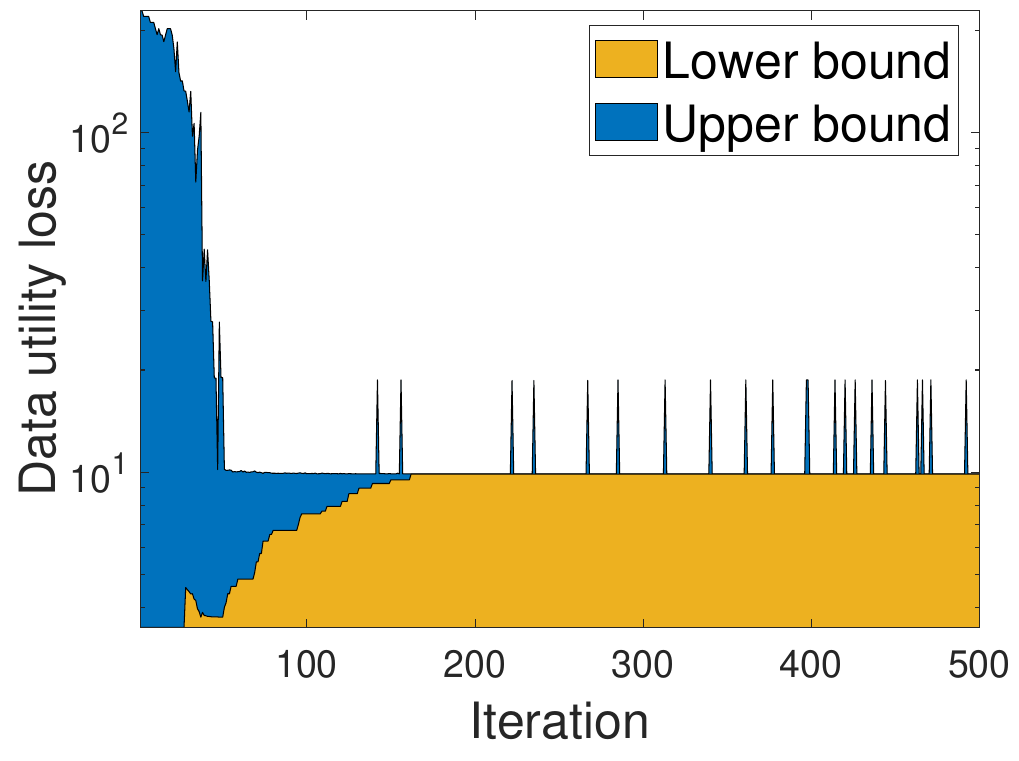}}
\vspace{-0.10in}
\end{minipage}
\hspace{0.00in}
\begin{minipage}{0.235\textwidth}
  \subfigure[\small
  k-m-rec]{
\includegraphics[width=1.00\textwidth, height = 0.13\textheight]{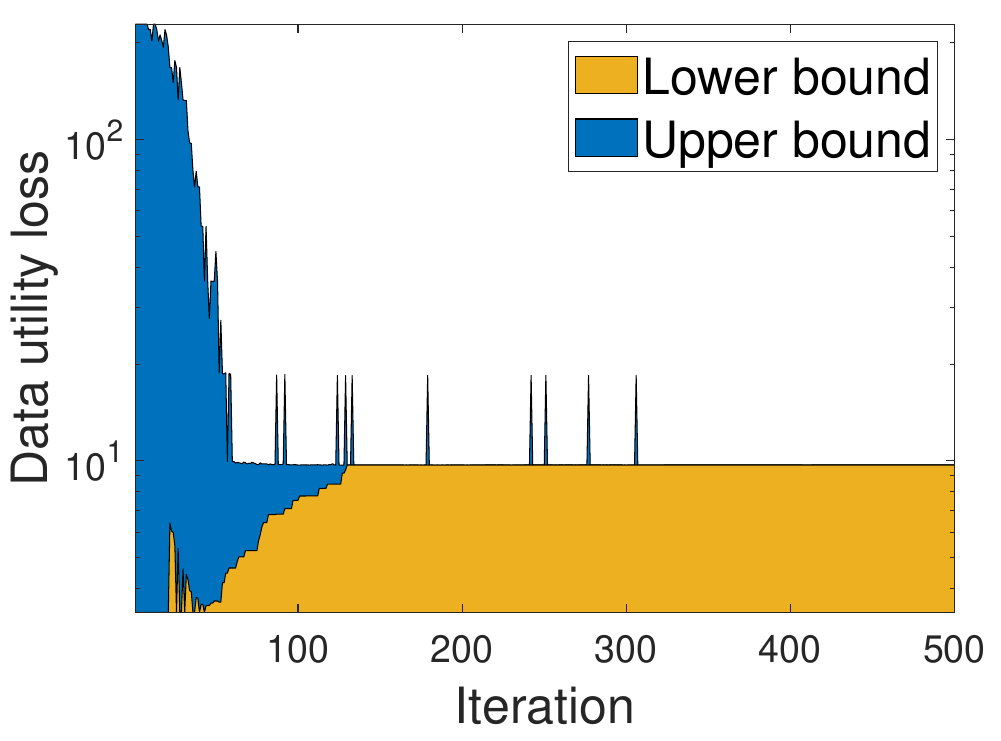}}
\vspace{-0.10in}
\end{minipage}
\hspace{0.00in}
\begin{minipage}{0.235\textwidth}
  \subfigure[\small k-m-adj]{
\includegraphics[width=1.00\textwidth, height = 0.13\textheight]{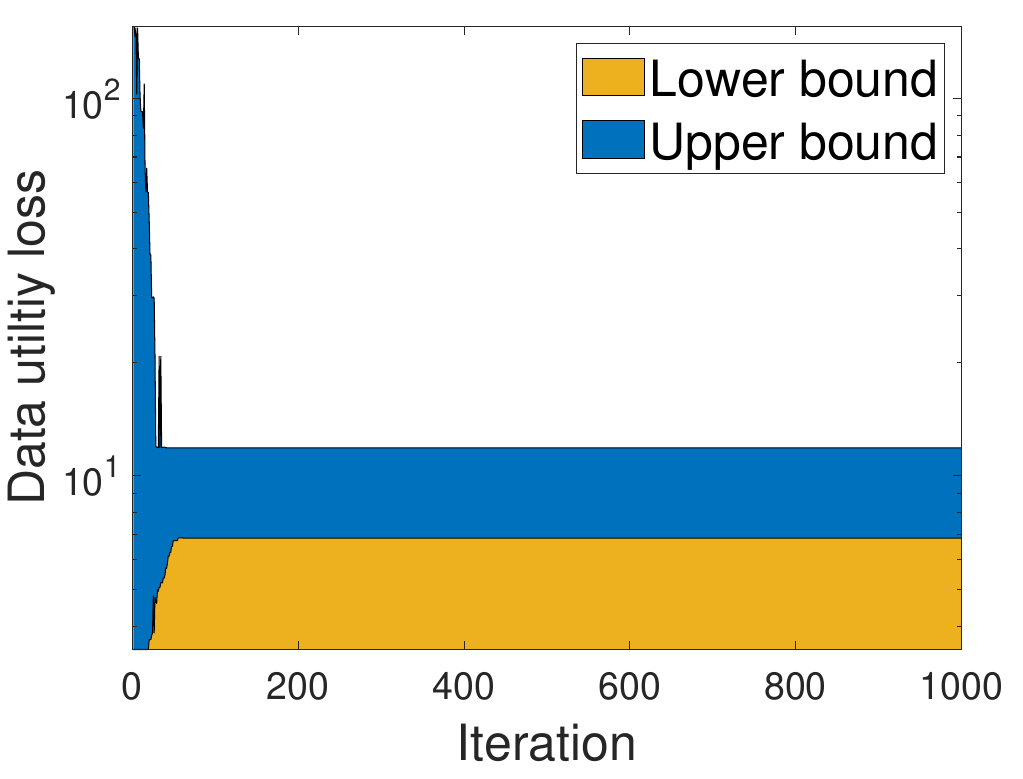}}
\vspace{-0.10in}
\end{minipage}
\hspace{0.00in}
\begin{minipage}{0.235\textwidth}
  \subfigure[\small BSC]{
\includegraphics[width=1.00\textwidth, height = 0.13\textheight]{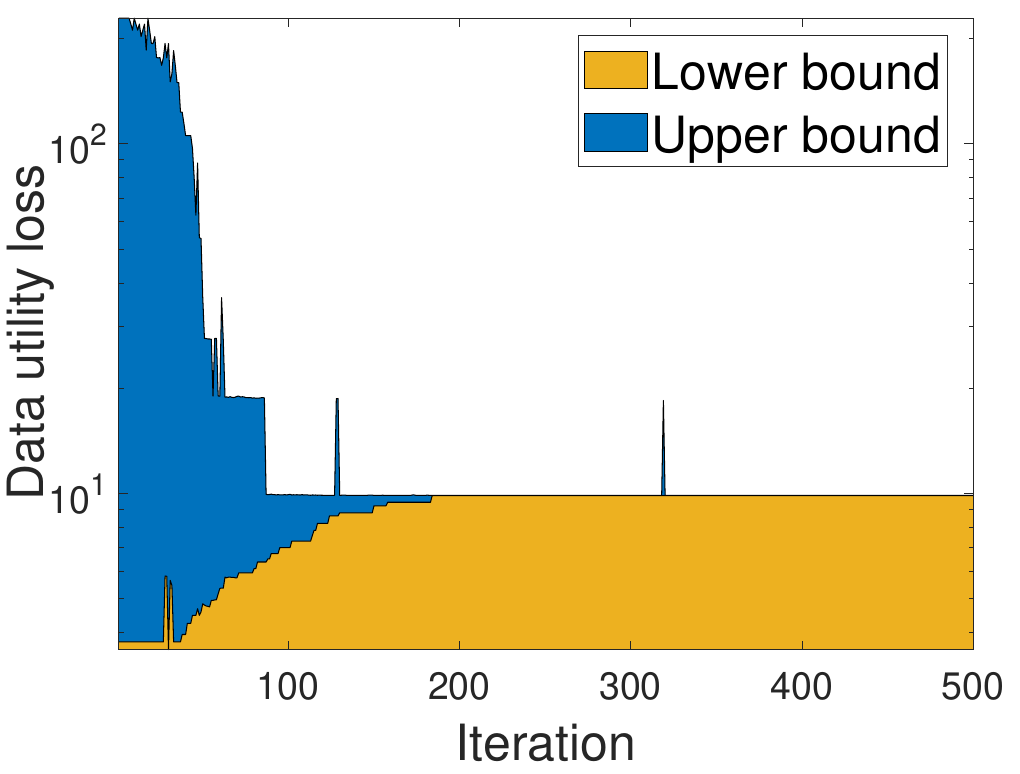}}
\vspace{-0.10in}
\end{minipage}
\caption{\normalsize Example: Convergence of Benders Decomposition (Geo-location data in a grid map).}
\label{fig:convergence_grid}
\vspace{0.10in}
\centering
\hspace{0.00in}
\begin{minipage}{0.235\textwidth}
  \subfigure[\small \gr k-m-DV]{
\includegraphics[width=1.00\textwidth, height = 0.13\textheight]{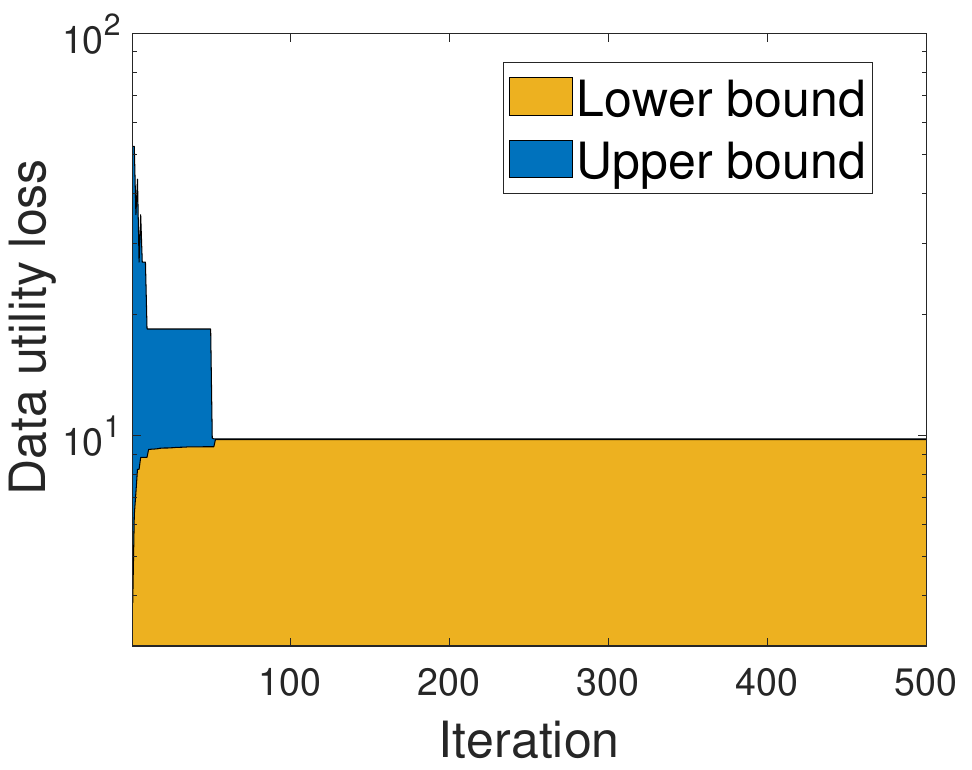}}
\vspace{-0.10in}
\end{minipage}
\hspace{0.00in}
\begin{minipage}{0.235\textwidth}
  \subfigure[\small
  k-m-rec]{
\includegraphics[width=1.00\textwidth, height = 0.13\textheight]{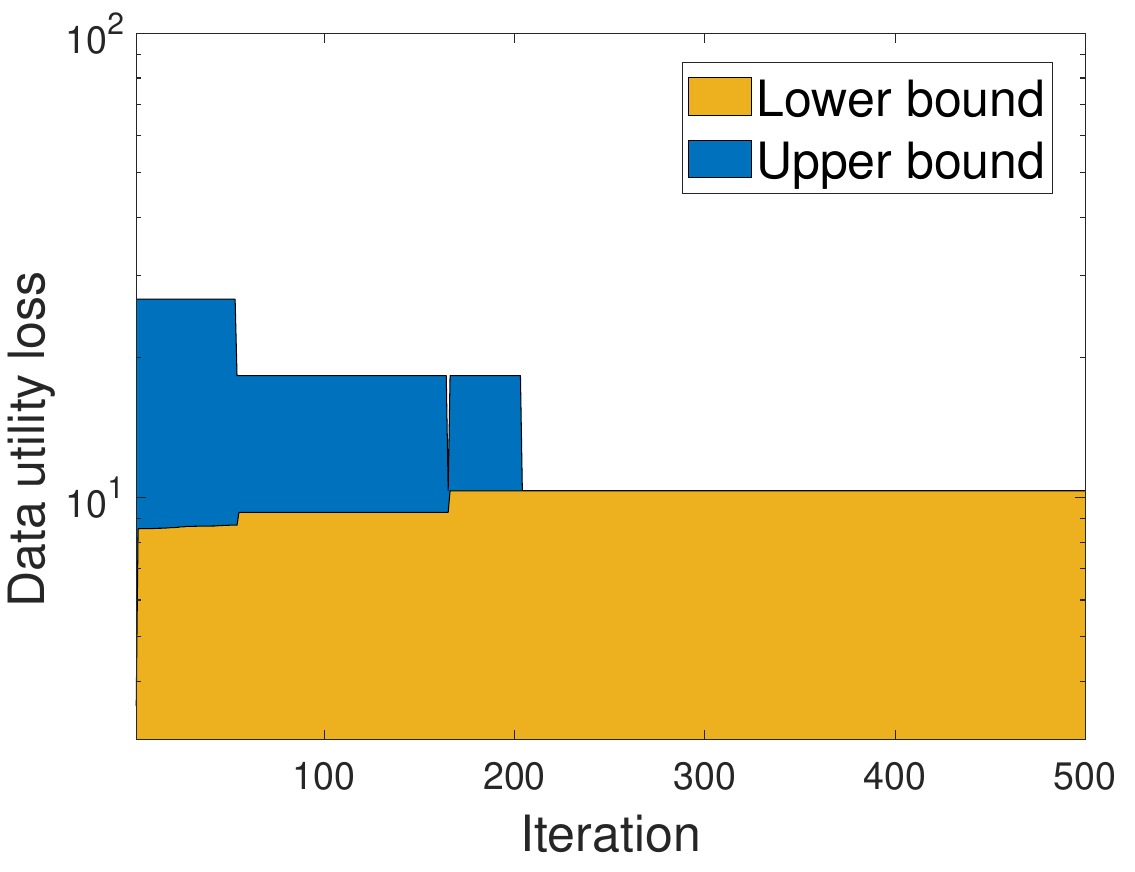}}
\vspace{-0.10in}
\end{minipage}
\hspace{0.00in}
\begin{minipage}{0.235\textwidth}
  \subfigure[\small k-m-adj]{
\includegraphics[width=1.00\textwidth, height = 0.13\textheight]{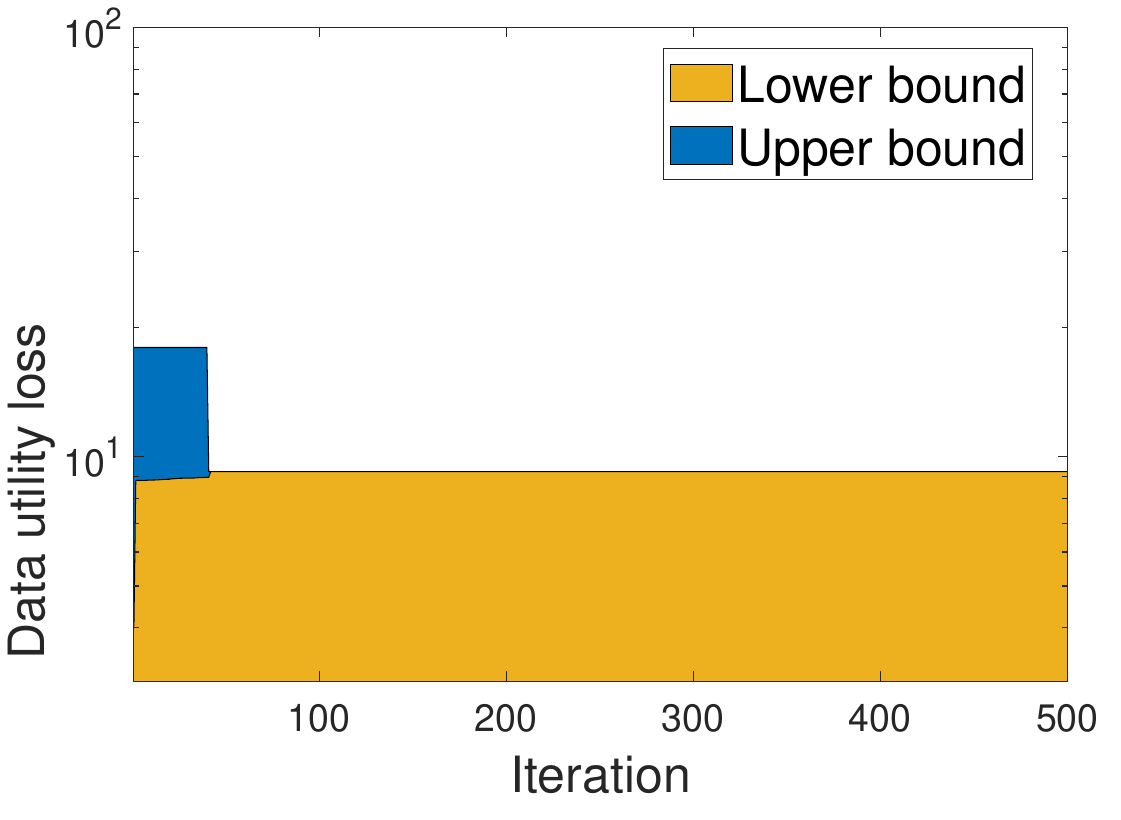}}
\vspace{-0.10in}
\end{minipage}
\hspace{0.00in}
\begin{minipage}{0.235\textwidth}
  \subfigure[\small BSC]{
\includegraphics[width=1.00\textwidth, height = 0.13\textheight]{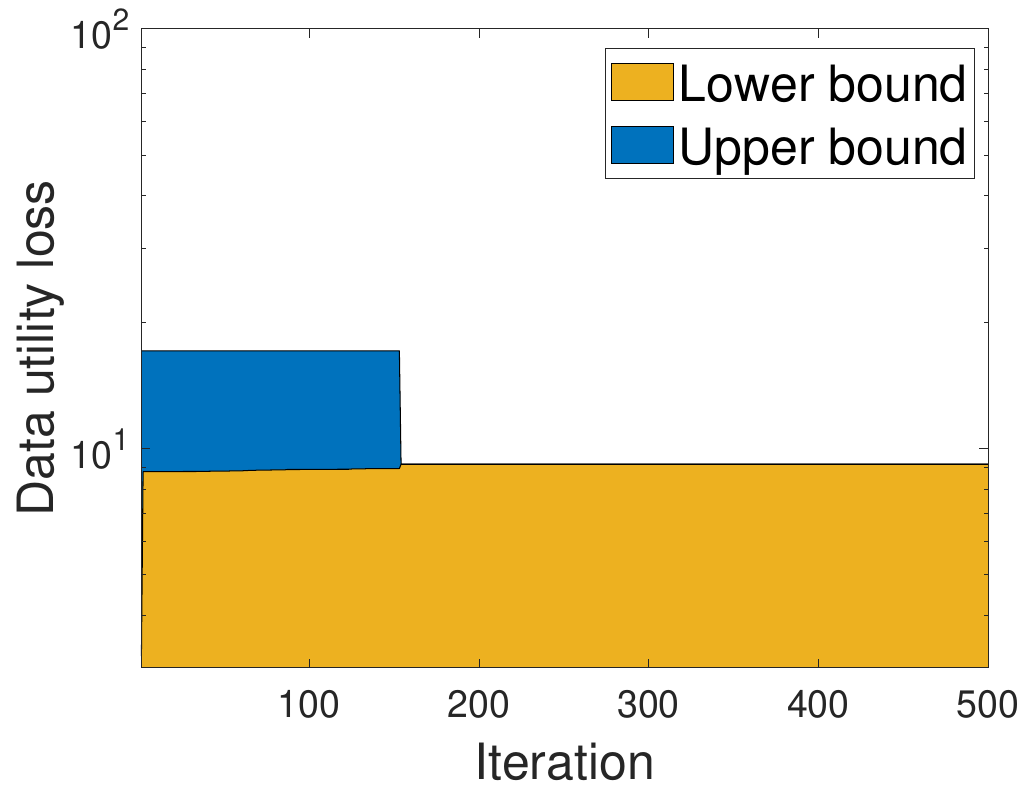}}
\vspace{-0.10in}
\end{minipage}
\caption{\normalsize Example: Convergence of Benders Decomposition (Text dataset).}
\label{fig:convergence_text}
\vspace{0.10in}

\centering
\hspace{0.00in}
\begin{minipage}{0.235\textwidth}
  \subfigure[\small k-m-DV]{
\includegraphics[width=1.00\textwidth, height = 0.13\textheight]{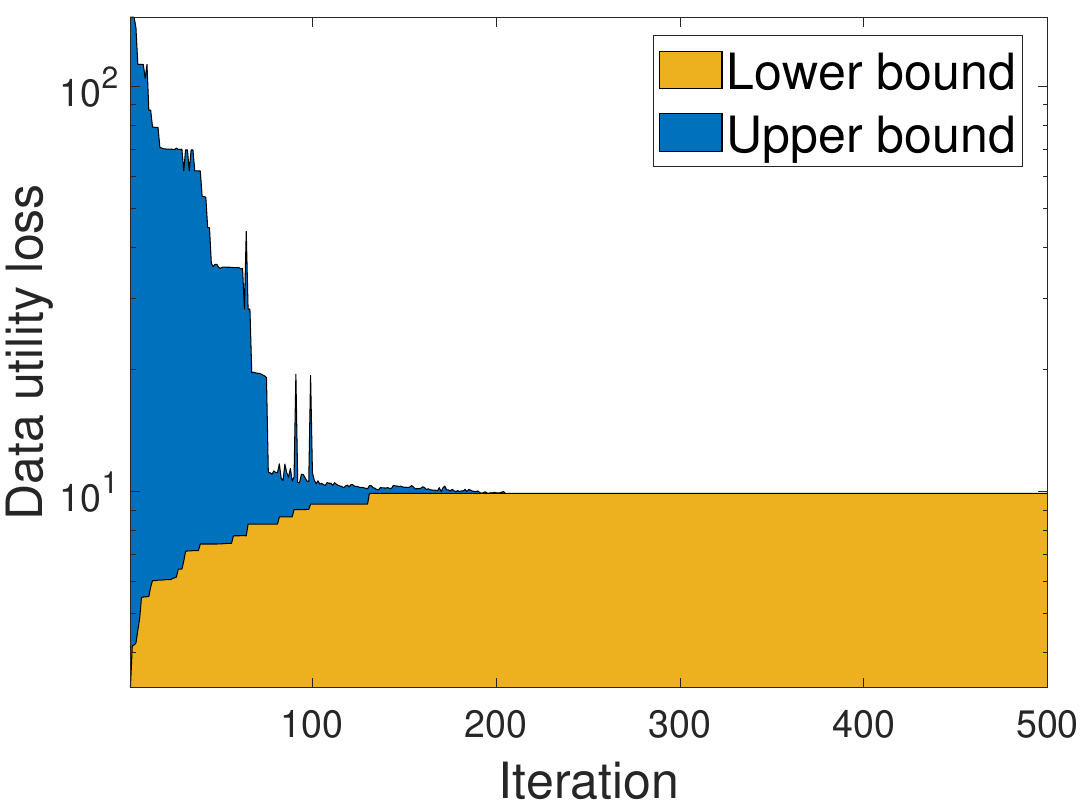}}
\vspace{-0.10in}
\end{minipage}
\hspace{0.00in}
\begin{minipage}{0.235\textwidth}
  \subfigure[\small
  k-m-rec]{
\includegraphics[width=1.00\textwidth, height = 0.13\textheight]{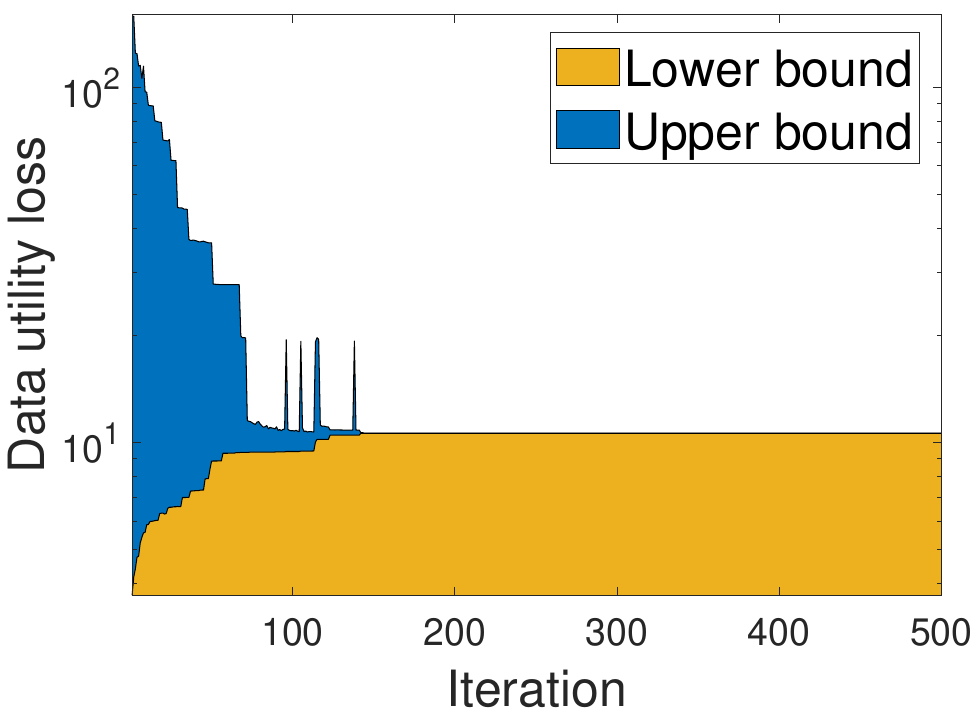}}
\vspace{-0.10in}
\end{minipage}
\hspace{0.00in}
\begin{minipage}{0.235\textwidth}
  \subfigure[\small k-m-adj]{
\includegraphics[width=1.00\textwidth, height = 0.13\textheight]{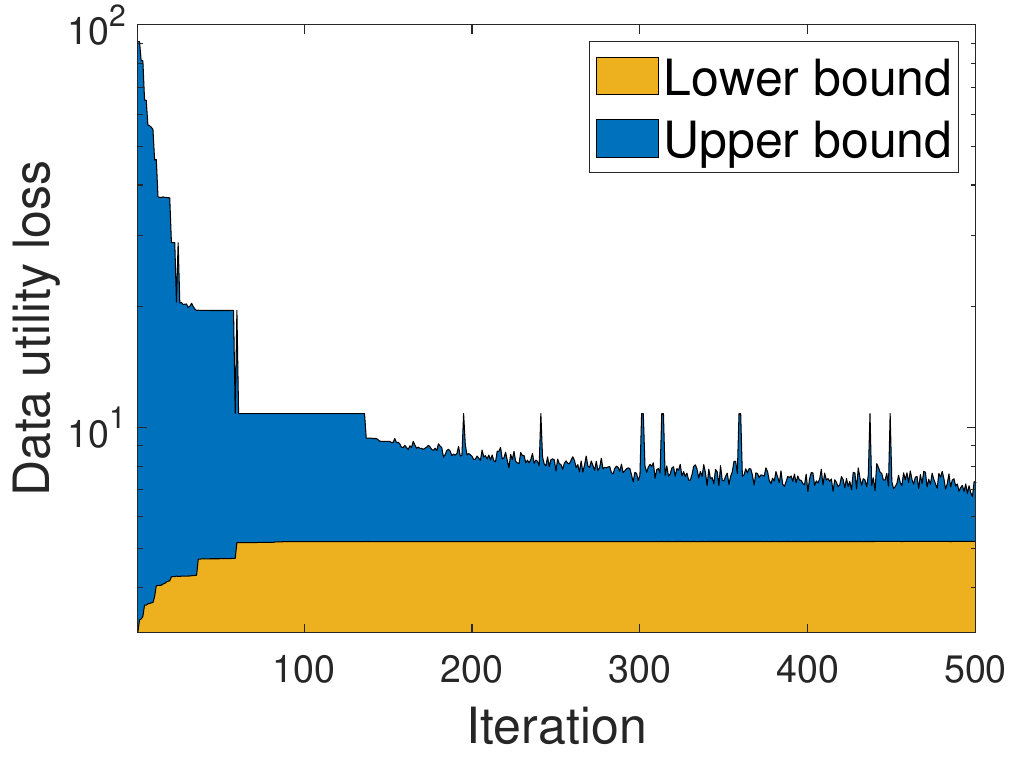}}
\vspace{-0.10in}
\end{minipage}
\hspace{0.00in}
\begin{minipage}{0.235\textwidth}
  \subfigure[\small BSC]{
\includegraphics[width=1.00\textwidth, height = 0.13\textheight]{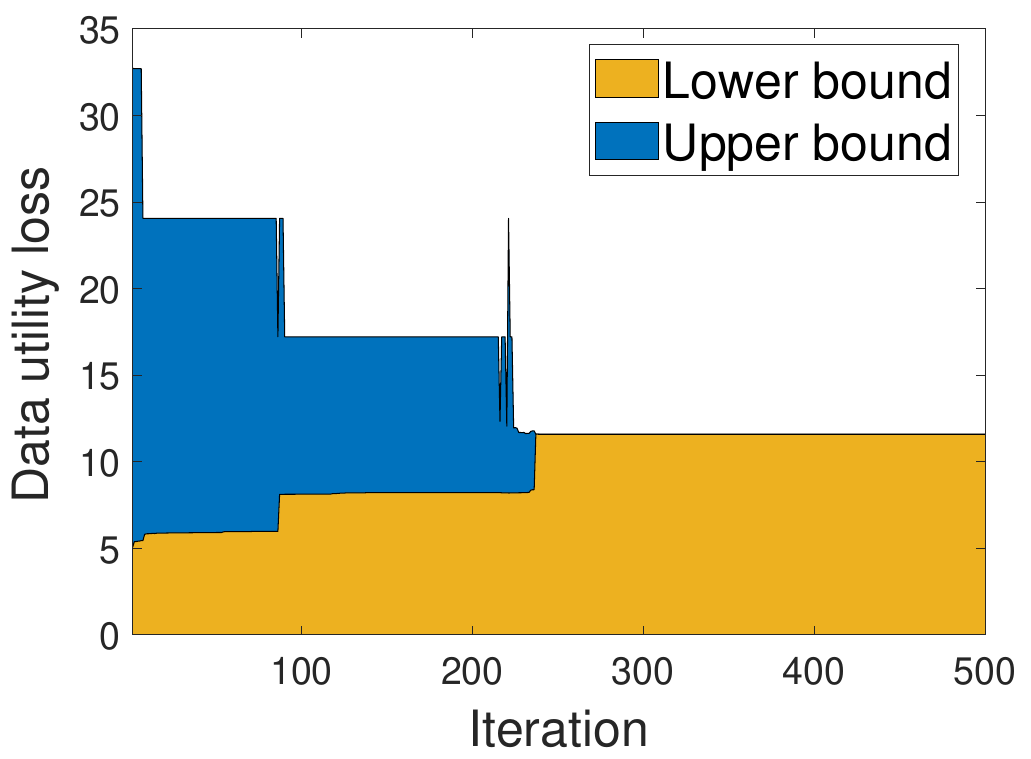}}
\vspace{-0.10in}
\end{minipage}
\caption{\normalsize Example: Convergence of Benders Decomposition (Synthetic dataset).}
\label{fig:convergence_random}
\end{figure*}

\end{document}